\documentclass{article}
\usepackage[utf8]{inputenc}
\pdfoutput=1
\usepackage[sc,osf]{mathpazo}
\usepackage[height=8.5in,width=6in,letterpaper]{geometry}
\usepackage[sort&compress,numbers]{natbib}
\usepackage[colorlinks=true,citecolor=blue,breaklinks]{hyperref}
\usepackage[hyphenbreaks]{breakurl} 
\usepackage[tracking]{microtype}

\linespread{1.025}              

\makeatletter
\newlength\aftertitskip     \newlength\beforetitskip
\newlength\interauthorskip  \newlength\aftermaketitskip

\setlength\aftertitskip{0.1in plus 0.2in minus 0.2in}
\setlength\beforetitskip{0.05in plus 0.08in minus 0.08in}
\setlength\interauthorskip{0.08in plus 0.1in minus 0.1in}
\setlength\aftermaketitskip{0.3in plus 0.1in minus 0.1in}

\def\maketitle{\par
 \begingroup
   \def\thefootnote{\fnsymbol{footnote}}
   \def\@makefnmark{\hbox to 4pt{$^{\@thefnmark}$\hss}}
   \@maketitle \@thanks
 \endgroup
\setcounter{footnote}{0}
 \let\maketitle\relax \let\@maketitle\relax
 \gdef\@thanks{}\gdef\@author{}\gdef\@title{}\let\thanks\relax}

\def\@startauthor{\noindent \normalsize\bf}
\def\@endauthor{}
\def\@starteditor{\noindent \small {\bf Editor:~}}
\def\@endeditor{\normalsize}
\def\@maketitle{\vbox{\hsize\textwidth
 \linewidth\hsize \vskip \beforetitskip
 {\begin{center} \LARGE\@title \par \end{center}} \vskip \aftertitskip
 {\def\and{\unskip\enspace{\rm and}\enspace}%
  \def\addr{\small\it}%
  \def\email{\hfill\small\tt}%
  \def\name{\normalsize\bf}%
  \def\AND{\@endauthor\rm\hss \vskip \interauthorskip \@startauthor}
  \@startauthor \@author \@endauthor}
}}

\makeatother

\pdfoutput=1                    

\usepackage{graphicx}
\usepackage{amsmath,amssymb,amsfonts,amsxtra,mathrsfs,bm}
\usepackage{amsthm}
\pdfoutput=1
\usepackage[table]{xcolor}
\usepackage{algorithm}
\usepackage{algpseudocode}
\usepackage{appendix}
\usepackage{subfigure}
\usepackage{multicol}
\usepackage{multirow}
\usepackage[neverdecrease]{paralist}
\usepackage{xspace}
\usepackage{caption}
\usepackage{graphicx}
\usepackage{enumerate}
\usepackage{units}
\usepackage{hyperref}       
\usepackage{nicefrac}

\definecolor{darkblue}{rgb}{0.0,0.0,0.85}
\hypersetup{
  colorlinks = true,
  citecolor  = darkblue,
  linkcolor  = darkblue,
  citecolor  = darkblue,
  filecolor  = darkblue,
  urlcolor   = darkblue,
}

\usepackage{pgfplots}

\newcommand{\reals}{\mathbb{R}}
\newcommand{\E}{\mathbb{E}}

\newcommand{\norm}[1]{\|{#1}\|}
\newcommand{\nlsum}{\sum\nolimits}

\newcommand{\ip}[2]{\langle {#1},\, {#2} \rangle}

\newcommand{\half}{\tfrac{1}{2}}
\newcommand{\nhalf}{\nicefrac{1}{2}}

\newcommand{\set}[1]{\{ #1\}}
\newcommand{\pp}{\mathbb{P}}

\newcommand{\Mc}{\mathcal{M}}
\newcommand{\Nc}{\mathcal{N}}
\newcommand{\Ac}{\mathcal{A}}
\newcommand{\Lc}{\mathcal{L}}
\newcommand{\Oc}{\mathcal{O}}
\newcommand{\mr}{\bm{R}}
\newcommand{\ms}{\bm{S}}
\newcommand{\mU}{\bm{U}}
\newcommand{\vx}{\bm{x}}

\newcommand{\vy}{\bm{y}}
\newcommand{\vt}{\bm{t}}

\newcommand{\vmu}{\bm{\mu}}
\newcommand{\valpha}{\bm{\alpha}} 
\newcommand{\vlambda}{\bm{\lambda}} 
\newcommand{\msigma}{\bm{\Sigma}} 
\newcommand{\mlambda}{\bm{\Lambda}} 
\newcommand{\mpsi}{\bm{\Psi}} 

\newcommand{\gmm}{\textsc{Gmm}\xspace}
\newcommand{\EM}{\textsc{EM}\xspace}

\DeclareMathOperator{\Exp}{Exp}
\DeclareMathOperator{\retr}{Ret}

\DeclareMathOperator{\trace}{tr}

\theoremstyle{plain}
\newtheorem{theorem}{Theorem}
\newtheorem{corollary}[theorem]{Corollary}
\newtheorem{proposition}[theorem]{Proposition}
\newtheorem{lemma}[theorem]{Lemma}

\theoremstyle{definition}

\theoremstyle{remark}

\numberwithin{equation}{section}

\graphicspath{{./fig/}}

\def\tehr{School of ECE, College of Engineering, University of Tehran, Tehran, Iran.}
\def\mitu{Massachusetts Institute of Technology, Cambridge, MA, USA.}

\begin{document}

\title{An Alternative to EM for Gaussian Mixture Models: Batch and Stochastic Riemannian Optimization}

\author{\name Reshad Hosseini \email{reshad.hosseini@ut.ac.ir}\\
  \addr{\tehr}\\
  \name Suvrit Sra\thanks{S. Sra acknowledges partial support from NSF-IIS-1409802} \email{suvrit@mit.edu}\\
  \addr{\mitu}
}

%


\maketitle

\begin{abstract}
  We consider maximum likelihood estimation for Gaussian Mixture Models (\gmm{}s). This task is almost invariably solved (in theory and practice) via the Expectation Maximization (EM) algorithm. EM owes its success to various factors, of which is its ability to fulfill positive definiteness constraints in closed form is of key importance. We  propose an alternative to EM by appealing to the rich Riemannian geometry of positive definite matrices, using which we cast \gmm{} parameter estimation as a Riemannian optimization problem. Surprisingly, such an out-of-the-box Riemannian formulation completely fails and proves much inferior to EM. This motivates us to take a closer look at the problem geometry, and derive a better formulation that is much more amenable to Riemannian optimization. We then develop (Riemannian) batch and stochastic gradient algorithms that outperform EM, often substantially. We provide a non-asymptotic convergence analysis for our stochastic method, which is also the first (to our knowledge) such global analysis for Riemannian stochastic gradient. Numerous empirical results are included to demonstrate the effectiveness of our methods.
\end{abstract}

\section{Introduction}
Gaussian Mixture Models are extensively used across many tasks in machine learning, signal processing, and other areas~\citep{dudahart,keener,bishop,murphy12,McLPee00,reynolds2000speaker,friedman2001elements}. For a vector $\vx \in \reals^d$, the density of a Gaussian Mixture Model (\gmm{}) is given by
\begin{equation}
  \label{eq:1}
  p(\vx) := \nlsum_{j=1}^K\alpha_j p_{\Nc}(\vx; \vmu_j, \msigma_j),
\end{equation}
where $p_{\Nc}$ is a Gaussian with mean $\vmu \in \reals^d$ and covariance $\msigma \succ 0$, i.e.,
\begin{equation*}
  p_{\mathcal{N}}(\vx;\vmu,\msigma) := \det(\msigma)^{-1/2}(2\pi)^{-d/2} \exp \bigl( -\tfrac12(\vx-\vmu)^T \msigma^{-1} (\vx-\vmu) \bigr).
\end{equation*}
Given i.i.d.\ samples $\set{\vx_1,\ldots,\vx_n}$ drawn from~\eqref{eq:1}, we seek maximum likelihood estimates $\set{\hat{\vmu}_j \in \reals^d, \hat{\msigma}_j \succ 0}_{j=1}^K$ and $\hat{\valpha} \in \Delta_K$ of the parameters of the \gmm. This estimation is cast as the following log-likelihood maximization problem:
\begin{equation}
  \label{eq:2}
  \max_{\valpha \in \Delta_K,\set{\vmu_j,\msigma_j \succ 0}_{j=1}^K}\quad
  \sum_{i=1}^n\log\Bigl(\nlsum_{j=1}^K\alpha_j p_{\Nc}(\vx_i; \vmu_j,\msigma_j)\Bigr).
\end{equation}

A quick literature search reveals that~\eqref{eq:2} is most frequently solved via the Expectation Maximization (\EM) algorithm~\citep{dempster77} or its variants. Although other optimization methods have also been considered~\citep{redWal84},  for solving practical instances of~\eqref{eq:2} usual methods such as conjugate gradients, quasi-Newton, Newton, are typically regarded as inferior to \EM~\citep{jordan96}.

\paragraph{Difficulties and Motivation.} The primary reason why standard nonlinear methods have difficulties in solving~\eqref{eq:2} is the positive definiteness constraint on the covariance matrices. Since this constraint defines an open subset of Euclidean space, in principle, if the iterates remain in the interior, standard unconstrained Euclidean optimization methods could be used. The iterates may, however, approach the boundary of the constraint set, especially in higher dimensions, which can lead to very slow convergence. One approach is to formulate the positive definite constraint via a set of smooth convex inequalities~\citep{vanderbei2000formulating} and use interior-point methods. It was observed in~\citep{sra2013geometric} that using such sophisticated methods can be vastly slower (on some closely related statistical problems) than simpler \EM-like fixed-point methods, especially with growing problem dimensionality.

Another ``natural'' approach to handle the positive definite constraint is to use the Cholesky decomposition, as was exploited for semidefinite programming in~\citep{burer1999solving}, and more recently in~\citep{bhoj16}. In general, this decomposition can add spurious local maxima and stationary points to the objective function of general optimization problems, even for semidefinite programs~\citep{vanderbei2000formulating}. Remarkably, it can be shown that such a decomposition does not add spurious local maxima to~\eqref{eq:2}. Nevertheless, we observed (empirically) that the convergence speed of standard nonlinear solvers for estimating parameters of~\eqref{eq:2} using Cholesky decomposition is considerably slower than \EM.

Motivated by the success of non-Euclidean optimization for some problems with positive definite variables~\citep{sra2013geometric,sra15}, we consider an alternative approach to \EM. In particular, we solve~\eqref{eq:2} via \emph{Riemannian optimization}. Surprisingly, a na\"ive use of Riemannian methods completely fails to compete with \EM, while their use on a careful  reformulation\footnote{A preliminary  version of this work appeared at the \emph{Advances in Neural Information Processing Systems (NIPS 2015)}, wherein this reformulation was originally introduced.} of~\eqref{eq:2} demonstrably succeeds.

We describe this reformulation in Section~\ref{sec:prob}, and remark here informally on why a na\"ive use of manifold optimization fails: The negative log-likelihood for a single Gaussian is Euclidean convex (the key property that makes the ``M-step'' of \EM easy), but not geodesically convex. Reformulating the problem to remove this geometric mismatch might therefore be fruitful, i.e., if we reformulate the single Gaussian likelihood to be geodesically convex, manifold optimization may benefit. This intuition turns out to have remarkable empirical consequences as will become apparent from the paper. 

\vskip8pt
\noindent\textbf{Contributions.} The present paper goes substantially beyond our preliminary work~\citep{hosseini2015matrix} in several important aspects. Let us therefore outline our main contributions below. 
\begin{list}{\footnotesize$\blacktriangleright$}{\leftmargin=2em}
\setlength{\itemsep}{-1pt}
\item We develop reformulations not only for \gmm{}s, but also for richer likelihood models that incorporate conjugate priors. 
\item We present both batch and stochastic optimization algorithms; the latter greatly enhances the scalability of our methods. Moreover, our methods permit the use of retractions (beyond the usual exponential map) and vector transport, which enables further scalability.
\item We provide an iteration complexity analysis of stochastic gradient on manifolds, obtaining a $O(1/\sqrt{T})$ bound. To our knowledge, this is the first \emph{non-asymptotic} convergence analysis for stochastic gradient on manifolds. Subsequently, we present analysis that outlines why Riemannian SGD applies to penalized \gmm-likelihood maximization.
\end{list}

We provide experimental evidence on several real-data comparing manifold optimization to \EM. As may be gleaned from our results, manifold optimization performs well across a wide range of parameter values and problem sizes, while being much less sensitive to overlapping data than \EM, and while displaying less variability in running times. 

We review key concepts of first-order deterministic manifold optimization. We also include the design and specific implementation choices of our line-search procedure. These choices ensure convergence, and are instrumental to making our Riemannian-LBFGS solver outperform both \EM and Riemannian conjugate gradients. This solver should be of independent interest too.

We will also release a \textsc{Matlab} implementation of the methods developed in this paper. The manifold CG method that we use is directly based on the excellent toolkit \textsc{ManOpt}~\citep{boumal2014manopt}.

\subsection{Related work} \EM is such a widely studied method, that we have no hope of summarizing all the related work, even if we restrict to just \gmm{}s. Let us instead mention a few lines of related work. \citet{jordan96} examine several aspects of \EM for \gmm{}s and counter the claims of~\citet{redWal84}, who thought \EM to be inferior to general purpose nonlinear programming methods, especially second-order methods. However, it is well-known (see e.g.,~\citep{jordan96,redWal84}) that \EM can attain good likelihood values rapidly, and that it scales to larger problems than amenable to second-order methods. Local convergence analysis of \EM is available in~\citep{jordan96}, with more refined and precise results in~\citep{ma2000asymptotic}, who formally show that when data have low overlap, \EM can converge locally superlinearly. Our paper uses manifold LBFGS, which being a quasi-Newton method can also display local superlinear convergence, though this capability is not the focus of our paper.

Parameter fitting using gradient-based methods has also been suggested~\citep{naim2012convergence,salakhutdinov2003optimization}. Here, to satisfy positive definiteness, the authors suggest using Cholesky decompositions. These works report results only for low-dimensional problems and spherical (near spherical) covariance matrices. 

Beyond \EM, there is also substantial work on theoretical analysis of \gmm{}s \citep{dasgupta1999learning,moitra2010,kakade15,balakrishnan2014statistical}. These studies are theoretically valuable (though sometimes limited to either low-dimensional, or small number of mixture components, or spherical Gaussians, etc.), but orthogonal to our work which focuses on practical numerical algorithms for general \gmm{}s.

The use of Riemannian optimization for \gmm is relatively new, even though manifold optimization is by now a fairly well-developed branch of optimization. A classic reference is~\citep{udriste}; a more recent work is~\citep{absil2009optimization}; and even a \textsc{Matlab} toolbox exists now~\citep{boumal2014manopt}. In machine learning, manifold optimization has witnessed increasing interest\footnote{Not to be confused with ``manifold learning'' a separate problem altogether.}, e.g., for low-rank optimization~\citep{vandereycken2013low,journee2010low}, optimization based on geodesic convexity~\citep{sra2013geometric,wiesel12}, or for neural network training~\citep{wisdom2016full}.

\section{Background on manifold optimization}
Manifolds are spaces that locally resemble a Euclidean space, and smooth manifolds have smooth transitions between locally Euclidean-like subsets~\citep{lee12}. The tangent space $T_x$ is an approximating vector space at each point $x$ of the manifold $\Mc$. The tangent bundle of a smooth manifold $\Mc$ is a manifold $T \Mc$, which assembles all the tangents in that manifold, $T \Mc = \bigsqcup_{x \in \Mc} T_x = \{ (x,y)|x\in \Mc, y \in T_x\}$. If a smooth manifold is equipped with a smoothly-varying inner product on each of its tangent spaces, it is called \emph{Riemannian manifold}. 

This additional structure of a Riemannian manifold proves very useful in developing optimization techniques specific to  manifolds~\citep{udriste}. Indeed, it is easy to extend unconstrained optimization techniques to smooth manifolds, at least from the perspective of asymptotic complexity analysis~\citep{absil2009optimization}; though the non-asymptotic case is considerably more complicated~\citep{zhangSra16a,rsvrg}. 

The key manifold in this paper is $\pp^d$, the manifold of $d\times d$ symmetric positive definite (PSD) matrices. At a point $\msigma \in \pp^d$, the tangent space $T_{\msigma}$ is isomorphic to the entire set of symmetric matrices; and the Riemannian metric at $\msigma$ between two vectors $\xi$ and $\eta$ in $T_{\msigma}$ is given by $g_{\msigma}(\xi,\eta) := \trace(\msigma^{-1}\xi\msigma^{-1}\eta)$. 

Riemannian manifolds have \emph{geodesics}, which are curves that (locally) join points along shortest paths which depends on the choice of Riemannian metric. Geodesics help generalize the notion of convexity to manifolds. 

\subsection{Geodesic convexity}
\label{sec:manifold}
Let $\Mc$ be a Riemannian manifold and $\gamma_{xy}$ a geodesic from $x$ to $y$; that is
\begin{equation*}
  \gamma_{xy}:[0,1] \to \Mc,\quad \gamma_{xy}(0) = x,\ \gamma_{xy}(1) = y.
\end{equation*}
A set $\Ac \subseteq \Mc$ is \emph{geodesically convex} (henceforth g-convex) if for all $x, y \in \Ac$ there is a geodesic $\gamma_{xy}$ contained within $\Ac$. Further, a function $f: \Ac \to \reals$ is g-convex if for all $x, y \in \Ac$, the composition $f \circ \gamma_{xy}: [0,1] \to \reals$ is convex in the usual Euclidean sense.

The Riemannian metric on $\pp^d$ mentioned above induces a geodesic between two points $\msigma_1$ and $\msigma_2$ that has the well-known closed-form (see e.g.,~\citep[Ch.~6]{bhatia07}):
\begin{equation*}
  \gamma_{\msigma_1,\msigma_2}(t) := \msigma_1^{\nhalf}\left(\msigma_1^{-\nhalf}\msigma_2\msigma_1^{-\nhalf}\right)^t\msigma_1^{\nhalf},\quad 0 \le t \le 1.
\end{equation*}
Thus, a function $f: \pp^d \to \reals$ if g-convex on $\pp^d$ if it satisfies
\begin{equation*}
  f(\gamma_{\msigma_1,\msigma_2}(t)) \le (1-t)f(\msigma_1) + tf(\msigma_2),\qquad t \in [0,1],\ \msigma_1,\msigma_2 \in \pp^d.
\end{equation*}
The negative of a g-convex function is called  g-concave. For a g-convex function, local optimality implies global optimality even if it is nonconvex in the Euclidean case. This remarkable property follows easily from g-convexity upon mimicking the corresponding Euclidean proof. This property has been investigated in some matrix theoretic applications~\citep{bhatia07,sra15}, and has been used in recent theoretical and applied works in nonlinear optimization~\citep{ring2012optimization,sra2013geometric,wiesel12,zhangSra16a}. 

\subsection{First-order methods for Riemannian optimization}
\begin{figure}
  \centering
  \includegraphics[scale=0.5]{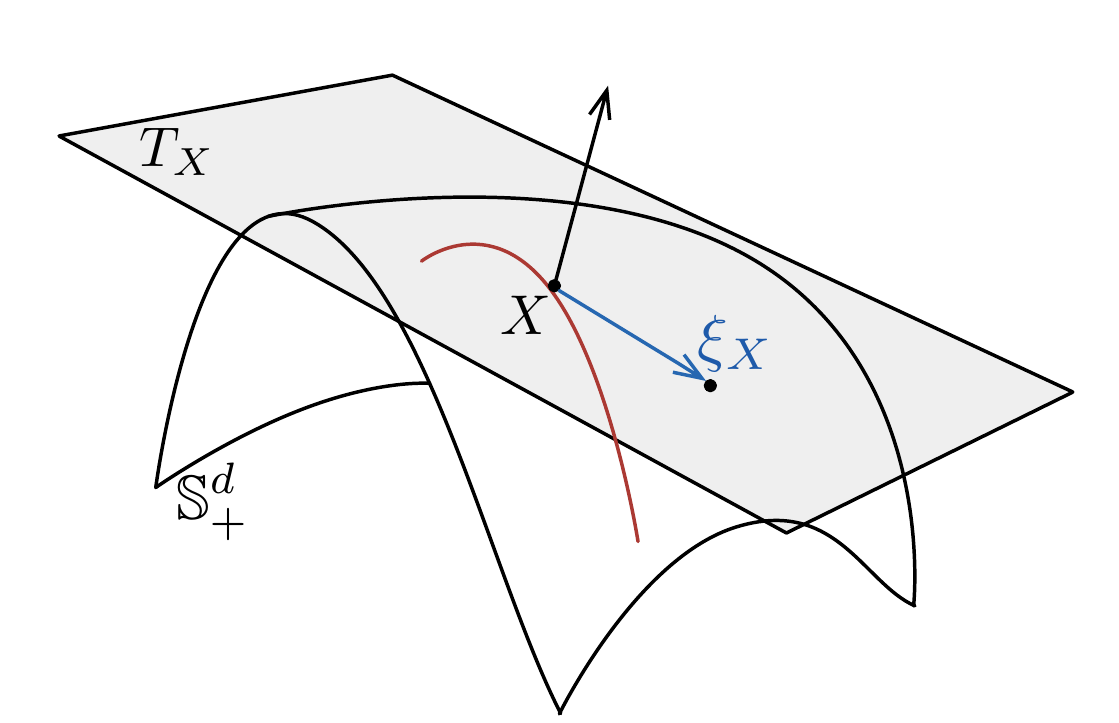}
  \caption{\footnotesize Visualization of line-search on a manifold: $X$ is a point on the manifold, $T_X$ is the tangent space at the point $X$, $\xi_X$ is a descent direction at $X$; the red curve is the curve along which line-search is performed. }
  \label{fig:cg}
\end{figure}
\label{sec:manopt}
At a high-level, first-order methods for manifold optimization methods operate iteratively as follows (see Fig.~\ref{fig:cg} for a conceptual demonstration):
\begin{itemize}
\item[i)] Obtain a descent direction, namely, a vector in tangent space that decreases the cost function if we infinitesimally move along it; 
\item[ii)] Perform a line-search along a smooth curve on the manifold to obtain sufficient decrease and ensure convergence.
\end{itemize}

Such a smooth curve that is parametrized by a point on the manifold and a (descent) direction is called retraction.
A retraction is a smooth mapping $\retr$ from the tangent bundle $T\Mc$ to the manifold $\Mc$. The restriction of retraction to $T_x$, $\retr_x:T_x \rightarrow \Mc$, is a smooth mapping with 
\begin{itemize}
\item[1)]$\retr_x(0)=x$, where $0$ denotes the zero element of $T_x$. 
\item[2)]$D\retr_x(0)=\text{id}_{T_x}$, where $D\retr_x$ denotes the derivative of $\retr_x$ and $\text{id}_{T_x}$ denotes the identity mapping on $T_x$. 
\end{itemize}
One possible candidate for retraction on Riemannian manifolds is the exponential map. The exponential map $\Exp_x:T_x\rightarrow\Mc$ is defined as $\Exp_x v = \gamma(1)$, where $\gamma$ is the geodesic satisfying the conditions $\gamma(0)=x$ and $\dot{\gamma}(0) = v$. The reader is referred to~\citep{absil2009optimization,udriste} for more in depth discussion.

First-order methods are based on gradients. The gradient on a Riemannian manifold is defined as the vector $\nabla f(x)$ in tangent space such that
\begin{equation*}
Df(x)\xi = \ip{\nabla f(x)}{\xi},\quad\text{for}\ \xi \in T_x,
\end{equation*} 
where $\ip{\cdot}{\cdot}$ is the inner product in the tangent space $T_x$.

Another important concept needed for methods like conjugate-gradient and LBFGS is \emph{vector transport}. Vector transport is a smooth function that allows moving tangent vectors along retractions. 
A vector transport $\mathcal{T}:\Mc \times \Mc \times T \Mc\rightarrow T \Mc, (x,y,\xi) \mapsto \mathcal{T}_{x,y}(\xi)$
is a  mapping satisfying the following properties: 
\begin{itemize}
\item[1)] There exists an associated retraction $\retr$ and a  tangent vector $\nu$ satisfying $\mathcal{T}_{x,y}(\xi) \in T_{\retr_x(\nu)}$, for all $\xi \in T_x$. 
 \item[2)] $\mathcal{T}_{x,y}\xi = \xi$, for all $\xi \in T_x$. 
 \item[3)]  The mapping $\mathcal{T}_{x,y}(.)$ is linear.
 \end{itemize}

An important special case of vector transport is parallel transport, which is defined as a differential map between tangent spaces at different points on the manifold with zero derivative along a smooth curve connecting the points. The differential map between tangent spaces on the manifold is a smooth vector field, where a vector field is an assignment of a tangent vector to each point on a manifold. For computing the derivative of such a map, one first needs to define a connection, which is a way to perform directional derivative of vector fields. Let $\mathcal{V}(\Mc)$ be the set of smooth vector fields on $\Mc$, a connection is a map $\nabla:\mathcal{V}(\Mc) \times \mathcal{V}(\Mc) \rightarrow \mathcal{V}(\Mc)$ satisfying certain properties~\citep{absil2009optimization}. Given a smooth curve $\gamma:[0,1]\rightarrow \Mc$, transporting a vector $\nu_0 \in T_{\gamma(0)}$ to a vector $\nu(t) \in T_{\gamma(t)}$ can be done by solving the following initial value problem
\begin{equation*}
\nabla_{\dot{\gamma}(t)}\nu =0,\quad \nu(0)=\nu_0.
\end{equation*}
For $x=\gamma(0)$ and $y=\gamma(t)$, the parallel transport of $\nu_0 \in T_x$ to $\nu(t) \in T_y$ is a vector transport $\nu(t) = T_{x,y}\nu_0$.

Table~\ref{tbl:psdSummary} summarizes the key quantities for $\pp^d$. If the parameter space is a product space of several manifolds, the concepts can be easily defined based on individual manifolds. For example, the exponential map, gradient and parallel transport are defined as the Cartesian product of individual expressions, and the inner product is defined as the sum of inner product of the components in their respective manifolds.
\begin{table}[t]
  \caption{Summary of Riemannian expressions for PSD matrices}
  \label{tbl:psdSummary}
  \begin{tabular}{l|l}
      \hline
    Definition & Expression for the PSD manifold\\
    \hline
    Tangent space& Space of symmetric  matrices \\
    Metric between $\xi,\eta$ at $\Sigma$ & $g_{\Sigma}(\xi,\eta)= \trace (\Sigma^{-1} \xi \Sigma^{-1} \eta)$  \\
    Gradient at $\Sigma$ if Euclidean gradient is $\nabla_Ef$ & $\nabla f(\Sigma) = \tfrac12 \Sigma (\nabla_Ef(\Sigma)+[\nabla_Ef(\Sigma)]^T) \Sigma$\\
    Exponential map at $\Sigma$ in direction $\xi$ & $\Exp_{\Sigma}(\xi)= \Sigma \exp (\Sigma^{-1} \xi )$  \\
    Parallel transport of  $\xi$ from $\Sigma_1$ to $
    \Sigma_2 $& $\mathcal{T}_{\Sigma_1,\Sigma_2}(\xi)= E \xi E^T,\quad E =(\Sigma_2\Sigma_1^{-1})^{1/2}$\\
  \end{tabular}
\end{table}

Two typical line-search methods are used in practice, one is Armijo rule and the other is line-search algorithm satisfying Wolfe conditions. For the case of LBFGS method, it is more common to use Wolfe line-search because it can guarantee that each step of LBFGS creates a descent direction~\citep{ring2012optimization}.

\subsection{Wolfe line-search}
The first Wolfe condition is a sufficient-decrease condition and is given by
\begin{equation*}
f(\retr_{x_k}(\alpha \xi_k)) \leq f(x_k) + c_1 \alpha D f(x_k) \xi_k, 
\end{equation*}
where $0<c_1<1$ is a constant typically chosen to be around $10^{-4}$ for LBFGS. This condition alone does not ensure that the algorithm makes sufficient progress. Another condition called curvature condition is needed,
\begin{equation}
D f(\retr_{x_k}(\alpha \xi_k)) \mathcal{T}_{x_k,\retr_{x_k}(\alpha \xi_k)}(\xi_k) \geq c_2 D f(x_k) \xi_k,
\label{eq.wolfe}
\end{equation}
where $c_2>c_1$ is a constant smaller than 1 (around $0.9$ for LBFGS). Practical line-search algorithms usually satisfy \emph{strong Wolfe conditions}, where~\eqref{eq.wolfe} is replaced by the stronger condition:
\begin{equation*}
|D f(\retr_{x_k}(\alpha \xi_k)) \mathcal{T}_{x_k,\retr_{x_k}(\alpha \xi_k)}(\xi_k)| \leq c_2 |D f(x_k) \xi_k|.
\end{equation*}
\begin{minipage}[h]{1.0\linewidth}\footnotesize
  \begin{minipage}[t]{0.5\linewidth}\footnotesize
    \begin{algorithm}[H]
      \caption{\footnotesize Wolfe line-search}
      \label{alg.wls}
      \begin{algorithmic}[1]\raggedright\footnotesize
        \State{\bf Given:} Current point $x_k$ and descent direction $\xi_k$
        \State $\phi(\alpha) \gets f(R_{x_k}(\alpha \xi_k))$;  $\phi^{\prime}(\alpha) \gets \alpha D f(x_k) \xi_k$
        \State $\alpha_0 \gets 0$, $\alpha_1 > 0$ and $i\gets 0$.
        \While{$i \leq i_{\max}$}
          \State $i \leftarrow i+1$
          \If { $\phi(\alpha_i) > \phi(0) + c_1 \alpha_i \phi^{\prime}(0)$ \textbf{or}  $\phi(\alpha_i) \geq \phi(\alpha_{i-1}),\ i>1$}
            \State $\alpha_{\text{low}}=\alpha_{i-1}$ and $\alpha_{\text{hi}}=\alpha_{i}$ 
            \State {\bf break}
          \ElsIf { $|\phi^{\prime}(\alpha_i)| \leq c_2 \phi^{\prime}(0)$} 
            return $\alpha_i$
          \ElsIf { $|\phi^{\prime}(\alpha_i)| \geq 0$} 
            \State $\alpha_{\text{low}}=\alpha_{i}$ and $\alpha_{\text{hi}}=\alpha_{i-1}$
            \State {\bf break}
          \Else
            \State \textsc{Extrapolate} to find $\alpha_{i+1} > \alpha_i$
          \EndIf
        \EndWhile
        \State \textbf{Call} \textsc{ZoomingPhase}
      \end{algorithmic}
    \end{algorithm}
  \end{minipage}
  \begin{minipage}[t]{0.5\linewidth}\footnotesize
    \begin{algorithm}[H]
      \caption{\footnotesize \textsc{ZoomingPhase}}
      \label{alg.zooming}
      \begin{algorithmic}[1]\raggedright\footnotesize
        \While{$i \leq i_{\max}$} 
        \State $i \leftarrow i+1$
        \State \textsc{Interpolate} to find $\alpha_i \in (\alpha_{\text{low}}, \alpha_{\text{hi}})$
        \If { $\phi(\alpha_i) > \phi(0) + c_1 \alpha_i \phi^{\prime}(0)$ \textrm{\bf or} $\phi(\alpha_i) \geq \phi(\alpha_{\text{low}})$} 
        \State $\alpha_{\text{hi}} \leftarrow \alpha_i$
        \Else
        \If { $|\phi^{\prime}(\alpha_i)| \leq c_2 \phi^{\prime}(0)$} 
        \Return $\alpha_i$
        \ElsIf { $\phi^{\prime}(\alpha_i)(\alpha_{\text{hi}}-\alpha_{\text{low}}) \geq 0 $} 
        \State $\alpha_{\text{hi}} \leftarrow \alpha_{\text{low}}$
        \EndIf
        \State $\alpha_{\text{low}} \leftarrow \alpha_i$
        \EndIf
        \EndWhile 

        \State \textbf{return} {\bf failure}
      \end{algorithmic}
    \end{algorithm}
  \end{minipage}
\end{minipage}
\vskip7pt
Algorithm~\ref{alg.wls} summarizes a line-search algorithm satisfying strong Wolfe conditions based on the Euclidean algorithm explained in~\citep{nocedal2006numerical}. The algorithm is divided into two phases: bracketing and zooming. In the bracketing phase, an interval is found that contains a point satisfying the strong Wolfe condition. Next, in the zooming phase, the actual point is found. Theory behind why this algorithm is guaranteed to find a step-length satisfying (strong) Wolfe conditions can be found in \citep{nocedal2006numerical}.  

For the interpolation and extrapolation steps of the line-search one can find the minimum of a cubic polynomial approximation to the function in an interval.  For cubic polynomial interpolation, we approximate the function by a cubic polynomial so that the function $\phi(\cdot)$ and its gradient $\phi^{\prime}(\cdot)$ matches the function value and the gradient of the cubic polynomial at the end-points of the interval. For extrapolation, we use the function and gradient at $0$ and at the end-point.
To ensure numerical stability, the interval wherein the minimum of the cubic polynomial is computed in the interpolation phase is chosen to be smaller than the actual interval so to have certain distances from the end-points of the interval (we choose the distance to be 0.1 times the interval length). The interval for the extrapolation is assumed to be between $1.1$ and $10$ times the value of the current point.

The initial step-length $\alpha_1$ can be guessed using the previous function and gradient information. We propose the following choice that is quite effective:
\begin{equation}
\label{eq.alpha1}
\alpha_1 = 2\frac{f(x_k) - f(x_{k-1})}{  D f(x_k) \xi_k}.
\end{equation}
Equation~\eqref{eq.alpha1} is obtained by finding $\alpha^*$ that minimizes a quadratic approximation of the function along the geodesic through the previous point (based on $f(x_{k-1})$, $f(x_k)$ and $D f(x_{k-1}) \xi_{k-1}$):
\begin{equation}
\label{eq.quadappr}
\alpha^* = 2\frac{f(x_k) - f(x_{k-1})}{  D f(x_{k-1}) \xi_{k-1}}.
\end{equation}
Then, assuming that first-order change will be the same as in the previous step, we write
\begin{equation}
\label{eq.samefirst}
\alpha^*  D f(x_{k-1}) \xi_{k-1} \approx \alpha_1  D f(x_{k}) \xi_{k}.
\end{equation}
Combining~\eqref{eq.quadappr} and~\eqref{eq.samefirst}, we obtain our procedure of selection $\alpha_1$ expressed in~\eqref{eq.alpha1}. \citet{nocedal2006numerical} suggest using either $\alpha^*$ of \eqref{eq.quadappr} as the initial step-length, or using~\eqref{eq.samefirst} where $\alpha^*$ is set equal to the step-length obtained in the line-search at the previous point. We observed the our choice \eqref{eq.alpha1} proposed above leads to substantially better performance than these other two approaches.

\subsection{Stochastic optimization}
If the objective function has the form
\begin{equation}
  \label{eq:8}
  \min_{X \in \Mc}\quad f(x) := \frac{1}{n}\nlsum_{i=1}^nf_i(x),
\end{equation}
then for large $n$ each iteration of the first-order methods explained above becomes very expensive, as merely computing the gradient requires going through all $n$ component functions. In this large-scale setting, one frequently passes to stochastic / incremental optimization methods such as stochastic gradient descent (SGD) that processes only a small batch of functions at each iteration. Note that SGD is actually not a descent method; it makes progress by replacing an exact descent direction by one which is a descent direction in expectation.

Riemannian SGD~\citep{bonnabel2013} runs the following iteration, where $i_t \sim U(n)$, i.e. a random integer between $1$ and $n$:
\begin{equation}
  \label{eq:10}
  x_{t+1} \gets \retr_{x_t}(-\eta_t \nabla f_{i_t}(x_t)),\qquad t=0,1,\ldots,
\end{equation}
where $\retr_x$ is a retraction at the point $x$ and $\eta_t$ is a suitable stepsize that typically satisfies $\sum_t\eta_t=\infty$ and $\sum_t\eta_t^2<\infty$. 

After this background on the Riemannian optimization methods that we will use for \gmm parameter optimization, we are now ready to describe the problem reformulation and other important theoretical details.

\section{Problem reformulation}
\label{sec:prob}
Experience with mixture modeling shows that whenever an optimization method works well for a single component, the same optimization method also works well for the mixture model. We begin, therefore, with parameter estimation for a single Gaussian. Although this problem has a closed-form solution that benefits \EM, our goal is to tackle it in the context of manifold optimization.

Consider, maximum likelihood parameter estimation for a single Gaussian,
\begin{equation}
  \label{eq:3}
  \max_{\vmu,\msigma \succ 0}\ \Lc(\vmu,\msigma) := \nlsum_{i=1}^n\log p_{\Nc}(\vx_i; \vmu, \msigma).
\end{equation}
This objective is concave in the Euclidean sense. But our aim is to apply manifold optimization and this objective is \emph{not} g-concave on its domain $\reals^d\times\pp^d$, which makes it geometrically somewhat of a mismatch. 

We invoke a simple transformation that turns \eqref{eq:3} into a g-concave optimization problem. This transformation has a dramatic impact on the speed of the convergence for a single Gaussian, as seen in Fig.~\ref{fig:reparam}.  Define new vectors $\vy_i^T = [\vx_i^T\ 1]$; then, the proposed transformed model is
\begin{equation}
  \label{eq:5}
  \max_{\ms \succ 0}\ \widehat{\Lc}(\ms) := \nlsum_{i=1}^n \log q_{\Nc}(\vy_i;\ms),
\end{equation}
where $q_{\Nc}(\vy_i;\ms) := 2\pi \exp(\tfrac 12) p_\mathcal{N}(\vy_i;\ms)$. Note that this new cost function is \emph{not} just a reparametrization of~\eqref{eq:3}. However, it becomes a reparametrization at a maximum. More precisely, Theorem~\ref{thm.gauss} shows that solving the reformulation~\eqref{eq:5} also solves the original problem~\eqref{eq:3}.

\begin{theorem} 
  \label{thm.gauss}
  If $\vmu^*, \msigma^*$ maximize~\eqref{eq:3}, and if $\ms^*$ maximizes~\eqref{eq:5}, then $\widehat{\Lc}(\ms^*) = \Lc(\vmu^*, \msigma^*)$ and
  \begin{equation}
    \label{eq.sstar}
    \ms^* = \begin{pmatrix}
    \msigma^* + \vmu^* {\vmu^*}^T & \vmu^* \\
    {\vmu^*}^T & 1
  \end{pmatrix}.
\end{equation}
\end{theorem}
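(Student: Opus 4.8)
The plan is to recognize the reformulated objective~\eqref{eq:5} as the log-likelihood of a \emph{zero-mean} Gaussian in dimension $d+1$ fitted to the lifted points $\vy_i=[\vx_i^T\ 1]^T$. Its maximizer turns out to be the sample second-moment matrix, and the block structure~\eqref{eq.sstar} then falls out automatically from the fact that every $\vy_i$ has last coordinate equal to $1$. This single observation yields the optimizer, certifies that it lies in the ``$\ms_{d+1,d+1}=1$'' slice, and fixes its block form all at once.

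First I would strip the fixed constant. Since $q_{\Nc}(\vy_i;\ms)$ is a constant multiple of the zero-mean density $p_{\Nc}(\vy_i;\ms)=\det(\ms)^{-1/2}(2\pi)^{-(d+1)/2}\exp(-\tfrac12\vy_i^T\ms^{-1}\vy_i)$, we have $\widehat{\Lc}(\ms)=\text{const}+\nlsum_i\log p_{\Nc}(\vy_i;\ms)$, so~\eqref{eq:5} shares its maximizer with the zero-mean $(d+1)$-dimensional Gaussian MLE on $\set{\vy_i}$. Passing to the precision variable $\Theta=\ms^{-1}$ turns this into maximizing $\tfrac n2\log\det\Theta-\tfrac12\trace(\Theta\nlsum_i\vy_i\vy_i^T)$ up to constants, which is strictly concave; its unique stationary point gives $\ms^*=\mathbf M:=\tfrac1n\nlsum_i\vy_i\vy_i^T$, provided $\mathbf M\succ0$. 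This positivity is exactly the non-degeneracy already implicit in~\eqref{eq:3} attaining its maximum at some $\msigma^*\succ0$ (the centered data spanning $\reals^d$).

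The crucial step is now immediate. The $(d+1,d+1)$ entry of $\mathbf M$ equals $\tfrac1n\sum_i 1=1$, so the optimizer over all of $\pp^{d+1}$ automatically lands in the desired slice; its off-diagonal block is $\tfrac1n\sum_i\vx_i=\bar{\vx}$ and its top-left block is $\tfrac1n\sum_i\vx_i\vx_i^T$. Matching $\mathbf M$ against the template~\eqref{eq.sstar} forces $\vmu^*=\bar{\vx}$ and $\msigma^*=\tfrac1n\sum_i\vx_i\vx_i^T-\bar{\vx}\bar{\vx}^T=\tfrac1n\sum_i(\vx_i-\bar{\vx})(\vx_i-\bar{\vx})^T$, which are precisely the classical single-Gaussian MLEs maximizing~\eqref{eq:3}. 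This establishes~\eqref{eq.sstar}.

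It remains to match the optimal values, $\widehat{\Lc}(\ms^*)=\Lc(\vmu^*,\msigma^*)$, for which I would prove the sharper pointwise identity $q_{\Nc}(\vy_i;\ms)=p_{\Nc}(\vx_i;\vmu,\msigma)$ for every $\ms$ of the form~\eqref{eq.sstar}. The Schur complement gives $\det\ms=\det\msigma$, and the block-inverse formula yields $\vy_i^T\ms^{-1}\vy_i=(\vx_i-\vmu)^T\msigma^{-1}(\vx_i-\vmu)+1$; substituting these shows the determinant and Mahalanobis terms coincide, while the fixed constant multiplying $p_{\Nc}(\vy_i;\ms)$ is calibrated precisely to absorb both the extra $(2\pi)^{1/2}$ from the added dimension and the $\exp(\tfrac12)$ from the ``$+1$'' above. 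Summing over $i$ then gives $\widehat{\Lc}=\Lc$ on the slice, and in particular at the optimum. The only real obstacle here is conceptual --- spotting the zero-mean-MLE-on-lifted-data interpretation; after that, the Schur-complement and block-inverse bookkeeping needed for the exact value match is routine, and is the most error-prone place to get the normalizing constant right.
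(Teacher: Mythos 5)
Your proof is correct, but it takes a genuinely different route from the paper. The paper reparametrizes $\ms$ via the block variables $(\mU,\vt,s)$ of~\eqref{eq.s}, observes that the objective separates so that $s^*=1$, and thereby reduces~\eqref{eq:5} to the original $d$-dimensional log-likelihood without ever computing a maximizer in closed form. You instead exploit the closed form directly: $\widehat{\Lc}$ is (up to an additive constant) a zero-mean Gaussian log-likelihood on the lifted points, its unique maximizer is the second-moment matrix $\tfrac1n\sum_i\vy_i\vy_i^T$, and the constraint that every $\vy_i$ ends in a $1$ forces the bottom-right entry to be $1$ and the remaining blocks to be $\bar{\vx}$ and $\tfrac1n\sum_i\vx_i\vx_i^T$, which match the template~\eqref{eq.sstar} evaluated at the classical MLEs; the value identity then follows from your (correct) pointwise computation $q_{\Nc}(\vy_i;\ms)=p_{\Nc}(\vx_i;\vmu,\msigma)$ on the slice via the Schur complement and the factorization $\ms=L\,\mathrm{diag}(\msigma,1)\,L^T$. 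Your argument is more explicit for this single-Gaussian case and has the virtue of surfacing the non-degeneracy condition ($\tfrac1n\sum_i\vy_i\vy_i^T\succ0$, equivalently a nonsingular sample covariance) that the paper leaves implicit. The paper's separation-in-$s$ argument, on the other hand, is the one that transfers verbatim to the penalized objective of Theorem~\ref{thm.pen} and to the weighted subproblems arising in the mixture case (Theorem~\ref{thm:gmm.reparam}), where no closed-form maximizer is available; your second-moment computation would not extend there without modification.
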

\begin{proof}
We express $\ms$ by new variables $\mU$, $\vt$ and $s$ by writing 
\begin{equation}
\label{eq.s}
\ms = \begin{pmatrix}
    \mU + s \vt \vt^T & s\vt \\
   s\vt^T &s
  \end{pmatrix}.
  \end{equation}
  The objective function $\widehat{\Lc}(\ms)$ in terms of the new parameters becomes
  \begin{equation*}
  \begin{split}
      \widehat{\Lc}(\mU,\vt,s) =\tfrac{n}{2} -\tfrac{d}{2}\log(2\pi)  - \tfrac{n}{2} \log s -\tfrac{n}{2}\log\det(\mU)\\ - \sum_{i=1}^n \tfrac12(\vx_i-\vt)^T \mU^{-1} (\vx_i-\vt) - \tfrac{n}{2s}.
   \end{split}
  \end{equation*}
  Optimizing $\widehat{\Lc}$ over $s>0$ we see that $s^{*}=1$. Hence, the objective reduces to a $d$-dimensional Gaussian log-likelihood, for which $\mU^*=\msigma^*$ and $\vt^*=\vmu^*$.
\end{proof}

In other words, Theorem~\ref{thm.gauss} shows that our model transformation is ``faithful'' because it leaves the optimum unchanged. Figure~\ref{fig:reparam} shows the unmistakable impact this transformation has on the convergence speed of Riemannian Conjugate-Gradient (CG) and Riemannian LBFGS. 

Next, Proposition~\ref{prop:gc} proves another key property of this transformation: the objective in~\eqref{eq:5} becomes g-concave. For proving Proposition~\ref{prop:gc}, we need the following lemma that is an easy consequence of \citep[Thm.~4.1.3]{bhatia07}:
\begin{lemma}
  \label{lem:gm}
  Let $\ms$, $\mr \succ 0$. Then, for a vector $\vx$ of appropriate dimension,
  \begin{equation}
\label{eq:15}
    \vx^T(\ms^{-\nhalf}(\ms^{\nhalf}\mr^{-1}\ms^{\nhalf})^{\nhalf}\ms^{-\nhalf})\vx \le [\vx^T\ms^{-1}\vx]^{\nhalf}[\vx^T\mr^{-1}\vx]^{\nhalf}.
  \end{equation}
\end{lemma}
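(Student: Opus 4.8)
The plan is to reduce the claimed matrix inequality to the scalar Cauchy--Schwarz inequality by a single congruence. The first observation is that the operator sandwiched on the left-hand side is exactly the matrix geometric mean of $\ms^{-1}$ and $\mr^{-1}$: writing $A=\ms^{-1}$ and $B=\mr^{-1}$, one has $A^{\nhalf}(A^{-\nhalf}BA^{-\nhalf})^{\nhalf}A^{\nhalf} = \ms^{-\nhalf}(\ms^{\nhalf}\mr^{-1}\ms^{\nhalf})^{\nhalf}\ms^{-\nhalf}$, since $A^{\nhalf}=\ms^{-\nhalf}$ and $A^{-\nhalf}=\ms^{\nhalf}$. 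This is the vantage point from which \citep[Thm.~4.1.3]{bhatia07} enters; but rather than invoke the maximality property of the geometric mean directly, I would first diagonalize the problem by a change of variables.

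Concretely, I would set $\vv := \ms^{-\nhalf}\vx$ and $C := \ms^{\nhalf}\mr^{-1}\ms^{\nhalf}\succ 0$, the latter having a unique positive definite square root $C^{\nhalf}$. Substituting $\vx=\ms^{\nhalf}\vv$ and using the symmetry of $\ms^{\pm\nhalf}$, the three quadratic forms appearing in~\eqref{eq:15} transform cleanly: the left-hand side becomes $\vv^T C^{\nhalf}\vv$, while $\vx^T\ms^{-1}\vx = \vv^T\vv = \norm{\vv}^2$ and $\vx^T\mr^{-1}\vx = \vv^T C\vv$. Thus the entire inequality collapses to
\begin{equation*}
\vv^T C^{\nhalf}\vv \;\le\; \norm{\vv}\,\bigl(\vv^T C\vv\bigr)^{\nhalf}.
\end{equation*}
This last step is immediate: since $C^{\nhalf}$ is self-adjoint, Cauchy--Schwarz gives $\vv^T C^{\nhalf}\vv = \ip{\vv}{C^{\nhalf}\vv} \le \norm{\vv}\,\norm{C^{\nhalf}\vv}$, and $\norm{C^{\nhalf}\vv}^2 = \vv^T C\vv$ finishes it.

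An alternative route, closer to the literal statement of \citep[Thm.~4.1.3]{bhatia07}, would start from the fact that the block matrix with diagonal blocks $A,B$ and off-diagonal blocks equal to the geometric mean $A\# B$ is positive semidefinite; testing this block matrix against vectors of the form $(\alpha\vx,\beta\vx)$ yields, for $a=\vx^TA\vx$, $b=\vx^TB\vx$, and $g=\vx^T(A\# B)\vx$, the nonnegativity of $a\alpha^2+2g\alpha\beta+b\beta^2$ for all $\alpha,\beta$, which forces $g^2\le ab$ and hence the bound. I expect the only step requiring genuine care to be the bookkeeping in the congruence, namely checking that $(\ms^{\nhalf}\mr^{-1}\ms^{\nhalf})^{\nhalf}$ is conjugated correctly and that each quadratic form maps to its intended expression; once the reduction is in place, the conclusion is a one-line appeal to Cauchy--Schwarz (equivalently, to positivity of a $2\times2$ determinant).
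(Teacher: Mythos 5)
Your proof is correct. The paper itself offers no written proof of this lemma: it simply asserts that the inequality ``is an easy consequence of [Thm.~4.1.3]{bhatia07}'', i.e.\ of the characterization of the matrix geometric mean $A\# B = A^{\nhalf}(A^{-\nhalf}BA^{-\nhalf})^{\nhalf}A^{\nhalf}$ as the largest $X$ making the block matrix $\bigl(\begin{smallmatrix}A & X\\ X & B\end{smallmatrix}\bigr)$ positive semidefinite. Your second, sketched route --- testing that block matrix against $(\alpha\vx,\beta\vx)$ and reading off $g^2\le ab$ from the resulting quadratic --- is precisely the argument the paper is implicitly invoking. Your first, fully worked route is different and arguably preferable: the congruence $\vv=\ms^{-\nhalf}\vx$, $C=\ms^{\nhalf}\mr^{-1}\ms^{\nhalf}$ correctly sends the three quadratic forms to $\vv^TC^{\nhalf}\vv$, $\norm{\vv}^2$, and $\vv^TC\vv$ respectively, and the inequality $\vv^TC^{\nhalf}\vv=\ip{\vv}{C^{\nhalf}\vv}\le\norm{\vv}\,\norm{C^{\nhalf}\vv}=\norm{\vv}(\vv^TC\vv)^{\nhalf}$ is exactly Cauchy--Schwarz. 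This makes the lemma self-contained and elementary, needing neither the existence nor the extremal property of the geometric mean; what the citation-based route buys instead is brevity and a pointer to the structural fact (the LHS operator \emph{is} $\ms^{-1}\#\,\mr^{-1}$) that motivates why such an inequality should hold and why it is the right tool in the g-concavity proof of Proposition~\ref{prop:gc}. Both of your routes are sound; either would serve as a complete replacement for the paper's citation.
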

\begin{proposition}
  \label{prop:gc}
  The objective $\widehat{\Lc}(\ms)$ in~\eqref{eq:5} is g-concave.
\end{proposition}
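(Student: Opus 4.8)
The plan is to expand the objective into simple pieces and prove g-concavity of each, using that a nonnegative combination of g-concave functions is again g-concave (immediate from applying the defining inequality term-by-term along a common geodesic). Writing out the zero-mean Gaussian density on $\reals^{d+1}$ and summing, one gets
\begin{equation*}
  \widehat{\Lc}(\ms) = c - \tfrac{n}{2}\log\det(\ms) - \tfrac12\nlsum_{i=1}^n\vy_i^T\ms^{-1}\vy_i,
\end{equation*}
where $c$ is a constant independent of $\ms$. It therefore suffices to show that $\log\det$ is affine along geodesics of $\pp^{d+1}$ and that each map $\ms \mapsto \vy_i^T\ms^{-1}\vy_i$ is g-convex; the claim then follows since the constant is g-concave, $-\tfrac n2\log\det$ is g-concave (indeed g-linear), and each $-\tfrac12\vy_i^T\ms^{-1}\vy_i$ is g-concave as the negative of a g-convex function.

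For the log-determinant term I would substitute the closed form of the geodesic $\gamma_{\ms,\mr}(t)=\ms^{\nhalf}(\ms^{-\nhalf}\mr\ms^{-\nhalf})^t\ms^{\nhalf}$ and use multiplicativity of the determinant to obtain $\det(\gamma_{\ms,\mr}(t)) = \det(\ms)^{1-t}\det(\mr)^t$, whence $\log\det(\gamma_{\ms,\mr}(t)) = (1-t)\log\det(\ms) + t\log\det(\mr)$. This shows $\log\det$ is affine along geodesics (g-linear), so $-\tfrac n2\log\det$ is both g-convex and g-concave; in particular g-concave.

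The heart of the proof is g-convexity of the quadratic term $g(\ms) := \vy^T\ms^{-1}\vy$ (suppressing the index $i$). I would reduce to midpoint g-convexity: since $g\circ\gamma$ is continuous and the geodesic midpoint of any two points on a geodesic lies again on that geodesic, it suffices to verify $g(\ms\mathbin{\#}\mr) \le \tfrac12 g(\ms) + \tfrac12 g(\mr)$ for all $\ms,\mr\succ0$, where $\ms\mathbin{\#}\mr := \gamma_{\ms,\mr}(\nhalf) = \ms^{\nhalf}(\ms^{-\nhalf}\mr\ms^{-\nhalf})^{\nhalf}\ms^{\nhalf}$ is the geodesic midpoint; continuity then upgrades midpoint g-convexity to full g-convexity. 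The key algebraic observation is that the matrix on the left of Lemma~\ref{lem:gm} is exactly the inverse of this midpoint:
\begin{equation*}
  (\ms\mathbin{\#}\mr)^{-1} = \ms^{-\nhalf}(\ms^{\nhalf}\mr^{-1}\ms^{\nhalf})^{\nhalf}\ms^{-\nhalf},
\end{equation*}
which one checks by setting $A = \ms^{-\nhalf}\mr\ms^{-\nhalf}$ and noting $A^{-\nhalf} = (\ms^{\nhalf}\mr^{-1}\ms^{\nhalf})^{\nhalf}$. Lemma~\ref{lem:gm} then gives
\begin{equation*}
  g(\ms\mathbin{\#}\mr) = \vy^T(\ms\mathbin{\#}\mr)^{-1}\vy \le [\vy^T\ms^{-1}\vy]^{\nhalf}[\vy^T\mr^{-1}\vy]^{\nhalf} = [g(\ms)\,g(\mr)]^{\nhalf},
\end{equation*}
and a final application of the scalar AM-GM inequality $[g(\ms)g(\mr)]^{\nhalf}\le\tfrac12 g(\ms)+\tfrac12 g(\mr)$ yields the desired midpoint inequality.

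I expect the main obstacle to be the algebraic identification of the left-hand side of Lemma~\ref{lem:gm} with $(\ms\mathbin{\#}\mr)^{-1}$, i.e.\ recognizing that the somewhat opaque matrix geometric-mean expression is precisely the inverse of the geodesic midpoint on $\pp^{d+1}$; once this is in place, the lemma together with AM-GM dispatches the quadratic term. The only other point requiring care is the midpoint-to-global reduction, which hinges on continuity of $g\circ\gamma$ and on geodesics being closed under taking midpoints of their own points, so that the inequality verified at $t=\nhalf$ propagates to every $t\in[0,1]$.
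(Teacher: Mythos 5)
Your proposal is correct and follows essentially the same route as the paper: both reduce to midpoint geodesic concavity by continuity, observe that $\log\det$ is affine along geodesics, and handle the quadratic terms by combining Lemma~\ref{lem:gm} with the scalar AM--GM inequality. The only difference is organizational — you treat the terms separately and make explicit the identification of the left-hand side of Lemma~\ref{lem:gm} with $(\ms\mathbin{\#}\mr)^{-1}$, which the paper's proof uses implicitly.
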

\begin{proof}
  By continuity, it suffices to establish mid-point geodesic concavity:
  \begin{equation*}
    \widehat{\Lc}(\gamma_{\ms,\mr}(\half)) 
    \ge \half\widehat{\Lc}(\ms) + \half\widehat{\Lc}(\mr),\qquad\text{for}\ \ms,\mr \in \pp^d.
  \end{equation*}
  Denoting inessential constants by $c$, the above inequality turns into 
  \begin{align*}
    \widehat{\Lc}(\gamma_{\ms,\mr}(\half&)) 
    = -\log\det(\ms^{\nhalf}\mr^{\nhalf}) - c\sum_i\vy_i^T(\ms^{-\nhalf}(\ms^{\nhalf}\mr^{-1}\ms^{\nhalf})^{\nhalf}\ms^{-\nhalf})\vy_i\\
    &\ge -\half\log\det(\ms) -\half\log\det(\mr) - c\sum_i [\vy_i^T\ms^{-1}\vy_i]^{\nhalf}[\vy_i^T\mr^{-1}\vy_i]^{\nhalf}\\
    &\ge -\half\log\det(\ms) - \tfrac c2\sum_i \vy_i^T\ms^{-1}\vy_i -\half\log\det(\mr) - \tfrac c2\sum_i \vy_i^T\mr^{-1}\vy_i\\
    &=\half \widehat{\Lc}(\ms) + \half \widehat{\Lc}(\mr),
  \end{align*}
  where the first inequality is follows from Lemma~\ref{lem:gm}.
\end{proof}

\begin{theorem}
  \label{thm:gmm.reparam}
  A local maximum of the reformulated \gmm log-likelihood
  \begin{equation*}
    \widehat{\Lc}(\{\ms_j\}_{j=1}^K) 
    := \nlsum_{i=1}^n \log\Bigl(\nlsum_{j=1}^K\alpha_j q_{\Nc}(\vy_i; \ms_j)\Bigr)
  \end{equation*}
  is a local maximum of the original log-likelihood
  \begin{equation*}
    \Lc(\{\vmu_j,\msigma_j\}_{j=1}^K) := 
    \nlsum_{i=1}^n \log\Bigl(\nlsum_{j=1}^K\alpha_jp_{\Nc}(\vx_i|\vmu_j,\msigma_j)\Bigr).
  \end{equation*}
\end{theorem}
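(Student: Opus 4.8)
The plan is to reuse the block reparametrization from the proof of Theorem~\ref{thm.gauss}, applied to each component separately. Writing each $\ms_j \in \pp^{d+1}$ as in~\eqref{eq.s} through a triple $(\mU_j,\vt_j,s_j)$ with $\mU_j \succ 0$, $\vt_j\in\reals^d$, $s_j>0$ defines a diffeomorphism of $\pp^{d+1}$ onto $\pp^d\times\reals^d\times\reals_{>0}$ (its inverse reads off $s_j$ and $\vt_j$ from the last row/column and recovers $\mU_j$ as the Schur complement of the bottom-right entry). Since a smooth reparametrization preserves local maxima, I would run the whole argument in these coordinates. The computation behind Theorem~\ref{thm.gauss} shows that in this chart each component factors as $q_{\Nc}(\vy_i;\ms_j) = c_0\,h(s_j)\,p_{\Nc}(\vx_i;\vt_j,\mU_j)$, where $c_0>0$ is a universal constant and $h(s):=s^{-1/2}\exp(-1/(2s))$ is a scalar gain that is uniquely maximized at $s=1$; the key structural fact is that neither $c_0$ nor $p_{\Nc}(\vx_i;\vt_j,\mU_j)$ depends on $s_j$.

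The heart of the proof is to show that at any local maximum of $\widehat{\Lc}$ --- necessarily a critical point, as the domain is open --- one has $s_j=1$ for every component with $\alpha_j>0$. I would establish this by differentiating in $s_j$: from $\partial_{s_j}q_{\Nc}(\vy_i;\ms_j) = (h'(s_j)/h(s_j))\,q_{\Nc}(\vy_i;\ms_j)$, the derivative through the log-sum collapses to
\begin{equation*}
  \frac{\partial \widehat{\Lc}}{\partial s_j}
  = \frac{h'(s_j)}{h(s_j)}\sum_{i=1}^n r_{ij},
  \qquad
  r_{ij}:=\frac{\alpha_j q_{\Nc}(\vy_i;\ms_j)}{\nlsum_{j'}\alpha_{j'}q_{\Nc}(\vy_i;\ms_{j'})}.
\end{equation*}
Because $q_{\Nc}>0$ and $\alpha_j>0$, the responsibilities $r_{ij}$ are strictly positive, so $\sum_i r_{ij}>0$; stationarity therefore forces $h'(s_j)=0$. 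Since $\tfrac{d}{ds}\log h(s)=(1-s)/(2s^2)$ vanishes only at $s=1$, we get $s_j=1$ (components with $\alpha_j=0$ enter neither objective and may be discarded). This is exactly where the single-Gaussian argument does not transfer verbatim: the $s_j$ are coupled inside the log-sum, so I cannot eliminate them in closed form as in Theorem~\ref{thm.gauss}, and must instead read $s_j=1$ off the first-order condition together with positivity of the responsibilities. I expect this to be the main obstacle.

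Once all $s_j=1$ is secured, the conclusion follows by restriction. On the slice $\{s_1=\cdots=s_K=1\}$ the gains collapse to the single constant $h(1)$, so
\begin{equation*}
  \widehat{\Lc}\big|_{s_1=\cdots=s_K=1}
  = n\log\bigl(c_0\,h(1)\bigr) + \Lc(\{\vt_j,\mU_j\}),
\end{equation*}
i.e.\ $\widehat{\Lc}$ and $\Lc$ coincide up to an additive constant under the identification $\msigma_j=\mU_j$, $\vmu_j=\vt_j$ (the weights $\alpha_j$ are untouched and are shared by both problems). Finally, a local maximum of $\widehat{\Lc}$ over the full open domain is a fortiori a local maximum of its restriction to any slice through that point; since the given maximizer lies on the slice $\{s_j=1\}$, its restriction is a local maximum of $\widehat{\Lc}\big|_{s=1}$, hence --- differing only by a constant --- of $\Lc$. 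I would note that only this direction is needed for the stated claim.
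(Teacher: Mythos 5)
Your proof is correct and follows essentially the same route as the paper: reparametrize each $\ms_j$ by $(\mU_j,\vt_j,s_j)$, use the strict positivity of the responsibilities to force $s_j^*=1$ at any critical point, and observe that on the slice $s_j=1$ the reformulated objective agrees with $\Lc$ up to an additive constant. The paper phrases the middle step by saying each $\ms_j^*$ maximizes a responsibility-weighted single-Gaussian objective and then reuses the Theorem~\ref{thm.gauss} computation; your direct evaluation of $\partial\widehat{\Lc}/\partial s_j$ is the same first-order condition made explicit, not a different argument.
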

\begin{proof}
  Let $\ms_1^*, \ldots, \ms_K^*$ be a local maximum of $\widehat{\Lc}$. Then, $\ms_j^*$ is the maximum of the following cost function:
  \begin{equation*}
   - \frac12 \nlsum_{i=1}^n w_i \log\det(\ms_j) - \frac12\nlsum_{i=1}^n w_i \vy_i^T \ms_j^{-1} \vy_i,
  \end{equation*} 
  where for each $i \in \set{1,\ldots,n}$ the weight
  \begin{equation}
  \label{eq.weight}
    w_i = \frac{q_{\Nc}(\vy_i|\ms^*_j)}{\sum_{j=1}^K\alpha_j q_{\Nc}(\vy_i|\ms^*_j) }.
  \end{equation}
  Using an argument similar to that for Theorem~\ref{thm.gauss}, we see that $s_j^*=1$, whereby $q_{\Nc}(\vy_i|\ms^*_j) = p_{\Nc}(\vx_i;\vt_j^*,\mU_j^*)$. Thus, at a maximum the objective functions agree and the proof is complete.
\end{proof}

Theorem~\ref{thm:gmm.reparam} shows that we can replace~\eqref{eq:2} by a reformulated log-likelihood whose local maxima agree with those of~\eqref{eq:2}. Moreover, the individual components of the reformulated log-likelihood are geodesically concave. 

Finally, we also need to replace the constraint $\valpha \in \Delta_K$ to make the problem unconstrained. We do this via a commonly used change of variables~\citep{jordan1994hierarchical}:
\begin{equation}
\omega_k =  \log \biggl ( \frac{ \alpha_k}{\alpha_K} \biggr),\quad k=1,\hdots,K-1.
\label{eq.wReparam}
\end{equation}
Assume $\omega_K=0$ to be a constant; then the final optimization problem is:
\begin{equation}
  \label{eq:6}
  \max_{\{\ms_j \succ 0\}_{j=1}^K,\{\omega_j\}_{j=1}^{K-1}}  \widehat{\Lc}(\{\ms_j\}_{j=1}^K,\{\omega_j\}_{j=1}^{K-1}) 
  := \sum_{i=1}^n \log\Bigl(\sum_{j=1}^K\tfrac{\exp(\omega_j)}{\sum_{k=1}^K \exp(\omega_k)} q_{\Nc}(\vy_i; \ms_j)\Bigr)
\end{equation}
We solve~\eqref{eq:6} via Riemannian optimization problem in this paper; specifically, it is an optimization problem on the product manifold $\bigl(\prod_{j=1}^K\pp^{d+1}\bigr) \times \reals^{K-1}$. 

\subsection{Formulations for Penalized Likelihoods}
\label{sec:priors}
One of the problems with ML estimation for \gmm{}s is covariance singularity. There are several remedies to avoid this problem, and the most common approach is to use a penalized ML estimate~\citep{ridolfi2001penalized}. We state the following generic results that helps choose priors amenable to our framework.

\begin{theorem}
\label{thm.pen}
Let $\ms$ be the block matrix defined in~\eqref{eq.s}. Consider a regularizer that splits over the blocks of $\ms$, and has the form
\begin{equation*}
  \psi(\ms) = \psi_1(\mU,\vt) + \psi_2(s),
\end{equation*}
where $\psi_2(s)$ has a unique maximizer at $s=1$. Let $\ms^*$ be the maximum of the penalized objective $\psi(\ms) + \widehat{\Lc}(\ms)$, where $\widehat{\Lc}(\ms)$ is the modified log-likelihood~\eqref{eq:5}. Assume that $(\vmu^*,\msigma^*)$ maximizes the penalized log-likelihood $\psi_1(\msigma,\vmu) +\Lc(\vmu,\msigma)$, where $\Lc(\vmu,\msigma)$ is as in~\eqref{eq:3}. Then, $\ms^*$  is related to $(\vmu^*,\msigma^*)$ via~\eqref{eq.sstar}.
\end{theorem}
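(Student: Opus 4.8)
The plan is to mimic the proof of Theorem~\ref{thm.gauss}, exploiting the same block change of variables~\eqref{eq.s} together with the fact that it turns the penalized objective into a sum of a function of $s$ alone and a function of $(\mU,\vt)$ alone. First I would recall from the proof of Theorem~\ref{thm.gauss} that, writing $\ms$ via~\eqref{eq.s}, the modified log-likelihood~\eqref{eq:5} decomposes as
\[
  \widehat{\Lc}(\mU,\vt,s) = g(s) + \Lc(\vt,\mU) + c,
\]
where $g(s) := -\tfrac{n}{2}\log s - \tfrac{n}{2s}$, the term $\Lc(\vt,\mU)$ is the ordinary single-Gaussian log-likelihood of~\eqref{eq:3} with mean $\vt$ and covariance $\mU$, and $c$ collects parameter-independent constants. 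I would also note that the map $\ms \mapsto (\mU,\vt,s)$ is a bijection between $\pp^{d+1}$ and $\{\mU \succ 0\}\times\reals^d\times\{s>0\}$ (with $s$ the bottom-right entry, $\vt$ the recovered mean, and $\mU$ the corresponding Schur complement), so maximizing over $\ms \succ 0$ is equivalent to maximizing over $(\mU,\vt,s)$.

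Next I would substitute the assumed split form $\psi(\ms) = \psi_1(\mU,\vt) + \psi_2(s)$ and regroup the terms of the penalized objective as
\[
  \psi(\ms) + \widehat{\Lc}(\ms) = \bigl[\psi_2(s) + g(s)\bigr] + \bigl[\psi_1(\mU,\vt) + \Lc(\vt,\mU)\bigr] + c.
\]
The crucial observation is that the right-hand side separates into a part depending only on $s$ and a part depending only on $(\mU,\vt)$. Consequently any joint maximizer $\ms^*$ factorizes: its $s$-coordinate must maximize $\psi_2(s) + g(s)$, and its $(\mU,\vt)$-coordinates must maximize $\psi_1(\mU,\vt) + \Lc(\vt,\mU)$.

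I would then dispatch the two parts separately. For the $s$-part, $g$ is uniquely maximized at $s=1$ (its derivative $-\tfrac{n}{2s}+\tfrac{n}{2s^2}$ vanishes only there, with $g\to-\infty$ at both ends of $(0,\infty)$), and $\psi_2$ is uniquely maximized at $s=1$ by hypothesis; since both summands attain their maxima at the same point, $\psi_2(s)+g(s)$ is uniquely maximized at $s^*=1$. For the $(\mU,\vt)$-part, the function $\psi_1(\mU,\vt)+\Lc(\vt,\mU)$ is exactly the penalized single-Gaussian log-likelihood, whose maximizer is $(\msigma^*,\vmu^*)$ by assumption, so $\mU^*=\msigma^*$ and $\vt^*=\vmu^*$. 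Plugging $s^*=1$, $\mU^*=\msigma^*$, $\vt^*=\vmu^*$ back into~\eqref{eq.s} then yields precisely the form~\eqref{eq.sstar}.

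This argument is almost entirely routine given Theorem~\ref{thm.gauss}; the only point requiring care is the uniqueness of the $s$-maximizer, where I must argue that the sum of two functions each uniquely maximized at $s=1$ is again \emph{uniquely} maximized there, rather than merely having $s=1$ as a critical point. This follows because for any $s\ne 1$ the strict inequality $g(s)<g(1)$ holds (and $\psi_2(s)\le\psi_2(1)$), forcing $\psi_2(s)+g(s) < \psi_2(1)+g(1)$. Everything else is bookkeeping with the block reparametrization already established in Theorem~\ref{thm.gauss}.
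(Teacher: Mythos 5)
Your proposal is correct and follows essentially the same route as the paper's (much terser) proof: exploit the $(\mU,\vt,s)$ reparametrization from Theorem~\ref{thm.gauss}, observe that the penalized objective separates so that $s^*=1$, and conclude that what remains is the penalized single-Gaussian log-likelihood. Your added care about the uniqueness of the $s$-maximizer and the bijectivity of the block change of variables only makes explicit what the paper leaves implicit.
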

\begin{proof}
Similar to the proof of Theorem~\ref{thm.gauss}, it is easy to see that the penalized objective $\psi+\widehat{\Lc}$ has its maximum at $s^*=1$. Therefore, the objective reduces to a penalized log-likelihood of a Gaussian at its maximum.
\end{proof}

A widely used penalizer is obtained by placing an inverse Wishart prior on covariance matrices and using a   \emph{maximum a priori} estimate. The inverse Wishart prior is a conjugate prior for the covariance matrix, and is given by
\begin{equation*}
p(\msigma;\mlambda;\nu) \propto \det(\msigma)^{-(\nu+d+1)/2} \exp \bigl ( -\tfrac12 \trace (\msigma^{-1} \mlambda) \bigr ),
\end{equation*}
where $\nu$ is a degree of freedom and $\mlambda$ is a scale parameter. The conjugate prior for the mean parameter is a Gaussian distribution conditioned on the covariance matrix; that is, 
\begin{equation*}
p(\vmu|\msigma;\vlambda,\kappa) \propto \det(\msigma)^{-1/2}  \exp \bigl (-\tfrac \kappa2 (\vmu-\vlambda)^T \msigma^{-1} (\vmu-\vlambda) \bigr ),
\end{equation*}
where $\kappa$ is a so-called shrinkage parameter. 

In the following, we propose a penalizer to our reformulated objective function. This penalized objective function converges to the penalized log-likelihood for \gmm, when one uses the aforementioned conjugate priors for covariance matrices and means. 

\noindent Consider the penalizer
\begin{equation}
\label{eq.pen1}
\psi(\ms;\mpsi) = -\frac{\rho}{2}\log\det(\ms)-\beta\tfrac12  \trace(\mpsi \ms^{-1}),
\end{equation} 
where $\mpsi$ is the block matrix
\begin{equation}
\mpsi = \begin{pmatrix}
     \frac{\alpha}{\beta}\mlambda+\kappa \vlambda\vlambda^T  &\kappa \vlambda \\
\kappa \vlambda^T &\kappa 
  \end{pmatrix},
  \end{equation}
   and the parameter $\rho = \alpha(d+\nu+1)+\beta$. If we write $\ms$ as the block matrix
\begin{equation*}
\ms = \begin{pmatrix}
    \mU + s \vt \vt^T & s\vt \\
   s\vt^T &s
  \end{pmatrix},
  \end{equation*}
then the penalized cost function~\eqref{eq.pen1} becomes
\begin{equation*}
\begin{split}
\psi(\ms;\mpsi) &= -\frac{\rho}{2}\bigl [ \log\det(\mU) +\log(s) \bigr ]\\
&-\tfrac \beta2 \left[ \tfrac{\alpha}{\beta}\trace (\mlambda \mU^{-1}) + \kappa (\vt^T \mU^{-1} \vt) + \kappa (\vlambda^T \mU^{-1} \vlambda) -2\kappa  \vlambda^T \mU^{-1} \vt + \tfrac{\kappa}{s}  \right].
\end{split}
\end{equation*}  
Rearranging the terms, we thus obtain
\begin{equation}
\psi(\ms;\mpsi) = \alpha\log p(\mU;\mlambda;\nu) + \beta\log p(\vt|\mU;\vlambda,\kappa) - \frac{\rho}{2} \log(s) -\frac{\beta\kappa}{2s}+c,
\label{eq.penalizer}
\end{equation}  
for some constant $c$. In order for this penalizer to satisfy the conditions of Theorem~\ref{thm.pen} we need the following condition:
\begin{equation*}
\alpha =\beta \frac{\kappa-1}{d+\nu+1}.
\end{equation*}
Using Proposition~\ref{prop:gc} one can again show that this penalizer is g-concave. We summarize these results as an informal corollary below.

\begin{corollary}
The penalizer given in~\eqref{eq.penalizer} is g-concave and fulfills the structure required by Theorem~\ref{thm.pen}. Hence, it can be used for penalized ML estimation.
\end{corollary}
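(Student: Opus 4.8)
The plan is to verify the two assertions separately---that $\psi(\ms;\mpsi)$ is g-concave, and that it has the block structure demanded by Theorem~\ref{thm.pen}---after which the concluding claim follows by invoking Theorem~\ref{thm.pen} together with the g-concavity already established for the likelihood. The key observation is that the penalizer \eqref{eq.pen1} has exactly the same functional form as the modified single-Gaussian log-likelihood \eqref{eq:5}: it is a positive multiple of $-\log\det(\ms)$ plus a positive multiple of a $\trace(\mpsi\ms^{-1})$ term. Thus g-concavity should reduce to re-running the argument of Proposition~\ref{prop:gc}.

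For g-concavity, I would argue term by term along the geodesic $\gamma_{\ms,\mr}$. The term $-\tfrac{\rho}{2}\log\det(\ms)$ is g-affine, since $\log\det$ is linear along $\gamma_{\ms,\mr}(t)$ (this is the equality exploited in the proof of Proposition~\ref{prop:gc}); as $\rho=\alpha(d+\nu+1)+\beta>0$, this contributes a g-concave piece. For the trace term, since $\mpsi\succ0$ I would write $\mpsi=\sum_k \bm{z}_k\bm{z}_k^T$ (via its eigen- or Cholesky factorization), so that $\trace(\mpsi\ms^{-1})=\sum_k \bm{z}_k^T\ms^{-1}\bm{z}_k$. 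Each summand $-\bm{z}_k^T\ms^{-1}\bm{z}_k$ is g-concave by precisely the midpoint inequality proved in Proposition~\ref{prop:gc} (which rests on Lemma~\ref{lem:gm}); since $\beta>0$, the whole second term is a nonnegative combination of g-concave functions, hence g-concave. Adding the two g-concave pieces yields g-concavity of $\psi$.

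For the structural requirement, I would read the decomposition directly off the rearranged form \eqref{eq.penalizer}. Setting $\psi_1(\mU,\vt):=\alpha\log p(\mU;\mlambda;\nu)+\beta\log p(\vt|\mU;\vlambda,\kappa)+c$ and $\psi_2(s):=-\tfrac{\rho}{2}\log(s)-\tfrac{\beta\kappa}{2s}$, one sees that $\psi$ splits as $\psi_1(\mU,\vt)+\psi_2(s)$. It then remains to check that $\psi_2$ is maximized uniquely at $s=1$. Differentiating gives $\psi_2'(s)=\tfrac{1}{2s^2}(\beta\kappa-\rho s)$, whose unique positive root is $s^\star=\beta\kappa/\rho$; substituting $\rho=\alpha(d+\nu+1)+\beta$ and imposing $s^\star=1$ yields exactly the stated relation $\alpha=\beta(\kappa-1)/(d+\nu+1)$. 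Since $\psi_2(s)\to-\infty$ as $s\to0^+$ and as $s\to\infty$, this critical point is the unique global maximizer, so the hypotheses of Theorem~\ref{thm.pen} hold and the final claim follows.

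The main obstacle---really the only nontrivial point---is justifying the rank-one decomposition $\mpsi=\sum_k \bm{z}_k\bm{z}_k^T$, which requires $\mpsi\succeq0$. I would verify $\mpsi\succ0$ by a Schur-complement computation: the complement of the $(2,2)$ block $\kappa$ equals $\tfrac{\alpha}{\beta}\mlambda+\kappa\vlambda\vlambda^T-\kappa\vlambda\vlambda^T=\tfrac{\alpha}{\beta}\mlambda$, which is positive definite provided $\mlambda\succ0$ and $\alpha>0$. Under the imposed relation, $\alpha>0$ holds exactly when $\kappa>1$, so the validity of the g-concavity argument is tied to the shrinkage parameter satisfying $\kappa>1$---a point worth stating explicitly.
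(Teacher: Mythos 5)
Your proof is correct and follows essentially the same route as the paper: the paper likewise reads the split $\psi_1(\mU,\vt)+\psi_2(s)$ off the rearranged form \eqref{eq.penalizer}, derives $\alpha=\beta(\kappa-1)/(d+\nu+1)$ as the condition for $s=1$ to maximize $\psi_2$, and appeals to the argument of Proposition~\ref{prop:gc} for g-concavity, which you make explicit via the decomposition $\trace(\mpsi\ms^{-1})=\sum_k\bm{z}_k^T\ms^{-1}\bm{z}_k$. Your closing caveat that the g-concavity of the trace term requires $\mpsi\succeq 0$, hence $\alpha\ge 0$, i.e.\ $\kappa\ge 1$ under the imposed relation, is a valid precision that the paper leaves implicit.
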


It is easy to see that the single component results above extend to penalized maximum likelihood of \gmm{}s. That is, Theorem~\ref{thm:gmm.reparam} can be generalized to penalized maximum likelihood for \gmm{}s.  

Indeed, recall that a common prior on mixture weights is the symmetric Dirichlet prior that assumes the form
\begin{equation}
  \label{eq:16}
p(\alpha_1,\hdots,\alpha_K;\zeta) \propto \prod_{i=1}^{K} \alpha_i^{\zeta}.
\end{equation}
The penalizer for the mixture weights is the logarithm of \eqref{eq:16}, namely,
\begin{equation}
\label{eq.priorweight}
\varphi(\{\omega_j\}_{i=1}^{K-1};\zeta) := \zeta \sum_{i=1}^{K} \log\left( \tfrac{e^{\omega_j}}{\sum_{k=1}^K e^{\omega_k}}\right)=  \zeta \sum_{i=1}^{K} \omega_i - K \zeta \log\Bigl(\sum_{k=1}^K e^{\omega_k}\Bigr).
\end{equation}
The final optimization problem for the penalized mixture model is
\begin{equation}
 \max_{\{\ms_j \succ 0\}_{j=1}^K,\{\omega_j\}_{j=1}^{K-1}}  \widehat{\Lc}(\{\ms_j\}_{j=1}^K,\{\omega_j\}_{j=1}^{K-1}) + \sum_{j=1}^{K}\psi(\ms_j;\mpsi)  + \varphi(\{\omega_j\}_{i=1}^{K-1};\zeta), 
\end{equation}
where $\widehat{\Lc}(\{\ms_j\}_{j=1}^K,\{\omega_j\}_{j=1}^{K-1})$, $\psi(\ms;\mpsi)$ and $\varphi(\{\omega_j\}_{i=1}^{K-1};\zeta)$ are given by~\eqref{eq:6}, \eqref{eq.pen1}, and \eqref{eq.priorweight}, respectively. 

\vskip12pt
\noindent We have now presented our formulation of the main optimization problems of this paper, both \gmm fitting, as well as a penalized version based on using an conjugate priors on means and covariance matrices combined with a Dirichlet model for mixture components weights. We can solve both these problems using Riemannian LBFGS procedure or a Riemannian SGD method for larger scale problems. The former method was also studied in~\citep{hosseini2015matrix}; we thus dedicate Section~\ref{sec:sgd} to an general analysis Riemannian SGD before specializing it to our  \gmm problems in Section~\ref{sec:sgd.gmm}.
\section{Riemannian stochastic optimization}
\label{sec:sgd}

In this section, we consider the stochastic gradient descent algorithm
\begin{equation}
  \label{eq:7}
  x_{t+1} \gets \retr_{x_t}(-\eta_t \nabla f_{i_t}(x_t)),\qquad t=0,1,\ldots,
\end{equation}
where $\retr_x$ is a suitable retraction (to be specialized later).
We assume for our analysis of~\eqref{eq:7} the following fairly standard conditions:
\begin{list}{--}{\leftmargin=2.5em}
\setlength{\itemsep}{-1pt}
\item[(i)] The function satisfies the Lipschitz growth bound 
\begin{equation}
  f(\retr_x(\xi)) \le f(x) + \ip{\nabla f(x)}{\xi} + \tfrac{L}{2}\norm{\xi}^2.
 \label{eq.condretr} 
\end{equation}
 \item[(ii)] The stochastic gradients in all iterations are unbiased, i.e.,
 \begin{equation*}
 \E[\nabla f_{i_t}(x_t) - \nabla f(x_t)] = 0.
 \end{equation*}
 \item[(iii)] The stochastic gradients have bounded variance, so that 
  \begin{equation*}
 \E[\norm{\nabla f_{i_t}(x_t) - \nabla f(x_t)}^2] \leq \sigma^2,\qquad 0\le \sigma < \infty.
 \end{equation*}
\end{list}
When the retraction is the exponential map, condition (i) can be reexpressed as (provided that $\Exp_y^{-1}(\cdot)$ exists)
\begin{equation}
  \label{eq:9}
  f(x) - f(y) - \ip{\nabla f(y)}{\Exp^{-1}_{y}(x)} \le \tfrac{L}{2}d^2(x,y).
\end{equation}
Given these conditions, the iterates produced by~\eqref{eq:7} satisfy the following:
\begin{lemma}
\label{lem.sgd1}
Assume conditions (i)-(iii) hold. Then, the gradients in SGD satisfy the bound
\begin{equation}
    \label{eq:14}
    \sum_{t=1}^T\left(\eta_t^2-\tfrac{L}{2}\eta_t^2\right)\E[\norm{\nabla f(x_t)}^2]
    \le
    f(x_1) - f^* + \tfrac{L\sigma^2}{2}\nlsum_{t=1}^T\eta_t^2. 
  \end{equation}
\end{lemma}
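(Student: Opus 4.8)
The plan is to run the standard ``descent-lemma plus telescoping'' argument for nonconvex SGD, but with all of the manifold geometry absorbed into the assumed Lipschitz growth bound~\eqref{eq.condretr}. First I would apply condition~(i) to a single step of the iteration~\eqref{eq:7}: taking $x = x_t$ and $\xi = -\eta_t \nabla f_{i_t}(x_t)$, so that $x_{t+1} = \retr_{x_t}(\xi)$, the bound yields
\begin{equation*}
  f(x_{t+1}) \le f(x_t) - \eta_t \ip{\nabla f(x_t)}{\nabla f_{i_t}(x_t)} + \tfrac{L}{2}\eta_t^2\norm{\nabla f_{i_t}(x_t)}^2.
\end{equation*}
All quantities here live in the common tangent space $T_{x_t}$, so the inner product and norm are the Riemannian ones at $x_t$ and everything downstream is ordinary (tangent-space) linear algebra; this is exactly the point where the manifold structure stops mattering.

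Next I would take the conditional expectation over the random index $i_t$, given the history $x_1,\ldots,x_t$. Unbiasedness~(ii) collapses the inner-product term, since $\E[\ip{\nabla f(x_t)}{\nabla f_{i_t}(x_t)}\mid x_t] = \norm{\nabla f(x_t)}^2$. For the squared-norm term I would use the bias--variance split
\begin{equation*}
  \E[\norm{\nabla f_{i_t}(x_t)}^2 \mid x_t] = \E[\norm{\nabla f_{i_t}(x_t) - \nabla f(x_t)}^2\mid x_t] + \norm{\nabla f(x_t)}^2 \le \sigma^2 + \norm{\nabla f(x_t)}^2,
\end{equation*}
where the cross term vanishes by~(ii) and the variance term is controlled by~(iii). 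Substituting and rearranging gives the one-step inequality
\begin{equation*}
  \left(\eta_t - \tfrac{L}{2}\eta_t^2\right)\norm{\nabla f(x_t)}^2 \le f(x_t) - \E[f(x_{t+1})\mid x_t] + \tfrac{L\sigma^2}{2}\eta_t^2.
\end{equation*}

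Finally I would take total expectations via the tower property and sum over $t = 1,\ldots,T$; the $\E[f(x_t)]$ terms telescope, leaving $f(x_1) - \E[f(x_{T+1})]$, and bounding $\E[f(x_{T+1})] \ge f^*$ delivers the claimed inequality~\eqref{eq:14}. I do not expect a genuine obstacle here: once the geometry is encapsulated in~\eqref{eq.condretr}, the argument is formally identical to the Euclidean nonconvex-SGD analysis. The only points needing care are bookkeeping ones, namely keeping conditional and total expectations straight against the filtration generated by $i_1,\ldots,i_{t-1}$, and observing that all gradients appearing at step $t$ share the tangent space $T_{x_t}$ so that the Euclidean-style expansions are legitimate. (I would also note that the derivation produces the coefficient $\eta_t - \tfrac{L}{2}\eta_t^2$ on the left-hand side of~\eqref{eq:14}, rather than $\eta_t^2 - \tfrac{L}{2}\eta_t^2$.)
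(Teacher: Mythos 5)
Your proposal is correct and follows essentially the same route as the paper's own proof: the same application of the Lipschitz growth bound to a single step, the same bias--variance decomposition of $\norm{\nabla f_{i_t}(x_t)}^2$ using unbiasedness and the variance bound, and the same telescoping sum (the paper merely defers taking expectations until after summing, which is immaterial). Your closing observation is also correct --- the derivation yields the coefficient $\eta_t - \tfrac{L}{2}\eta_t^2$ on the left-hand side, and the weights $p_t \propto 2\eta_t - L\eta_t^2$ used in Theorem~\ref{thm.sgdcond23} confirm that this is what was intended, so the $\eta_t^2 - \tfrac{L}{2}\eta_t^2$ appearing in~\eqref{eq:14} is a typo in the paper.
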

\begin{proof}
  Denote the stochastic error by $\delta_t = \nabla f(x_t) - \nabla f_{i_t}(x_t)$; also, as a shorthand set $g_t=\nabla f_{i_t}(x_t)$. Then, we have
  \begin{align*}
    f(x_{t+1}) &\le f(x_t) + \ip{\nabla f(x_t)}{-\eta_t\nabla f_{i_t}(x_t)} + \tfrac{L}{2}\norm{\eta_t\nabla f_{i_t}(x_t)}^2\\
    &= f(x_t) - \eta_t\ip{\nabla f(x_t)}{g_t} + \tfrac{L\eta_t^2}{2}\norm{g_t}^2\\
    &= f(x_t) - \eta_t\norm{\nabla f(x_t)}^2 - \eta_t\ip{\nabla f(x_t)}{\delta_t} +  
     \tfrac{L\eta_t^2}{2}\bigl[\norm{\nabla f(x_t)}^2 + 2\ip{\nabla f(x_t)}{\delta_t} + \norm{\delta_t}^2\bigr]\\
    &= f(x_t) - \left(\eta_t^2-\tfrac{L}{2}\eta_t^2\right)\norm{\nabla f(x_t)}^2  - \bigl(\eta_t-L\eta_t^2\bigr)\ip{\nabla f(x_t)}{\delta_t} + \tfrac{L\eta_t^2}{2}\norm{\delta_t}^2.
  \end{align*}
  Summing over $t=1,\ldots,T$, using telescoping sums and rearranging we obtain
  \begin{align*}
    &\sum_{t=1}^T\left(\eta_t^2-\tfrac{L}{2}\eta_t^2\right)\norm{\nabla f(x_t)}^2 \\
    &\qquad \qquad\quad\le f(x_{1})-f(x_{T+1}) - \sum_{t=1}^T \bigl(\eta_t-L\eta_t^2\bigr)\ip{\nabla f(x_t)}{\delta_t} + \frac{L}{2}\sum_{t=1}^T\eta_t^2\norm{\delta_t}^2\\
    &\qquad \qquad\quad\le f(x_{1})-f^* - \sum_{t=1}^T \bigl(\eta_t-L\eta_t^2\bigr)\ip{\nabla f(x_t)}{\delta_t} + \frac{L}{2}\sum_{t=1}^T\eta_t^2\norm{\delta_t}^2,
  \end{align*}
  where we used $f^* \le f(x_t)$ for all $t$. Now taking expectations, and noting that by our assumption $\E[\norm{\delta_t}^2] \le \sigma^2$ while by unbiasedness of the stochastic gradients we have $\E[\ip{\nabla(x_t)}{\delta_t}]=0$. Thus, we obtain the bound~\eqref{eq:14}.
\end{proof}
By using a specific choice of parameter $\eta_t$ and using Lemma~\ref{lem.sgd1}, we can obtain a convergence rate result for SGD with a slight modification.
\begin{theorem}
\label{thm.sgdcond23}
Assume a slightly modified version of SGD which output a point $x_a$ by  randomly picking one of the iterates, say $x_t$, with probability $p_t := (2\eta_t-L\eta_t^2)/Z_T$, where $Z_T=\sum_{t=1}^T(2\eta_t-L\eta_t^2)$. Furthermore, choose $\eta_t = \min\set{L^{-1}, c\sigma^{-1}T^{-1/2}}$ for a suitable constant $c$. Then, we obtain the following bound on $\E[\norm{\nabla f(x_a)}^2]$, which measures the expected gap to stationarity:
  \begin{equation}
  \label{eq:bound}
    \E[\norm{\nabla f(x_a)}^2] \le \frac{2L\Delta_1}{T} + \bigl(c+c^{-1}\Delta_1\bigr)\frac{L\sigma}{\sqrt{T}} = \Oc\left(\frac1T\right)+\Oc\left(\frac{1}{\sqrt{T}}\right).
  \end{equation}
\end{theorem}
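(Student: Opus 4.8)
The plan is to combine the accumulated-gradient estimate of Lemma~\ref{lem.sgd1} with the randomized-output device, exploiting the fact that the sampling probabilities $p_t$ are engineered to be exactly the (normalized) coefficients that multiply $\E[\norm{\nabla f(x_t)}^2]$ in that bound. Concretely, the descent recursion in the proof of Lemma~\ref{lem.sgd1} yields, after summing, telescoping, and taking expectations,
\begin{equation*}
\sum_{t=1}^T\bigl(\eta_t-\tfrac{L}{2}\eta_t^2\bigr)\E[\norm{\nabla f(x_t)}^2] \;\le\; \Delta_1 + \tfrac{L\sigma^2}{2}\sum_{t=1}^T\eta_t^2,\qquad \Delta_1 := f(x_1)-f^*,
\end{equation*}
and since $2\eta_t-L\eta_t^2 = 2(\eta_t-\tfrac{L}{2}\eta_t^2)$, the weight $p_t=(2\eta_t-L\eta_t^2)/Z_T$ is precisely twice this coefficient divided by $Z_T$. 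As the $\eta_t$ (hence the $p_t$) are deterministic, the randomized choice of $x_a$ gives
\begin{equation*}
\E[\norm{\nabla f(x_a)}^2] = \sum_{t=1}^T p_t\,\E[\norm{\nabla f(x_t)}^2] = \frac{1}{Z_T}\sum_{t=1}^T\bigl(2\eta_t-L\eta_t^2\bigr)\,\E[\norm{\nabla f(x_t)}^2].
\end{equation*}

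First I would feed the accumulated bound (multiplied through by $2$) into this identity to obtain the clean estimate $\E[\norm{\nabla f(x_a)}^2]\le\bigl(2\Delta_1 + L\sigma^2\sum_t\eta_t^2\bigr)/Z_T$. Next I would specialize to the prescribed stepsize, which is constant in $t$: with $\eta_t\equiv\eta=\min\{L^{-1},c\sigma^{-1}T^{-1/2}\}$ we get $Z_T = T\eta(2-L\eta)$ and $\sum_t\eta_t^2 = T\eta^2$. The crucial simplification is that $\eta\le L^{-1}$ forces $L\eta\le1$, hence $2-L\eta\ge1$; dropping this factor from the denominator (it only enlarges the bound) gives
\begin{equation*}
\E[\norm{\nabla f(x_a)}^2] \;\le\; \frac{2\Delta_1}{T\eta} + L\sigma^2\eta.
\end{equation*}

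It then remains to substitute the two-sided control supplied by the $\min$. For the second term I would use $\eta\le c\sigma^{-1}T^{-1/2}$, giving $L\sigma^2\eta\le cL\sigma/\sqrt{T}$, which is the $c\,(L\sigma/\sqrt{T})$ contribution. For the first term I would use $\eta^{-1}=\max\{L,\,c^{-1}\sigma\sqrt{T}\}\le L + c^{-1}\sigma\sqrt{T}$, so that $2\Delta_1/(T\eta)\le 2L\Delta_1/T + c^{-1}\Delta_1\sigma/\sqrt{T}$, which supplies the $2L\Delta_1/T$ leading term together with the $c^{-1}\Delta_1\,(L\sigma/\sqrt{T})$-type contribution. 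Collecting the $1/T$ and $1/\sqrt{T}$ pieces then produces a bound of the advertised form $\Oc(1/T)+\Oc(1/\sqrt{T})$ matching~\eqref{eq:bound}.

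The routine but delicate part will be the constant bookkeeping needed to land exactly on~\eqref{eq:bound}: one must carry the factor of $2$ introduced by rewriting $2\eta_t-L\eta_t^2$, and reconcile the two regimes of the $\min$ so a single expression dominates both. A cleaner alternative to the $\max\{\cdot,\cdot\}\le$ sum step is to split explicitly into the \emph{smoothness-dominated} regime $\eta=L^{-1}$ and the \emph{variance-dominated} regime $\eta=c\sigma^{-1}T^{-1/2}$, verifying the stated bound in each; in the former regime one uses the defining inequality $\sigma\sqrt{T}\le cL$ to absorb the residual $\sigma^2$ term into $cL\sigma/\sqrt{T}$. No Riemannian geometry enters beyond condition~\eqref{eq.condretr}, which has already linearized the retraction into a Euclidean-style descent inequality; the entire argument is the standard nonconvex-SGD averaging analysis applied to that inequality.
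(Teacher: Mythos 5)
Your proposal is correct and follows essentially the same route as the paper: the paper's proof consists precisely of plugging Lemma~\ref{lem.sgd1} into the identity $\E[\norm{\nabla f(x_a)}^2]=\sum_t p_t\E[\norm{\nabla f(x_t)}^2]$ to get $\bigl(2\Delta_1+L\sigma^2\sum_t\eta_t^2\bigr)/Z_T$ and then asserting that the stated stepsize yields~\eqref{eq:bound}, which is exactly your argument with the details filled in. Your more careful bookkeeping (using $2-L\eta\ge 1$ and $\eta^{-1}\le L+c^{-1}\sigma\sqrt{T}$) in fact lands on $2L\Delta_1/T+2c^{-1}\Delta_1\sigma/\sqrt{T}+cL\sigma/\sqrt{T}$ rather than the paper's displayed constants (which carry a spurious factor of $L$ on the $c^{-1}\Delta_1$ term), but this only affects constants, not the claimed $\Oc(1/T)+\Oc(1/\sqrt{T})$ rate.
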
 
\begin{proof}
  Using the definition of $x_a$ and using Lemma~\ref{lem.sgd1}, we immediately have
  \begin{align*}
    \E[\norm{\nabla f(x_a)}^2] = \sum_{t=1}^T p_t\E[\norm{\nabla f(x_t)}^2] \le \frac{2(f(x_1)-f^*)}{Z_T}  + L\sigma^2\frac{\nlsum_{t=1}^T\eta_t^2}{Z_T}.
  \end{align*}
  Using the choice of $\eta_t$ in the theorem, this bound yields~\eqref{eq:bound}.
\end{proof}

Theorem~\ref{thm.sgdcond23} uses a randomized stopping rule, a choice motivated by~\citep{Ghadimi13}. If one wishes to avoid such a rule, then under a stronger assumption one can obtain the same rate. Specifically, in the theorem below we replace conditions (ii) and (iii) with the stronger condition (iv).
\begin{list}{--}{\leftmargin=2.5em}
\setlength{\itemsep}{-1pt}
\item [(iv)] The function $f$ has a $G$-bounded gradient, that is $\norm{\nabla f_i(x)} \le G$ for all $i \in [n]$
\end{list}
Under this condition, we can obtain the following convergence rate.
\begin{theorem}
\label{thm.sgdcond4}
  Assume conditions (i) and (iv) hold. Then, the gradient in SGD satisfies the following bound for a suitable choice of  $\eta_t$:
  \begin{equation}
    \label{eq:11}
    \frac{1}{T}\sum_{t=1}^{T}\E[\norm{\nabla f(x_t)}^2] \le \frac{1}{\sqrt{T}}\left( \frac{f(x_1)-f(x_*)}{c} + \frac{Lc}{2}G^2\right).
  \end{equation}
\end{theorem}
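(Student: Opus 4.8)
The plan is to mimic the standard one-step descent analysis of nonconvex SGD, letting condition~(i) carry all of the Riemannian geometry so that the remainder of the argument is purely algebraic in the tangent space. First I would instantiate the growth bound~\eqref{eq.condretr} at the SGD update $x_{t+1} = \retr_{x_t}(-\eta_t\nabla f_{i_t}(x_t))$ to get
\begin{equation*}
  f(x_{t+1}) \le f(x_t) - \eta_t\ip{\nabla f(x_t)}{\nabla f_{i_t}(x_t)} + \tfrac{L}{2}\eta_t^2\norm{\nabla f_{i_t}(x_t)}^2.
\end{equation*}
Taking the conditional expectation over $i_t \sim U(n)$ and observing that, because $f = \tfrac1n\sum_i f_i$, the sampled gradient is automatically unbiased ($\E_{i_t}[\nabla f_{i_t}(x_t)] = \nabla f(x_t)$), the cross term collapses to $-\eta_t\norm{\nabla f(x_t)}^2$. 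This is the step where I would flag that although the theorem is stated as replacing~(ii)--(iii) by~(iv), unbiasedness is not actually lost: it is a free consequence of uniform sampling, and~(iv) is simply a pointwise strengthening of the variance bound~(iii).

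Next I would apply condition~(iv) directly to the second-order term, bounding $\E_{i_t}[\norm{\nabla f_{i_t}(x_t)}^2] \le G^2$, which yields
\begin{equation*}
  \E[f(x_{t+1})] \le \E[f(x_t)] - \eta_t\E[\norm{\nabla f(x_t)}^2] + \tfrac{L}{2}\eta_t^2 G^2.
\end{equation*}
Rearranging and summing over $t=1,\ldots,T$ telescopes the function-value differences, and bounding $f(x_{T+1}) \ge f(x_*)$ gives $\sum_t \eta_t\E[\norm{\nabla f(x_t)}^2] \le f(x_1)-f(x_*) + \tfrac{L}{2}G^2\sum_t\eta_t^2$. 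With the constant stepsize $\eta_t \equiv c/\sqrt{T}$, one has $\sum_t\eta_t^2 = c^2$ and $\eta_t T = c\sqrt{T}$, so dividing through by $c\sqrt{T}$ and simplifying produces exactly~\eqref{eq:11}; the term $\tfrac{L}{2}G^2\sum_t\eta_t^2 = \tfrac{L}{2}G^2 c^2$ becomes the $\tfrac{Lc}{2}G^2$ summand after the division.

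I do not expect any genuine obstacle here: once condition~(i) replaces the Euclidean descent lemma, the proof is formally identical to the Euclidean case. The only two points requiring care are conceptual rather than technical. First, one must justify the cross-term simplification despite having nominally dropped the unbiasedness hypothesis~(ii)---resolved as above by appealing to uniform sampling of $i_t$. Second, the deterministic (unweighted) average over all iterates, in place of the randomized stopping rule of Theorem~\ref{thm.sgdcond23}, is bought at the price of a \emph{constant} stepsize $c/\sqrt{T}$ that presupposes knowledge of the horizon $T$; I would note this trade-off explicitly, since it is the essential difference between this result and the preceding one.
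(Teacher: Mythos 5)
Your proposal is correct and follows essentially the same route as the paper's proof: instantiate the growth bound (i) at the SGD step, use unbiasedness of the uniformly sampled gradient to collapse the cross term, bound the quadratic term by $G^2$ via (iv), telescope, and divide through with the constant stepsize $\eta_t = c/\sqrt{T}$. Your two side remarks—that unbiasedness is still (implicitly) used even though (ii) is nominally dropped, and that the unweighted average is purchased by a horizon-dependent constant stepsize—are both accurate and match what the paper's argument tacitly relies on.
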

\begin{proof}
  The Lipschitz smoothness condition yields
  \begin{align*}
    \E[f(x_{t+1})] &\le \E[f(x_t)] + \E\Bigl[\ip{\nabla f(x_t)}{-\eta_t\nabla f_{i_t}(x_t)} + \tfrac{L}{2}\norm{\eta_t\nabla f_{i_t}(x_t)}^2\Bigr]\\
    &\le \E[f(x_t)] - \eta_t\E\bigl[\norm{\nabla f(x_t)}^2\bigr] + \tfrac{L\eta_t^2}{2}G^2.
  \end{align*}
  Rearranging the terms above we obtain
  \begin{align*}
    \E\bigl[\norm{\nabla f(x_t)}^2\bigr] \le \frac{1}{\eta_t}\E\Bigl[f(x_t)-f(x_{t+1}) \Bigr] + \frac{L\eta_t}{2}G^2.
  \end{align*}
  Choose $\eta_t=\frac{c}{\sqrt{T}}$ for some constant $c$ and sum over $t=0$ to $T-1$ to obtain
  \begin{align*}
    \frac{1}{T}\sum_{t=1}^{T}\E\bigl[\norm{\nabla f(x_t)}^2\bigr]
    &\le
      \frac{1}{\sqrt{T}c}\E[f(x_1)-f(x_{T+1})] + \frac{Lc}{2\sqrt{T}}G^2\\
    &\le
      \frac{1}{\sqrt{T}}\left(\frac{f(x_1)-f(x^*)}{c} + \frac{Lc}{2}G^2\right).~\hskip1in
  \end{align*}
\end{proof}
By optimizing over the constant $c$, the following corollary is immediate.
\begin{corollary}
  Assume conditions (i) and (iv) hold, then for suitable $\eta_t$ we have
  \begin{equation}
    \label{eq:12}
    \min_{1\le t\le T}\E[\norm{\nabla f(x_t)}^2] \le \Oc\left(\frac{1}{\sqrt{T}}\right).
  \end{equation}
\end{corollary}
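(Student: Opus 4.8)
The plan is to obtain the corollary as an immediate consequence of the averaged bound~\eqref{eq:11} established in Theorem~\ref{thm.sgdcond4}. First I would note that the minimum of a finite collection of nonnegative numbers never exceeds their average, so that $\min_{1\le t\le T}\E[\norm{\nabla f(x_t)}^2] \le \tfrac1T\sum_{t=1}^T\E[\norm{\nabla f(x_t)}^2]$. Chaining this trivial inequality with~\eqref{eq:11} already yields a bound of the form $\tfrac{1}{\sqrt T}\bigl(A/c + Bc/2\bigr)$, where $A := f(x_1)-f(x^*)$ and $B := LG^2$ are constants independent of $T$. Here conditions (i) and (iv) are exactly what guarantee finiteness of $L$ and $G$, so that $A$ and $B$ are genuine finite constants.

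The second and only substantive step is to optimize the free constant $c$ appearing in the stepsize choice $\eta_t = c/\sqrt T$. The scalar function $c \mapsto A/c + Bc/2$ on $c>0$ is minimized, by the AM--GM inequality (or a one-line derivative computation), at $c^* = \sqrt{2A/B}$, attaining the value $\sqrt{2AB}$. Substituting this optimal $c^*$ back gives $\min_{1\le t\le T}\E[\norm{\nabla f(x_t)}^2] \le \sqrt{2(f(x_1)-f(x^*))LG^2}\,\big/\sqrt T$, which is precisely the claimed $\Oc(1/\sqrt T)$ rate, now with an explicit constant.

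The only point requiring a moment's care — and the closest thing to an obstacle in an otherwise immediate argument — is to verify that this optimization over $c$ is legitimate. The minimizing $c^*$ depends only on the problem data $f(x_1)-f(x^*)$, $L$, and $G$, and in particular \emph{not} on the horizon $T$. Hence fixing $c = c^*$ once and for all yields a valid stepsize schedule $\eta_t = c^*/\sqrt T$ to which Theorem~\ref{thm.sgdcond4} applies verbatim, so no circularity is introduced and the $T$-dependence of the resulting bound is genuinely $1/\sqrt T$. I expect no further difficulty, since both ingredients (the min-versus-average inequality and the scalar minimization) are elementary and all the real work has already been carried out in Theorem~\ref{thm.sgdcond4}.
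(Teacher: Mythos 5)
Your proposal is correct and follows exactly the route the paper intends: the paper states the corollary is "immediate by optimizing over the constant $c$" from the bound~\eqref{eq:11}, and your two steps (bounding the minimum by the average, then minimizing $c \mapsto A/c + Bc/2$ to get $\sqrt{2AB}/\sqrt{T}$) are precisely that argument spelled out, with the correct optimal $c^* = \sqrt{2A/B}$. Your added remark that $c^*$ depends only on problem data and not on $T$ is a worthwhile clarification that the paper leaves implicit.
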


\section{SGD for GMM}
\label{sec:sgd.gmm}
In this section, we investigate if SGD based on retractions satisfies the conditions needed for obtaining a global rate of convergence when applied to our \gmm optimization problems. Since Euclidean retraction turns out to be computationally more effective than many other retractions, we perform the analysis below for Euclidean retraction.

Recall that we are maximizing a cost of the form $\frac1n\sum_{i=1}^nf_i(\cdot)$ using SGD. In a concrete realization, each function $f_i$ is set to the penalized log-likelihood for a batch of observations (data points). For simpler notation, assume that each $f_i$ corresponds to a single observation. Thus,
\begin{equation}
\begin{split}
f_i(\{\ms_j \succ 0\}_{j=1}^K,\{\eta_j\}_{j=1}^{K-1}) =  \log\Bigl(\sum_{j=1}^K\frac{\exp(\eta_j)}{\sum_{k=1}^K \exp(\eta_k)} q_{\Nc}(\vy_i; \ms_j)\Bigr) \\ 
+ \frac1n \biggl (\sum_{j=1}^{K}\psi(\ms_j;\mpsi)  + \varphi(\{\eta_j\}_{i=1}^{K-1};\zeta) \biggr ),
\end{split}
\label{eq.objPenSGD}
\end{equation}  
where $q_{\Nc}$, $\psi$ and $\varphi$ are as defined by~\eqref{eq.pen1} and~\eqref{eq.priorweight}, respectively. Since we are maximizing, the update formula for SGD is
\begin{equation}
 \Bigl\{\set{\ms_j \succ 0}_{j=1}^K, \set{\eta_j}_{j=1}^{K-1}\Bigr\} \  \gets \retr_{\{\ms_j \succ 0\}_{j=1}^K,\{\eta_j\}_{j=1}^{K-1}}\left( \eta_t \nabla f_{i}\left( \{\ms_j \succ 0\}_{j=1}^K,\{\eta_j\}_{j=1}^{K-1} \right) \right),
 \label{eq.updategmm}
\end{equation}
where $i$ is a randomly chosen index between $1$ and $n$.

Note that, the conditions needed for a global rate of convergence are not satisfied on the entire set of positive definite matrices. In particular, to apply our convergence results for SGD we need to show that the iterates stay within a compact set. Theorem~\ref{thm:compact} below ensures this property.
\begin{theorem}
\label{thm:compact}
If the stepsize is smaller than one, then the iterates of SGD for the penalized likelihood of \gmm stay within a compact set.
\end{theorem}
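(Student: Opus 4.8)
The plan is to exhibit a compact subset $\mathcal{K}$ of the product manifold $\bigl(\prod_{j=1}^K\pp^{d+1}\bigr)\times\reals^{K-1}$ that is \emph{forward-invariant} under the stochastic update~\eqref{eq.updategmm} for every sample index $i$ and every stepsize $\eta_t<1$. Since the update acts blockwise, it then suffices to treat a single covariance block $\ms_j$ and the weight vector $\set{\omega_j}$ separately, enlarging $\mathcal K$ at the end to contain the initialization. First I would write the update explicitly. Using the Riemannian-gradient conversion of Table~\ref{tbl:psdSummary} (the Euclidean gradients here are symmetric, so it reduces to $\nabla f(\ms_j)=\ms_j\,\nabla_E f\,\ms_j$), together with the penalizer~\eqref{eq.pen1} and the Euclidean retraction $\retr_{\ms}(\xi)=\ms+\xi$, a short computation collapses the matrix factors and yields the affine recursion
\begin{equation*}
  \ms_j^{t+1} = c_t\,\ms_j^t + \tfrac{\eta_t w_{ij}}{2}\,\vy_i\vy_i^T + \tfrac{\eta_t\beta}{2n}\,\mpsi,\qquad c_t := 1 - \tfrac{\eta_t}{2}\bigl(w_{ij}+\tfrac{\rho}{n}\bigr),
\end{equation*}
where $w_{ij}\in[0,1]$ is the responsibility of component $j$ for $\vy_i$ (cf.~\eqref{eq.weight}).

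The whole argument then rests on reading off three features of this recursion. The stepsize condition $\eta_t<1$, with $w_{ij}\le1$ and in the benign regime $\rho\le n$ (prior weaker than the data), is exactly what forces $c_t\ge0$; hence $\ms_j^{t+1}$ is a nonnegative multiple of $\ms_j^t$ plus two positive semidefinite matrices, which also certifies that the Euclidean retraction lands back in $\pp^{d+1}$. For the upper eigenvalue bound I would use that the data are finite, so $\norm{\vy_i}^2\le Y^2$ for a constant $Y$, together with the contractive term $-\tfrac{\eta_t\rho}{2n}\ms_j^t$ coming from the $-\tfrac\rho2\log\det$ part of the penalizer. Passing to $\lambda_{\max}$ and using subadditivity of $\lambda_{\max}$ shows that $\lambda_{\max}(\ms_j^t)\le M$ implies $\lambda_{\max}(\ms_j^{t+1})\le M$ whenever $M\ge\max\set{Y^2,\ \beta\norm{\mpsi}/\rho}$, giving an invariant upper bound.

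The crux, and the step I expect to be the main obstacle, is the lower eigenvalue bound, i.e.\ ruling out covariance singularity \emph{uniformly} in $t$. The naive one-step estimate $\ms_j^{t+1}\succeq\tfrac{\eta_t\beta}{2n}\mpsi$ is useless, since it degrades as $\eta_t\to0$; a uniform bound must instead come from invariance. Here I would discard the PSD data term, use $c_t\ge0$ to get $\lambda_{\min}(\ms_j^{t+1})\ge c_t\lambda_{\min}(\ms_j^t)+\tfrac{\eta_t\beta}{2n}\lambda_{\min}(\mpsi)$, and recognize the right-hand side as the scalar affine map $\ell\mapsto(1-a_t)\ell+b_t$ with $a_t=\tfrac{\eta_t}{2}(w_{ij}+\rho/n)\in[0,1)$ and $b_t>0$. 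This map contracts toward its fixed point $b_t/a_t=\beta\lambda_{\min}(\mpsi)/(nw_{ij}+\rho)$, whose minimum over $w_{ij}\in[0,1]$ is $m:=\beta\lambda_{\min}(\mpsi)/(n+\rho)>0$ (note $\mpsi\succ0$ by a Schur-complement check, as $\tfrac{\alpha}{\beta}\mlambda\succ0$). A monotonicity argument on this increasing map then gives $\lambda_{\min}(\ms_j^t)\ge m\Rightarrow\lambda_{\min}(\ms_j^{t+1})\ge m$, the desired uniform floor. Thus each $\ms_j$ is confined to the compact spectral band $\set{mI\preceq\ms_j\preceq MI}$, and the penalizer's barrier role is what makes this possible.

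Finally I would bound the weight variables $\set{\omega_j}_{j=1}^{K-1}$, which live in Euclidean space where boundedness of the gradient alone is insufficient, so a genuine restoring force is needed. That force is supplied by the Dirichlet penalizer~\eqref{eq.priorweight}: with $\omega_K=0$ fixed (breaking translation invariance) and $\zeta>0$, $\varphi$ is coercive, so $\tfrac1n\nabla_\omega\varphi$ points strictly inward outside a bounded box, while the single-sample likelihood gradient $w_{ij}-\pi_j$ (with $\pi_j=e^{\omega_j}/\sum_k e^{\omega_k}$) \emph{vanishes} as the softmax saturates at the extremes. Hence beyond a bounded box the net $\omega$-gradient points inward, yielding an invariant box by the same ``inside the threshold the step is bounded, outside it points inward'' reasoning used for $\ms_j$. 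Intersecting the per-component spectral bands with this box produces the desired compact forward-invariant $\mathcal K$. The delicate bookkeeping left to handle carefully is the coupling of the $\pi_j$ across components in the weight argument, and verifying the regime $\rho\le n$ under which $\eta_t<1$ secures $c_t\ge0$.
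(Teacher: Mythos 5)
Your proposal is correct and follows essentially the same route as the paper: you derive the identical affine recursion $\ms_j \leftarrow \bigl(1-\tfrac{\eta_t}{2}(w+\rho n^{-1})\bigr)\ms_j + \eta_t\bigl(\tfrac{w}{2}\vy_i\vy_i^T+\tfrac{\beta}{2n}\mpsi\bigr)$, use $\eta_t<1$ to keep the leading coefficient nonnegative, and obtain invariant spectral bounds via Weyl-type eigenvalue inequalities, with the same lower floor $\beta\lambda_{\min}(\mpsi)/(n+\rho)$; the $\omega_j$ argument likewise matches the paper's use of the Dirichlet penalizer as a restoring force (the paper phrases the invariance through a threshold $\tau$ rather than your fixed-point language, and bounds $\omega_j$ from above via a Jensen step, but these are cosmetic differences).
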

\begin{proof}
We write down the formula of the gradient and show that the update formula~\eqref{eq.updategmm} guarantees that the variables remain in a bounded set. The Euclidean gradient of penalized log-likelihood with respect to one of the covariance matrices $\ms_j$ for a single datapoint $\vy_i$ is equal to
\begin{equation}
\nabla_E f_i(\ms_j) = -\frac w2\ms_j^{-1} + \frac w2 \ms_j^{-1}  \vy_i \vy_i^T  \ms_j^{-1} -\frac{\rho}{2n} \ms_j^{-1}+\frac{\beta}{2n}\ms_j^{-1}\mpsi \ms_j^{-1},
\end{equation}
where $w$, a weight calculated as in~\eqref{eq.weight}, is a positive number smaller than 1 and $\rho$, a small constant that appears in $\psi(\ms;\mpsi)$, is of order of $10^{-2}$. Using the update formula~\eqref{eq.updategmm}, $\ms_j$ is updated by
\begin{equation}
\label{eq.updategmm2}
\ms_j \leftarrow \biggl(1 - \eta_t \frac{w+\rho n^{-1}}{2}\biggr) \ms_j
+ \eta_t \mpsi', 
\end{equation}
where
\begin{equation*}
\mpsi' = \frac{w}{2} \vy_i \vy_i^T +\frac{\beta n^{-1}}{2}\mpsi . 
\end{equation*}
If $\eta_t\leq1$, then the first term in~\eqref{eq.updategmm2} remains positive definite.  Assume $\lambda$ and $\lambda'$ to be the smallest eigenvalue of $\ms_j$ before and after the update of~\eqref{eq.updategmm2}. Furthermore, assume the smallest eigenvalue of $\ms_j$ before update be $\lambda_{\min}(\ms_j) = \tau \lambda_{\min}(\mpsi)$. From the update rule~\eqref{eq.updategmm2} and knowing that the smallest eigenvalue of sum of two matrices with positive eigenvalues is not smaller than sum of smallest eigenvalue of two matrices, we have
\begin{equation*}
\lambda' \geq \lambda + \frac{\eta_t}{2}\lambda_{\min}(\mpsi) \biggl(-\tau(w+\rho n^{-1}) + \beta n^{-1} \biggr).
\end{equation*}
If $ \tau < \beta/(n+\rho)$, then $\lambda' > \lambda$. Otherwise, $\lambda' \geq \tau(1-\frac{\eta_t}{2}(1+\rho n^{-1})) \lambda_{\min}(\mpsi)+\frac{\eta_t}{2} \beta n^{-1} \lambda_{\min}(\mpsi)$. Since $\frac{\eta_t}{2}(1+\rho n^{-1})<1$, the smallest eigenvalue of $\ms_j$ can not become smaller than 
\begin{equation*}
\lambda_{\min}(\mpsi)\frac{\beta}{n+\rho}.
\end{equation*}
Now, assume $\lambda$ and $\lambda'$ to be the largest eigenvalue of $\ms_j$ before and after the update given in~\eqref{eq.updategmm2}. Furthermore, assume  the largest eigenvalue of $\ms_j$ before update be $\norm{\ms_j} = \tau \norm{\mpsi}$. From the update rule~\eqref{eq.updategmm2} and knowing  that the largest eigenvalue of sum of two matrices with positive eigenvalues is not larger than sum of largest eigenvalues of two matrices, we have
\begin{equation*}
\lambda' \leq \lambda + \frac{\eta_t}{2}\norm{\mpsi} \biggl (  -\tau(w+\rho n^{-1}) + w \frac{\norm{\vy_i}}{\norm{\mpsi}}+\beta n^{-1} \biggr).
\end{equation*}
If 
\begin{equation*}
\tau >\max_{w\in[0,1]} \frac{w \frac{\max_i\{\norm{\vy_i}\}}{\norm{\mpsi}}+\beta n^{-1}}{w+\rho n^{-1}},
\end{equation*}
then $\lambda' < \lambda$.
Therefore, the largest eigenvalue of $\ms_j$ remains smaller than 
 \begin{equation*}
\max_{w\in[0,1]} \frac{w n \max_i\{\norm{\vy_i}\}+\beta \norm{\mpsi}}{w n+\rho}.
\end{equation*}
Till now, we have shown that the $\ms_j$s remain in a compact set. We use the same procedure to show that $\omega_j$s also remain in a bounded interval. The Euclidean gradient of the objective with respect to $\omega_j$ for a single data-point is given by:
\begin{equation*}
\nabla_E f_i(\omega_j) = w - \alpha_j+ \frac{\zeta}{n} - \frac{K \zeta}{n} \alpha_j.
\end{equation*}
If $\alpha_j<\frac{\zeta n^{-1}}{1+K\zeta n^{-1}}$, then the gradient is positive and $\omega_j$ is increased after update. From~\eqref{eq.wReparam}, it is clear that $\log(\alpha_j) \leq \omega_j$. Using the update formula $\omega_j^{\text{new}}=\omega_j + \eta_t \nabla_E f_i(\omega_j)$, we get the following lower bound:
\begin{equation*}
\begin{split}
\omega_j^{\text{new}} &\geq \min_{\omega_j\geq \log\bigl(\frac{\zeta n^{-1}}{1+K\zeta n^{-1}}\bigr)} \biggl[ \omega_j+\eta_t \biggl(w - \alpha_j+ \frac{\zeta}{n} - \frac{K \zeta}{n} \alpha_j\biggr) \biggr]\\
&\geq \min_{\omega_j\geq \log\bigl(\frac{\zeta n^{-1}}{1+K\zeta n^{-1}}\bigr)} \biggl[ \omega_j+\eta_t \biggl(1 - \exp(\omega_j)+ \frac{\zeta}{n} - \frac{K \zeta}{n} \exp(\omega_j)\biggr) \biggr]\\
&=\log\biggl(\frac{\zeta n^{-1}}{1+K\zeta n^{-1}}\biggr).
\end{split}
\end{equation*}

From the definition~\eqref{eq.wReparam}, we have $\log(\alpha_i) = \omega_i - \log(\sum_{k=1}^K \exp(\omega_k))$. Using Jensen inequality, we obtain $ \log(\alpha_i) \leq -\sum_{\substack{k=1\\k\neq i}}^{n} \omega_k$. Therefore, we obtain the following upper bound for $\omega_j$
\begin{equation*}
\begin{split}
\omega_j &\leq \log(\alpha_i) -\sum_{\substack{k=1 \\ k\neq i,k\neq j}}^{n} \omega_k \\
&\leq -(K-2) \log\biggl(\frac{\zeta n^{-1}}{1+K\zeta n^{-1}}\biggr).
\end{split}
\end{equation*}
Therefore, one sees that all the parameters ($\ms_j$s and $\omega_j$s) remain in a bounded set.
\end{proof}

\noindent Since the parameters remain bounded, we may invoke the following theorem:
\begin{theorem}[Boumal et al.~\citep{Boumal2016}]
\label{thm:boumal}
Let $\mathcal{M}$ be a compact Riemannian submanifold of a Euclidean space. Let $\retr$ be a retraction on M. If $f$ has a Euclidean Lipschitz continuous gradient in the convex hull of $\mathcal{M}$, then the function satisfies the Lipschitz growth bound with some constant $L$ for all retractions.
\end{theorem}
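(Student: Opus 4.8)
The plan is to reduce the statement to a uniform second-order Taylor estimate for the pullback $g(t) := f(\retr_x(t\xi))$ along the retraction curve, assembling the constant $L$ from three ingredients: the Euclidean Lipschitz constant $L_E$ of $\nabla_E f$ on the convex hull, the uniform bound $G := \max_{\mathcal{M}}\norm{\nabla_E f}$ (finite since $\nabla_E f$ is continuous and $\mathcal{M}$ is compact), and uniform bounds on the fibre-derivatives of $\retr$. Writing $c(t) := \retr_x(t\xi)$, a smooth curve in $\mathcal{M} \subseteq \reals^N$, we have $c(0)=x$, and since $D\retr_x(0)=\mathrm{id}$ also $c'(0)=\xi$; because $\mathcal{M}$ is an embedded submanifold its metric is the restriction of the Euclidean one, so $g'(0) = \ip{\nabla_E f(x)}{\xi} = \ip{\nabla f(x)}{\xi}$, matching the linear term in~\eqref{eq.condretr}. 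The identity $g(1)-g(0)-g'(0) = \int_0^1 [g'(t)-g'(0)]\,dt$, combined with $g'(t) = \ip{\nabla_E f(c(t))}{c'(t)}$, then reduces everything to bounding the integrand.

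I would split the integrand as $g'(t)-g'(0) = \ip{\nabla_E f(c(t))-\nabla_E f(x)}{c'(t)} + \ip{\nabla_E f(x)}{c'(t)-\xi}$. The first inner product is controlled by $L_E\norm{c(t)-x}\,\norm{c'(t)}$, using that $c(t),x\in\mathcal{M}$ lie in the convex hull where $\nabla_E f$ is $L_E$-Lipschitz; the second by $G\,\norm{c'(t)-\xi}$. It then suffices to produce, \emph{uniformly over} $x$, the curve estimates $\norm{c'(t)}\le M\norm{\xi}$, $\norm{c(t)-x}\le M\norm{\xi}\,t$, and $\norm{c'(t)-\xi}\le M'\norm{\xi}^2 t$. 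Substituting these and integrating $\int_0^1 t\,dt=\tfrac12$ yields $g(1)-g(0)-g'(0) \le \tfrac12\bigl(L_E M^2 + G M'\bigr)\norm{\xi}^2$, which is exactly~\eqref{eq.condretr}.

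The heart of the argument — and the step I expect to be the main obstacle — is obtaining the constants $M,M'$ \emph{independent of} $x$. Here compactness is essential: I would restrict to the set $\{(x,\xi)\in T\mathcal{M} : \norm{\xi}\le\Delta\}$, which is compact because $\mathcal{M}$ is. On this compact domain the smooth map $(x,\xi,t)\mapsto \retr_x(t\xi)$ and its fibre-derivatives $c'(t)=D\retr_x(t\xi)[\xi]$ and $c''(t)=D^2\retr_x(t\xi)[\xi,\xi]$ are continuous, hence bounded; bilinearity gives $\norm{c''(t)}\le M'\norm{\xi}^2$ and linearity $\norm{c'(t)}\le M\norm{\xi}$, while $c'(0)=\xi$ together with the mean-value inequality $\norm{c'(t)-\xi}\le\int_0^t\norm{c''(s)}\,ds$ supplies the third estimate. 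This establishes the quadratic bound for all $x$ and all $\norm{\xi}\le\Delta$.

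Finally I would dispatch the remaining regime $\norm{\xi}>\Delta$ by a trivial, non-Taylor argument that exploits compactness directly: since $\retr_x(\xi)\in\mathcal{M}$, the left side $f(\retr_x(\xi))$ is at most $\max_{\mathcal{M}} f$, whereas the right side is at least $\min_{\mathcal{M}} f - G\norm{\xi} + \tfrac{L}{2}\norm{\xi}^2$; choosing $L\ge G/\Delta$ makes $r\mapsto \tfrac{L}{2}r^2 - Gr$ increasing on $[\Delta,\infty)$, so enlarging $L$ until $\tfrac{L}{2}\Delta^2 - G\Delta + \min_{\mathcal{M}} f \ge \max_{\mathcal{M}} f$ forces the inequality throughout this regime. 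Taking $L$ to be the maximum of this value and $\tfrac12(L_E M^2 + GM')$ gives a single constant valid for all $\xi$, which proves the claim (and if the retraction is only defined near the zero section, the large-$\norm{\xi}$ regime is vacuous).
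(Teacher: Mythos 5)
The paper offers no proof of this statement: it is imported verbatim from the cited reference (Boumal et al.), so there is no in-paper argument to compare against. Judged on its own merits, your proof is correct and is essentially the standard argument from that reference. The skeleton is sound: the identity $g(1)-g(0)-g'(0)=\int_0^1[g'(t)-g'(0)]\,dt$ with $g(t)=f(\retr_x(t\xi))$, the split of the integrand into a Lipschitz-gradient term and a curve-bending term, the uniform bounds $\norm{c'(t)}\le M\norm{\xi}$ and $\norm{c''(t)}\le M'\norm{\xi}^2$ obtained by compactness of the restricted tangent bundle $\set{(x,\xi):\norm{\xi}\le\Delta}$, and the separate treatment of the regime $\norm{\xi}>\Delta$ via boundedness of $f$ on the compact manifold. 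The only cosmetic difference from the source is that Boumal et al.\ apply the Euclidean descent lemma to the straight chord from $x$ to $\retr_x(\xi)$ (which is precisely where the convex-hull hypothesis enters) and then bound $\norm{\retr_x(\xi)-x-\xi}\le M'\norm{\xi}^2$, whereas you integrate along the retraction curve itself; your variant in fact only uses Lipschitzness of $\nabla_E f$ between pairs of points of $\Mc$, so it consumes slightly less of the hypothesis. Two small points worth making explicit: (a) you implicitly assume the retraction is defined on a uniform tube $\norm{\xi}\le\Delta$ around the zero section, which does follow from compactness of $\Mc$ and openness of the retraction's domain, but deserves a sentence; (b) your identification $\ip{\nabla f(x)}{\xi}=\ip{\nabla_E f(x)}{\xi}$ and the meaning of $\norm{\xi}$ rely on the metric being the induced Euclidean one, which is the correct reading of ``Riemannian submanifold'' here but is worth stating, since the $\pp^d$ geometry used elsewhere in the paper is not of this type.
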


We have shown above that the iterations of SGD for penalized log-likelihood stay within a compact set. It is also easy to see that the objective has a Euclidean Lipschitz continuous gradient on this set. Therefore, we can invoke Theorem~\ref{thm:boumal} to show that the objective function satisfies condition (i) needed by Theorems~\ref{thm.sgdcond23} and~\ref{thm.sgdcond4}. Furthermore, the objective function has a G-bounded gradient in this compact set and the iterations stay within it. Therefore, condition (iv) needed for Theorem~\ref{thm.sgdcond4} also holds. We summarize this result in the following corollary. 

\begin{corollary}
Assume SGD is used for optimizing the penalized log-likelihood of \gmm, which is given by
\begin{equation*}
f(\{\ms_j \succ 0\}_{j=1}^K,\{\eta_j\}_{j=1}^{K-1}) = \frac{1}{n}\sum_{i=1}^n f_i(\{\ms_j \succ 0\}_{j=1}^K,\{\eta_j\}_{j=1}^{K-1}),
\end{equation*}
where $f_i$ is as in~\eqref{eq.objPenSGD}. Then, the gradient of the objective after $T$ iterations with constant step-size equal to $\eta_t = c / \sqrt{T}$ satisfies 
\begin{equation*}
 \min_{1\le t\le T}\E[\norm{\nabla f^t(\{\ms_j \succ 0\}_{j=1}^K,\{\eta_j\}_{j=1}^{K-1})}^2] \le \frac{1}{\sqrt{T}}\left( \frac{f^*-f^0}{c} + \frac{Lc}{2}G^2\right) = \Oc\left(\tfrac{1}{\sqrt{T}}\right),
\end{equation*} 
where $f^t$ is the penalized objective evaluated at the value of parameters after $t$ iterations; $f^*$ is the value of penalized objective at its optimum; $f^0$ is the value of the objective at its initial point; $L$ is the Lipschitz-growth bound constant; and $G$ is the constant for the G-bounded condition of the gradient.
  \end{corollary}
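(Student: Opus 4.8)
The plan is to prove the corollary by assembling the ingredients already established in this section, since the genuine technical work is done. The strategy has three stages: first confine the SGD iterates to a fixed compact set via Theorem~\ref{thm:compact}; then exploit compactness to certify that conditions (i) and (iv) hold with \emph{finite} constants $L$ and $G$; and finally feed these into Theorem~\ref{thm.sgdcond4} to read off the $\Oc(1/\sqrt{T})$ rate.

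First I would check that the prescribed step-size $\eta_t = c/\sqrt{T}$ is at most one (for $T\ge c^2$, or after shrinking $c$), so the hypothesis of Theorem~\ref{thm:compact} is met. That theorem then traps the eigenvalues of every iterate block $\ms_j^t$ in a fixed interval bounded away from $0$ and $\infty$ (the lower bound being $\lambda_{\min}(\mpsi)\tfrac{\beta}{n+\rho}$), while each $\omega_j^t$ stays in a fixed bounded interval. Hence all iterates remain, uniformly in $t$ and $T$, inside a single compact set $\mathcal{K}$ contained in the product manifold $\bigl(\prod_{j=1}^K\pp^{d+1}\bigr)\times\reals^{K-1}$.

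Next I would verify the two regularity conditions \emph{on $\mathcal{K}$}. There the covariance blocks are uniformly positive definite and bounded, so the log-sum-exp of Gaussian densities together with the smooth penalizers $\psi$ and $\varphi$ is infinitely differentiable, and its Euclidean gradient is Lipschitz continuous on the convex hull of $\mathcal{K}$. Invoking Theorem~\ref{thm:boumal} then yields the Lipschitz growth bound~\eqref{eq.condretr}, i.e.\ condition (i), with a finite $L$ for the Euclidean retraction used in~\eqref{eq.updategmm}. Moreover, the continuous map $x\mapsto\norm{\nabla f_i(x)}$ attains a finite maximum $G$ over the compact $\mathcal{K}$, which is exactly the $G$-bounded gradient condition (iv).

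The main point to check — and the only place where compactness is truly essential — is that these constants are finite at all: as any covariance block degenerates, the gradient of the log-likelihood blows up, so neither (i) nor (iv) can hold globally on $\pp^{d+1}$. The resolution is that Theorem~\ref{thm:compact} keeps every $x_t$ inside $\mathcal{K}$, so the inequalities in the proof of Theorem~\ref{thm.sgdcond4} only ever involve points of $\mathcal{K}$ and the finite $L,G$ just produced; that argument therefore applies verbatim despite being stated with global hypotheses. Finally I would substitute $\eta_t = c/\sqrt{T}$ into its conclusion to obtain
\[
\min_{1\le t\le T}\E[\norm{\nabla f(x_t)}^2] \le \frac{1}{\sqrt{T}}\Bigl(\tfrac{f^*-f^0}{c} + \tfrac{Lc}{2}G^2\Bigr) = \Oc\bigl(\tfrac{1}{\sqrt{T}}\bigr),
\]
where the sign convention in the numerator reflects that we are \emph{maximizing} the penalized likelihood, completing the proof.
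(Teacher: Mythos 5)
Your proposal matches the paper's own argument: the paper likewise invokes Theorem~\ref{thm:compact} to confine the iterates to a compact set, uses compactness together with Theorem~\ref{thm:boumal} to secure condition (i) with a finite $L$ and boundedness of the gradient to secure condition (iv) with a finite $G$, and then reads the rate off Theorem~\ref{thm.sgdcond4} with $\eta_t = c/\sqrt{T}$. Your added remarks --- that the step-size must be at most one for Theorem~\ref{thm:compact} to apply, and that the conditions genuinely fail outside the compact set --- are correct refinements of the same route, not a different proof.
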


\section{Experiments}
\label{sec:expt}

\begin{figure}[htbp]
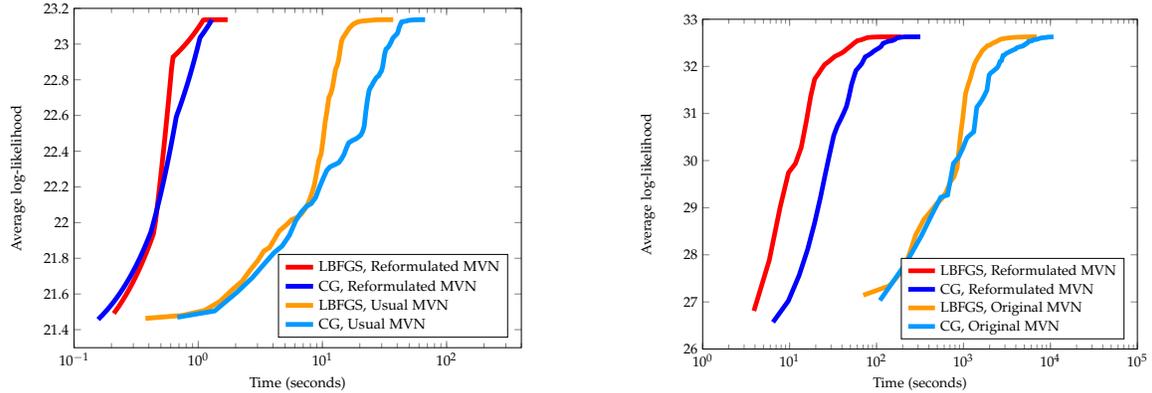
\small
\centering
  \label{fig:singlecom}%
    \resizebox{.45\textwidth}{!}{\input{./naturals1}}
\hfill%
  \label{fig:sevencom}%
      \resizebox{.45\textwidth}{!}{\input{./naturals2}}
\caption{\small
\label{fig:reparam} The effect of reformulation in convergence speed of manifold CG and manifold LBFGS methods ($d=35$); note that the X-axis (time) is on a logarithmic scale~\cite{hosseini2015matrix}.}
\end{figure}

\begin{figure}[htbp]
\centering
\subfigure{%
  \label{fig:magic}%
  \resizebox{.45\textwidth}{!}{
%
%
\definecolor{mycolor1}{rgb}{0.00000,0.00000,0.17241}%
\definecolor{mycolor2}{rgb}{1.00000,0.10345,0.72414}%
\definecolor{mycolor3}{rgb}{1.00000,0.82759,0.00000}%
\begin{tikzpicture}

\begin{axis}[%
scale =0.55,
width=6.028in,
height=4.754in,
at={(1.011in,0.642in)},
scale only axis,
xmin=0,
xmax=50,
tick align=outside,
xlabel={Iterations},
ymode=log,
ymin=1e-05,
ymax=10,
ylabel={Averaged Cost Difference},
axis background/.style={fill=white},
legend style={legend cell align=left,align=left,draw=white!15!black}
]
\addplot [color=blue,solid,line width=3.0pt]
  table[row sep=crcr]{%
0	5.59009639433904\\
1	0.979512413759981\\
2	0.199861763855825\\
3	0.0394643012073388\\
4	0.00769695966669204\\
};
\addlegendentry{SGD (it=5)};

\addplot [color=red,solid,line width=3.0pt]
  table[row sep=crcr]{%
0	5.59009639433904\\
1	1.43311924478897\\
2	0.951236432144562\\
3	0.59990103543727\\
4	0.357672294334517\\
5	0.230228258655202\\
6	0.146538619768982\\
7	0.0956466519851915\\
8	0.0611274638392523\\
9	0.0372359299427458\\
10	0.0252237573659002\\
11	0.0152047549707603\\
12	0.0090179056019295\\
13	0.00479196147207972\\
14	0.00216709744180932\\
15	0.00089091863047841\\
16	0.000370836051725831\\
17	0.000152962044552396\\
18	8.48848603105523e-05\\
19	6.03594112682515e-05\\
};
\addlegendentry{SGD (it=20)};

\addplot [color=green,solid,line width=3.0pt]
  table[row sep=crcr]{%
0	5.59009639433904\\
1	2.02854227830542\\
2	1.70847459422006\\
3	1.45564784643182\\
4	1.19184484331264\\
5	0.923024408651607\\
6	0.751715380950859\\
7	0.626522871425724\\
8	0.533290436844265\\
9	0.44445696731211\\
10	0.356377730674641\\
11	0.297376543128298\\
12	0.252061309901393\\
13	0.218418006312859\\
14	0.169280892073367\\
15	0.148023175745024\\
16	0.123670294589701\\
17	0.106623316400569\\
18	0.0873960199081196\\
19	0.0718044466917149\\
20	0.0588709952595536\\
21	0.0504799446392425\\
22	0.0416412317270911\\
23	0.0361587646137451\\
24	0.0290529566834685\\
25	0.0241373709533441\\
26	0.0212449792398139\\
27	0.0157900113130154\\
28	0.0133731852732204\\
29	0.0111885111410075\\
30	0.00856945096838047\\
31	0.00672053907004511\\
32	0.00542801176154484\\
33	0.00360145404208012\\
34	0.00260583411465731\\
35	0.0020787886562097\\
36	0.00134962580332143\\
37	0.000900450098370698\\
38	0.000683069833414152\\
39	0.000466259348357312\\
40	0.000312688340770251\\
41	0.000187418323392308\\
42	0.000136047336440015\\
43	8.5400481797393e-05\\
44	5.62011728320044e-05\\
45	3.75552405600388e-05\\
46	2.59082245115394e-05\\
47	1.87854155271339e-05\\
48	1.39932668012932e-05\\
49	1.04515657568527e-05\\
};
\addlegendentry{SGD (it=50)};

\addplot [color=mycolor1,solid,line width=3.0pt]
  table[row sep=crcr]{%
0	5.59009631933904\\
1	2.06604162812699\\
2	1.28490673110454\\
3	0.863174168356192\\
4	0.625431932163398\\
5	0.470443559119612\\
6	0.360516910882733\\
7	0.278572374638244\\
8	0.215966829586229\\
9	0.16733128295888\\
10	0.129469928119939\\
11	0.100083336968105\\
12	0.0772671738677886\\
13	0.0596154718877813\\
14	0.0460128695583073\\
15	0.0355320226380087\\
16	0.0274469493217246\\
17	0.0212089347896338\\
18	0.0163950569368367\\
19	0.012675762599109\\
20	0.00979754734670024\\
21	0.00756810993990698\\
22	0.00584157403126184\\
23	0.00450588616951109\\
24	0.00347385190613281\\
25	0.00267725681935005\\
26	0.00206280888894028\\
27	0.00158904292450757\\
28	0.00122382125514875\\
29	0.000942302619808544\\
30	0.000725314888562423\\
31	0.000558072013848943\\
32	0.000429173899767932\\
33	0.000329831891026089\\
34	0.000253270827673902\\
35	0.000194267926612923\\
36	0.000148797267371492\\
37	0.000113755682082939\\
38	8.67513854991842e-05\\
39	6.59409904919528e-05\\
40	4.99038270618257e-05\\
41	3.75450464034088e-05\\
42	2.80209155860689e-05\\
43	2.0681233664277e-05\\
44	1.50249441901451e-05\\
45	1.0665925152864e-05\\
46	7.30662552328454e-06\\
47	4.71774893995303e-06\\
48	2.72259381972617e-06\\
49	1.1849896459637e-06\\
50	0\\
};
\addlegendentry{EM};

\addplot [color=mycolor2,solid,line width=3.0pt]
  table[row sep=crcr]{%
0	5.59009639433904\\
1	4.62201753984071\\
2	2.89122225760966\\
3	1.35862603576276\\
4	0.508827754566934\\
6	0.300869787800387\\
8	0.145791280363142\\
10	0.0577494764878139\\
12	0.0197422591135457\\
14	0.0105164739645964\\
16	0.00557263470237146\\
17	0.00226059372340615\\
18	0.001392599823415\\
20	0.000703477098237926\\
22	0.000263741669645157\\
24	0.000110650150823233\\
26	6.07478185266075e-05\\
28	3.10306809225835e-05\\
29	8.80395938906986e-06\\
};
\addlegendentry{LBFGS};

\addplot [color=mycolor3,solid,line width=3.0pt]
  table[row sep=crcr]{%
0	5.59009639433904\\
1	4.62201753984071\\
2	3.36032605928573\\
3	2.13337989978533\\
4	1.32612481490831\\
5.5	0.612935670352677\\
6.5	0.385828465593725\\
7.5	0.216853491314179\\
9.5	0.0984349756338396\\
10.5	0.0376775776725431\\
13	0.0216148071170785\\
14	0.011731137616195\\
15.5	0.00758350087438586\\
16.5	0.00443191669248577\\
17.5	0.00215408982917609\\
19.5	0.000569333785882975\\
21	0.000143551042135925\\
22.5	6.83851543463732e-05\\
23.5	5.15173795818669e-05\\
24.5	3.64117708429035e-05\\
25.5	1.83127160653385e-05\\
27.5	4.2918373068801e-06\\
};
\addlegendentry{CG};

\end{axis}
\end{tikzpicture}
  }%
\hfill%
\subfigure{%
  \label{fig:year}%
  \resizebox{.45\textwidth}{!}{
%
%
\definecolor{mycolor1}{rgb}{0.00000,0.00000,0.17241}%
\definecolor{mycolor2}{rgb}{1.00000,0.10345,0.72414}%
\definecolor{mycolor3}{rgb}{1.00000,0.82759,0.00000}%
\begin{tikzpicture}

\begin{axis}[%
scale = 0.55,
width=6.028in,
height=4.754in,
at={(1.011in,0.642in)},
scale only axis,
xmin=0,
xmax=90,
tick align=outside,
xlabel={Iterations},
ymode=log,
ymin=1e-05,
ymax=10,
ylabel={Averaged Cost Difference},
axis background/.style={fill=white},
legend style={legend cell align=left,align=left,draw=white!15!black}
]
\addplot [color=blue,solid,line width=3.0pt]
  table[row sep=crcr]{%
0	5.67590342424566\\
1	1.30994979979677\\
2	0.451191052574032\\
3	0.282902605287731\\
4	0.24625346737696\\
};
\addlegendentry{SGD (it=5)};

\addplot [color=red,solid,line width=3.0pt]
  table[row sep=crcr]{%
0	5.67590342424566\\
1	1.56857975145485\\
2	1.08397212561781\\
3	0.724470716130885\\
4	0.511321857872723\\
5	0.266481998777337\\
6	0.133841077107874\\
7	0.0865767268990538\\
8	0.0580067203711252\\
9	0.0391994964243665\\
10	0.0259577101805348\\
11	0.0185678889924077\\
12	0.014331140758955\\
13	0.0122475954987635\\
14	0.0113850071830868\\
15	0.0110214958660677\\
16	0.0108632710683452\\
17	0.0107868939680884\\
18	0.0107420256349968\\
19	0.0107139819069459\\
};
\addlegendentry{SGD (it=20)};

\addplot [color=green,solid,line width=3.0pt]
  table[row sep=crcr]{%
0	5.67590342424566\\
1	2.15204781243785\\
2	1.76807186882611\\
3	1.51285256574542\\
4	1.23141793682789\\
5	1.03313895412003\\
6	0.898787821666133\\
7	0.726400848356509\\
8	0.519103035198256\\
9	0.382933119988849\\
10	0.306421912992015\\
11	0.253418415623635\\
12	0.217039163798241\\
13	0.178825961658625\\
14	0.146502843818801\\
15	0.128948624784002\\
16	0.103172862302699\\
17	0.0884223638381272\\
18	0.0769369104602191\\
19	0.0630654190107762\\
20	0.0515802641191172\\
21	0.0449968613634653\\
22	0.038099049611489\\
23	0.0312945978358528\\
24	0.0264539679150957\\
25	0.0226857543378145\\
26	0.0191334687485494\\
27	0.0162520315184551\\
28	0.0142263697301743\\
29	0.0124596079975277\\
30	0.0111612448809879\\
31	0.0101705061388842\\
32	0.00943236950818971\\
33	0.00897349411305726\\
34	0.00857164564025936\\
35	0.00829607402388177\\
36	0.00811779576739724\\
37	0.00797362853493411\\
38	0.00787762889527954\\
39	0.00779947778421786\\
40	0.00774413842027855\\
41	0.00770154066809425\\
42	0.00766786618556381\\
43	0.00763953552490904\\
44	0.00761660894906413\\
45	0.00759796684115344\\
46	0.00758267991989925\\
47	0.00756947117599793\\
48	0.00755847546388821\\
49	0.00754916442951981\\
};
\addlegendentry{SGD (it=50)};

\addplot [color=mycolor1,solid,line width=3.0pt]
  table[row sep=crcr]{%
0	5.67590324924568\\
1	2.41755276047445\\
2	1.74626801088705\\
3	1.36542068740102\\
4	1.10917286739067\\
5	0.919833880321857\\
6	0.773337702610561\\
7	0.656624151609691\\
8	0.560401187028646\\
9	0.479008246257173\\
10	0.410422868277784\\
11	0.353452098730187\\
12	0.305882862044086\\
13	0.265611033847748\\
14	0.231069639599603\\
15	0.201250416018333\\
16	0.175366043029697\\
17	0.153055348615325\\
18	0.13412724616196\\
19	0.118079887187058\\
20	0.104335377339957\\
21	0.0924381852580183\\
22	0.0820066038422311\\
23	0.0727284180492731\\
24	0.064376436693486\\
25	0.0568209726184321\\
26	0.0500030145430372\\
27	0.0438723133388095\\
28	0.0383791332693235\\
29	0.0334897439895201\\
30	0.0291747231443651\\
31	0.0253944307494365\\
32	0.0220973174112871\\
33	0.0192292924810076\\
34	0.0167385302258509\\
35	0.0145764155889196\\
36	0.0126981880675103\\
37	0.0110641285733095\\
38	0.00964031623774986\\
39	0.00839829488693056\\
40	0.00731412635006023\\
41	0.00636749316628027\\
42	0.00554102463024719\\
43	0.00481975082067976\\
44	0.00419064605128483\\
45	0.00364228479538298\\
46	0.00316460623460557\\
47	0.00274874862949304\\
48	0.00238691633360588\\
49	0.00207226128821247\\
50	0.00179877413014395\\
51	0.00156118486482626\\
52	0.0013548738396949\\
53	0.00117579327516637\\
54	0.00102039882761318\\
55	0.000885590013055548\\
56	0.000768658033557301\\
57	0.000667239689342125\\
58	0.000579276350137548\\
59	0.000502977280817163\\
60	0.000436786838086789\\
61	0.00037935517185872\\
62	0.000329512132850596\\
63	0.000286244108622213\\
64	0.000248673565167223\\
65	0.000216041025218772\\
66	0.000187689288587478\\
67	0.000163049654062775\\
68	0.000141629933871457\\
69	0.000123004063709686\\
70	0.00010680310964517\\
71	9.27074995757948e-05\\
72	8.04403231882134e-05\\
73	6.97615477136537e-05\\
74	6.04630383520544e-05\\
75	5.23642530581014e-05\\
76	4.53085388016916e-05\\
77	3.91599267786091e-05\\
78	3.38003694366762e-05\\
79	2.91273480499399e-05\\
80	2.50518111997167e-05\\
81	2.14963822386949e-05\\
82	1.83938159494801e-05\\
83	1.56856564146324e-05\\
84	1.33210798054506e-05\\
85	1.12558838978316e-05\\
86	9.45162083354489e-06\\
87	7.87483762820784e-06\\
88	6.49642272776418e-06\\
89	5.29103679269838e-06\\
90	4.2366171655317e-06\\
};
\addlegendentry{EM};

\addplot [color=mycolor2,solid,line width=3.0pt]
  table[row sep=crcr]{%
0	5.67590342424566\\
1	4.90777919336633\\
2	3.61726471359609\\
3	2.72581263236706\\
4	1.88786421558245\\
5	1.18224073789006\\
7	0.929357220298606\\
8	0.698793149116121\\
10	0.552675399330575\\
11	0.428270727946376\\
12	0.394891508852389\\
13	0.349782795203474\\
14	0.294927425248858\\
15	0.28070597743536\\
16	0.261169021332655\\
17	0.223974431340906\\
18	0.177538491998675\\
20	0.151357504023046\\
22	0.138771560372319\\
23	0.12861386081039\\
24	0.115621518894173\\
25	0.100009082908393\\
26	0.0881280696690467\\
27	0.0789831171441122\\
28	0.0643987310318437\\
29	0.0397794991948572\\
30	0.0283031665951157\\
31	0.0150012251420364\\
32	0.0130588085205829\\
33	0.0104808607498299\\
34	0.00625843883842947\\
35	0.00525288758169751\\
36	0.00321962587641167\\
37	0.00215265083514282\\
38	0.00137313073710743\\
39	0.00103175616258966\\
40	0.000633999887369896\\
42	0.000442706753013056\\
44	0.000323941361699553\\
45	0.000164639223363849\\
46	8.88620142802665e-05\\
47	7.94996976267726e-05\\
48	5.61465326782695e-05\\
49	2.96137648589934e-05\\
51	1.75831462669862e-05\\
52	7.93231884443912e-06\\
53	2.72187083538711e-06\\
55	0\\
};
\addlegendentry{LBFGS};

\addplot [color=mycolor3,solid,line width=3.0pt]
  table[row sep=crcr]{%
0	5.67590342424566\\
1	4.90777919336633\\
2	4.00199142721068\\
3	3.17284175129977\\
4	2.58398731845784\\
5	1.47084461740539\\
7	1.04308644963817\\
8.5	0.86654355531326\\
9.5	0.720369600640776\\
11	0.586108866196874\\
12	0.511734814846982\\
13	0.431747675555528\\
15	0.35854031223635\\
16.5	0.314325985761108\\
17.5	0.296979884380121\\
18.5	0.283400582613723\\
20.5	0.268797136553992\\
21.5	0.254135305269571\\
22.5	0.216218101503408\\
25	0.193916784266612\\
26.5	0.171344221677288\\
27.5	0.162187837870796\\
28.5	0.150776423446779\\
29.5	0.136470013194796\\
31.5	0.127553572023757\\
33	0.108950201706598\\
34	0.102272495617221\\
35	0.0837739814070062\\
37.5	0.0776881280045316\\
38.5	0.0679721318032875\\
39.5	0.0465487614657931\\
42	0.0361608401916413\\
43	0.0277133027909002\\
44.5	0.0192489912796106\\
46	0.0145490837341384\\
47	0.012516190030027\\
48	0.0104959502522988\\
49	0.00788244021298112\\
51	0.0039295897621443\\
53	0.00257980106616884\\
54	0.000778549361378111\\
56	0.000612728404050245\\
57	0.000446066783190702\\
58.5	0.000345493471037628\\
59.5	0.000194025127335351\\
61	0.000102561381865485\\
62	2.50473372531701e-05\\
64	1.3242388824608e-05\\
65	7.09513712138232e-06\\
67	1.72380382679194e-06\\
};
\addlegendentry{CG};

\end{axis}
\end{tikzpicture}
  }
\caption{\small
\label{fig:iter1}\small Comparison of optimization methods on natural image data ($d=35$, $n= 200000$). Y-axis:
best cost minus current cost values. X-axis: number of function and gradient evaluations. Right: 3 number of components. Left: 7 number of components.}
\end{figure}
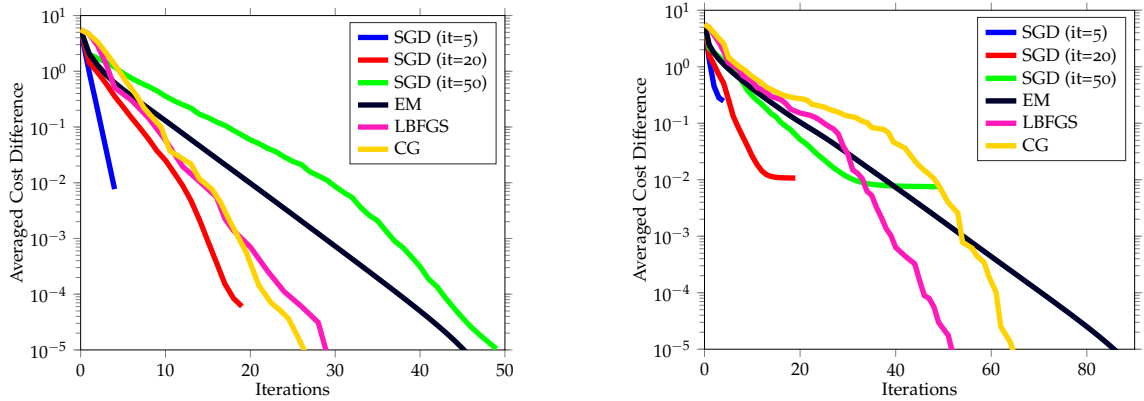

In all experiments, the parameters of the penalizer in~\eqref{eq.pen1} are $\rho=\kappa=0.01$ and $\alpha=\beta=1$. The parameter $\mlambda$ is set to 0.01 of sample covariance of the data and $\vlambda$ is sample mean of the data. The parameter $\zeta$ of the penalizer in~\eqref{eq.priorweight} is set to 1. We initialize the mixture parameters using \texttt{k-means++}~\citep{arthur2007} by testing 30 different initial candidate and choosing the one with the best cost function. All methods stop when the difference between cost functions falls below $10^{-6}$.

In order to show the efficacy of SGD, we fix the step-size rule in all experiments. We use exponential decay for the step-size. Given the maximum number of epochs, we set the starting step-size to 1 and the last step-size to $10^{-3}$. The batch size is set to be equal to the dimensionality of data.

For the deterministic Riemannian optimization methods, we use exponential map and parallel transport as they lead to superior performance compared to other kinds of retractions and vector transports. For Riemannian SGD, we report the result of using Euclidean retraction. We also tested a more expensive exponential map and a different positivity-preserving retraction~\cite{jeuris2012survey}. However, we observed no difference in cost function decrease as a function of gradient evaluations. 

In the first experiment, the effect of the problem reformulation of Section~\ref{sec:prob} is investigate. This effect is shown if Figure~\ref{fig:reparam}. The left plot is the result of optimization for a single Gaussian and the right plot is the result for \gmm with seven components. It can be seen that the reformulation has significant effect on the convergence speed. 

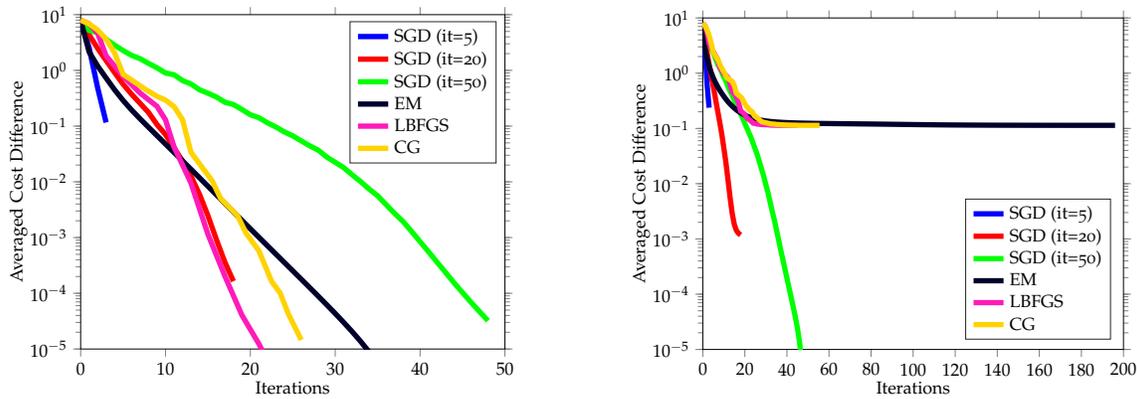
\begin{figure}[htbp]
\centering
\subfigure{%
  \label{fig:magic2}%
  \resizebox{.45\textwidth}{!}{
%
%
\definecolor{mycolor1}{rgb}{0.00000,0.00000,0.17241}%
\definecolor{mycolor2}{rgb}{1.00000,0.10345,0.72414}%
\definecolor{mycolor3}{rgb}{1.00000,0.82759,0.00000}%
\begin{tikzpicture}

\begin{axis}[%
scale =0.55,
width=6.028in,
height=4.754in,
at={(1.011in,0.642in)},
scale only axis,
xmin=0,
xmax=50,
tick align=outside,
xlabel={Iterations},
ymode=log,
ymin=1e-05,
ymax=10,
ylabel={Averaged Cost Difference},
axis background/.style={fill=white},
legend style={legend cell align=left,align=left,draw=white!15!black}
]
\addplot [color=blue,solid,line width=3.0pt]
  table[row sep=crcr]{%
0	7.99660646609543\\
1	2.5344984981715\\
2	0.489321471896673\\
3	0.116262853687829\\
};
\addlegendentry{SGD (it=5)};

\addplot [color=red,solid,line width=3.0pt]
  table[row sep=crcr]{%
0	7.99660646609543\\
1	4.33125672026279\\
2	2.48977394115408\\
3	1.50611558007803\\
4	0.915017163294564\\
5	0.565707386139678\\
6	0.374522953197499\\
7	0.256679599637621\\
8	0.175833837789696\\
9	0.106529097488277\\
10	0.0707069477492013\\
11	0.0408732701929893\\
12	0.0236958491023387\\
13	0.0126540738255869\\
14	0.00598257889475917\\
15	0.00246764943840105\\
16	0.000914295305022961\\
17	0.000360330051492497\\
18	0.000164322662413952\\
};
\addlegendentry{SGD (it=20)};

\addplot [color=green,solid,line width=3.0pt]
  table[row sep=crcr]{%
0	7.99660646609543\\
1	5.22577341027606\\
2	4.73839474055795\\
3	3.62266731381571\\
4	2.83897016837935\\
5	2.27248657836233\\
6	1.8607978293434\\
7	1.62196126282369\\
8	1.33526857009785\\
9	1.1104455234388\\
10	0.896939322197866\\
11	0.835329865497528\\
12	0.657204587113114\\
13	0.570553816825466\\
14	0.448670777314732\\
15	0.388906709392344\\
16	0.332866200732369\\
17	0.267666852930176\\
18	0.2440330290394\\
19	0.198692838945547\\
20	0.160057170260416\\
21	0.141533934962951\\
22	0.11495273378047\\
23	0.0965330938175128\\
24	0.0786741496943648\\
25	0.0668839229178246\\
26	0.0542776706162016\\
27	0.0441839906316233\\
28	0.0370844139717832\\
29	0.0278706195144451\\
30	0.0224093211634653\\
31	0.0180027078472165\\
32	0.0133349574810069\\
33	0.0100951948932391\\
34	0.0074044860054272\\
35	0.00561457127170684\\
36	0.00389469431269163\\
37	0.00271359717321218\\
38	0.00193553245617295\\
39	0.00128738759920566\\
40	0.000843115314438592\\
41	0.000553999744838052\\
42	0.00036130020752978\\
43	0.000236964013410557\\
44	0.00015397971624509\\
45	0.000102987180227387\\
46	6.90441485744486e-05\\
47	4.69703066983129e-05\\
48	3.21423924702913e-05\\
};
\addlegendentry{SGD (it=50)};

\addplot [color=mycolor1,solid,line width=3.0pt]
  table[row sep=crcr]{%
0	7.99660643698873\\
1	2.14005797756825\\
2	1.20181435634899\\
3	0.729219213532176\\
4	0.449266672520011\\
5	0.289520015237102\\
6	0.195045988738869\\
7	0.135440491910785\\
8	0.0953697871697443\\
9	0.0673850730378973\\
10	0.0476518364830412\\
11	0.0337079728285374\\
12	0.0238403384447423\\
13	0.0168289711465093\\
14	0.0118504042581691\\
15	0.00833051985615896\\
16	0.00585296174917005\\
17	0.00411367803934581\\
18	0.00289370153497259\\
19	0.00203774464418416\\
20	0.00143664385076647\\
21	0.00101396501010242\\
22	0.000716296128629779\\
23	0.000506336585715417\\
24	0.000358021820346721\\
25	0.00025311146028173\\
26	0.000178815426451706\\
27	0.000126146528756976\\
28	8.87769800925753e-05\\
29	6.22432755505997e-05\\
30	4.33918753515172e-05\\
31	2.99916473238682e-05\\
32	2.04622095623108e-05\\
33	1.3683009129295e-05\\
34	8.85886758794641e-06\\
35	5.42509361878274e-06\\
36	2.98045451074813e-06\\
37	1.23970905008264e-06\\
38	0\\
};
\addlegendentry{EM};

\addplot [color=mycolor2,solid,line width=3.0pt]
  table[row sep=crcr]{%
0	7.99660646609611\\
1	6.82389963134017\\
2	4.66655883702066\\
3	1.82185221788544\\
5	0.786754275466677\\
7	0.392386929575252\\
9	0.226141853066295\\
10	0.129751477906055\\
11	0.0424911809194413\\
13	0.00952514878103727\\
15	0.00114446156645442\\
17	0.00020256588706502\\
19	4.06120538940513e-05\\
21	1.2398464136254e-05\\
23	3.08457835274112e-06\\
};
\addlegendentry{LBFGS};

\addplot [color=mycolor3,solid,line width=3.0pt]
  table[row sep=crcr]{%
0	7.99660646609611\\
1	6.82389963134017\\
2	5.31732130708488\\
3	3.64021545531853\\
4	2.06401922937819\\
5	0.852740557796309\\
7.5	0.470160304737874\\
9	0.342208297919527\\
10	0.292469093462699\\
11	0.224592034204264\\
12	0.131073337551101\\
13	0.0338481098287957\\
15.5	0.010412361217476\\
16.5	0.00501399142805781\\
18.5	0.00244638875737735\\
19.5	0.0011901694635128\\
21	0.000580023922708506\\
22.5	0.000163891365119184\\
23.5	0.000101639199542092\\
24.5	4.19071457002929e-05\\
26	1.45626244005825e-05\\
};
\addlegendentry{CG};

\end{axis}
\end{tikzpicture}
  }%
\hfill%
\subfigure{%
  \label{fig:year2}%
  \resizebox{.45\textwidth}{!}{
%
%
\definecolor{mycolor1}{rgb}{0.00000,0.00000,0.17241}%
\definecolor{mycolor2}{rgb}{1.00000,0.10345,0.72414}%
\definecolor{mycolor3}{rgb}{1.00000,0.82759,0.00000}%
\begin{tikzpicture}

\begin{axis}[%
scale =0.55,
width=6.028in,
height=4.754in,
at={(1.011in,0.642in)},
scale only axis,
xmin=0,
xmax=200,
tick align=outside,
xlabel={Iterations},
ymode=log,
ymin=1e-05,
ymax=10,
ylabel={Averaged Cost Difference},
axis background/.style={fill=white},
legend style={legend cell align=left,align=left,draw=white!15!black},
legend pos=south east
]
\addplot [color=blue,solid,line width=3.0pt]
  table[row sep=crcr]{%
0	8.10334394013391\\
1	2.91994144976438\\
2	0.657044989526028\\
3	0.237402516959719\\
};
\addlegendentry{SGD (it=5)};

\addplot [color=red,solid,line width=3.0pt]
  table[row sep=crcr]{%
0	8.10334394013391\\
1	4.03139310336157\\
2	2.39385216481914\\
3	1.44220474195274\\
4	0.895660413575399\\
5	0.57228691349404\\
6	0.35097273582106\\
7	0.215347937370652\\
8	0.132953861101683\\
9	0.0803881946581697\\
10	0.0444815165729437\\
11	0.0240764296526521\\
12	0.011656788900666\\
13	0.00552678824374198\\
14	0.00287657418775211\\
15	0.00181742204819102\\
16	0.0014232692311964\\
17	0.0012588124534858\\
18	0.0011818831192727\\
};
\addlegendentry{SGD (it=20)};

\addplot [color=green,solid,line width=3.0pt]
  table[row sep=crcr]{%
0	8.10334394013391\\
1	5.330421381958\\
2	4.50163944123021\\
3	3.55853054147697\\
4	2.79996430743903\\
5	2.34005840649014\\
6	1.85940531045887\\
7	1.48870110882358\\
8	1.26249173973051\\
9	1.04376250071078\\
10	0.850750512052386\\
11	0.713534480189622\\
12	0.569799463935141\\
13	0.469348671368991\\
14	0.395044682595454\\
15	0.325788255302072\\
16	0.267802166840667\\
17	0.221942716571476\\
18	0.18510282545887\\
19	0.151333611862114\\
20	0.123052096962354\\
21	0.100996989421134\\
22	0.0820672613402991\\
23	0.0673266086502053\\
24	0.0526087637118309\\
25	0.0418415836409167\\
26	0.0324676276107851\\
27	0.0243141765252233\\
28	0.0181499488325727\\
29	0.0134727550103548\\
30	0.00951819170926171\\
31	0.00689820066314439\\
32	0.00470235707621214\\
33	0.00318727282272846\\
34	0.00221739038369151\\
35	0.00147679223321973\\
36	0.000982668145539378\\
37	0.000645515849498679\\
38	0.000433919342810896\\
39	0.000290037981770297\\
40	0.000195688362964574\\
41	0.000130555759753292\\
42	8.70959962213647e-05\\
43	5.86030898830359e-05\\
44	3.86498549573844e-05\\
45	2.42752767860566e-05\\
46	1.35866138606389e-05\\
47	6.01555478851878e-06\\
48	0\\
};
\addlegendentry{SGD (it=50)};

\addplot [color=mycolor1,solid,line width=3.0pt]
  table[row sep=crcr]{%
0	8.10334387221825\\
1	2.83783526191445\\
2	1.86498117682471\\
3	1.34300399169919\\
4	1.02690456823616\\
5	0.821936934357254\\
6	0.680288387920029\\
7	0.576084291247554\\
8	0.495981274564102\\
9	0.432648254522405\\
10	0.381303670792761\\
11	0.338972497429879\\
12	0.304292837391699\\
13	0.275903935148904\\
14	0.252517409660641\\
15	0.233082957008904\\
16	0.216853054667965\\
17	0.203293481472912\\
18	0.191940446243521\\
19	0.18237914814766\\
20	0.174310196159297\\
21	0.167497480279678\\
22	0.161719990926471\\
23	0.156772786355219\\
24	0.152513746390824\\
25	0.148844861908017\\
26	0.145685483152022\\
27	0.142963375902426\\
28	0.140613822292707\\
29	0.138582079646866\\
30	0.13682247461594\\
31	0.135294924651106\\
32	0.133963426291402\\
33	0.132796498788466\\
34	0.131767616858838\\
35	0.130854943402781\\
36	0.130041274920956\\
37	0.12931332868267\\
38	0.128660504083868\\
39	0.128073834212785\\
40	0.127545378734737\\
41	0.127067993355226\\
42	0.126635286281783\\
43	0.126241625173463\\
44	0.125882128558956\\
45	0.125552602870499\\
46	0.125249439775118\\
47	0.124969517109818\\
48	0.124710122563329\\
49	0.124468895151665\\
50	0.124243776426496\\
51	0.124032968275344\\
52	0.123834896796211\\
53	0.123648180875989\\
54	0.123471603539713\\
55	0.12330408533812\\
56	0.123144660832722\\
57	0.122992459279516\\
58	0.122846689591483\\
59	0.122706628547718\\
60	0.122571611479529\\
61	0.122441025088932\\
62	0.122314301746329\\
63	0.122190914385101\\
64	0.122070371380772\\
65	0.121952211274646\\
66	0.121835997818565\\
67	0.121721316286454\\
68	0.121607772232494\\
69	0.121494993634613\\
70	0.121382636467473\\
71	0.121270391697266\\
72	0.121157988953982\\
73	0.121045191870834\\
74	0.120931785787306\\
75	0.120817567564799\\
76	0.120702348085345\\
77	0.120585963397303\\
78	0.120468274462326\\
79	0.120349152390148\\
80	0.120228476392967\\
81	0.120106155562539\\
82	0.119982156296274\\
83	0.119856515871916\\
84	0.119729334116812\\
85	0.119600749097813\\
86	0.119470909964974\\
87	0.119339953866358\\
88	0.119207989273619\\
89	0.119075091055237\\
90	0.118941306265697\\
91	0.118806657895149\\
92	0.118671142528854\\
93	0.118534740728407\\
94	0.118397460485966\\
95	0.11825940462235\\
96	0.118120824430662\\
97	0.117982122402893\\
98	0.117843798007243\\
99	0.117706372881102\\
100	0.117570340378236\\
101	0.117436149548098\\
102	0.117304205197371\\
103	0.117174866690647\\
104	0.117048440816717\\
105	0.116925173258139\\
106	0.11680524496218\\
107	0.116688775830198\\
108	0.116575834137691\\
109	0.116466448142617\\
110	0.116360616559959\\
111	0.116258315806036\\
112	0.116159503801228\\
113	0.116064121720854\\
114	0.1159720956543\\
115	0.115883339490907\\
116	0.115797759170391\\
117	0.115715257556872\\
118	0.115635738786779\\
119	0.115559111063305\\
120	0.115485287220153\\
121	0.115414183230115\\
122	0.115345715439659\\
123	0.1152797980195\\
124	0.115216341597517\\
125	0.115155253382184\\
126	0.115096438179975\\
127	0.115039799245807\\
128	0.1149852383965\\
129	0.114932655563408\\
130	0.114881948555436\\
131	0.114833013723057\\
132	0.11478574800077\\
133	0.114740052545741\\
134	0.114695837794471\\
135	0.114653028794379\\
136	0.114611568353837\\
137	0.114571416020055\\
138	0.114532545112567\\
139	0.114494943518622\\
140	0.114458621049522\\
141	0.114423616502833\\
142	0.114389989565424\\
143	0.114357793922338\\
144	0.114327053834131\\
145	0.114297761945451\\
146	0.114269889881399\\
147	0.114243398395956\\
148	0.114218243206679\\
149	0.114194377727145\\
150	0.114171754387797\\
151	0.114150325362615\\
152	0.114130042900371\\
153	0.114110859446512\\
154	0.114092727749949\\
155	0.114075601115033\\
156	0.114059433681163\\
157	0.1140441805628\\
158	0.114029797775785\\
159	0.114016242069525\\
160	0.11400347084151\\
161	0.113991442215493\\
162	0.113980115247614\\
163	0.113969450211499\\
164	0.113959408918774\\
165	0.113949955017958\\
166	0.113941054259371\\
167	0.113932674702525\\
168	0.113924786820021\\
169	0.113917363519036\\
170	0.11391038010089\\
171	0.113903814140599\\
172	0.113897645340018\\
173	0.113891855272939\\
174	0.113886427049067\\
175	0.113881344900278\\
176	0.113876593742106\\
177	0.113872158788638\\
178	0.113868025295488\\
179	0.113864178428685\\
180	0.113860603236333\\
181	0.113857284719572\\
182	0.113854207931297\\
183	0.113851358108533\\
184	0.113848720777682\\
185	0.113846281888044\\
186	0.113844027887978\\
187	0.113841945805675\\
188	0.113840023293889\\
189	0.113838248669396\\
190	0.113836610918199\\
191	0.113835099703493\\
192	0.113833705365622\\
193	0.113832418888421\\
194	0.113831231889776\\
195	0.113830136589883\\
196	0.113829125779532\\
};
\addlegendentry{EM};

\addplot [color=mycolor2,solid,line width=3.0pt]
  table[row sep=crcr]{%
0	8.10334394013371\\
1	7.25251073513803\\
2	5.65292630438709\\
3	3.42927652568486\\
5	2.43832440530544\\
7	1.80633915762054\\
8	1.27304917333106\\
10	0.976055631635433\\
11	0.920594273772565\\
12	0.823526460071704\\
13	0.642744017916698\\
14	0.458257081576505\\
15	0.422223703151147\\
16	0.371083120972941\\
17	0.291612845163954\\
18	0.207412258098984\\
20	0.179195853589718\\
21	0.153587028494741\\
22	0.14550595403793\\
23	0.134708260731742\\
24	0.124617763846388\\
26	0.119940305353907\\
28	0.117634049980353\\
30	0.116766374732521\\
31	0.116018871266945\\
32	0.115371127880252\\
33	0.114664084997614\\
34	0.114433384534522\\
35	0.114142914459777\\
36	0.114021188723981\\
37	0.113947260579593\\
38	0.113921539752937\\
39	0.113889370383376\\
40	0.11386368633616\\
41	0.113839868290455\\
43	0.113828065082117\\
45	0.113822722977176\\
};
\addlegendentry{LBFGS};

\addplot [color=mycolor3,solid,line width=3.0pt]
  table[row sep=crcr]{%
0	8.10334394013371\\
1	7.25251073513803\\
2	6.13558899237086\\
3	4.81903637809238\\
4	3.49229038676378\\
5	2.36622404571091\\
7	1.80884976242579\\
8.5	1.32553342261403\\
9.5	1.13722623307345\\
10.5	0.982710353836723\\
11.5	0.819693292513591\\
14	0.711417512362161\\
15	0.594845343099834\\
16	0.432374779016712\\
18.5	0.35853080610066\\
20	0.277946474106088\\
21	0.247235141528847\\
22	0.219499863280106\\
23.5	0.19937754605192\\
24.5	0.170518176208034\\
26	0.154015097912662\\
27	0.142799712703138\\
28	0.132040754078204\\
30	0.126635284576622\\
31.5	0.12305680655237\\
32.5	0.120846350415462\\
34	0.119679270496448\\
35	0.118241557446865\\
36.5	0.117489692331716\\
37.5	0.116797908882376\\
38.5	0.115813168626033\\
40.5	0.115034546894748\\
42	0.114543085446499\\
43	0.114264496977512\\
44.5	0.11411543383123\\
45.5	0.114004778435444\\
47	0.113930846610643\\
48	0.113889627848508\\
49.5	0.113865722104023\\
50.5	0.113845672257568\\
52	0.113836047086664\\
53	0.11383043739329\\
54.5	0.113826133354749\\
55.5	0.113823198744967\\
};
\addlegendentry{CG};

\end{axis}
\end{tikzpicture}
  }
\caption{\small
\label{fig:iter2}\small Comparison of optimization methods on year predict data ($d=90$, $n= 515345$). Y-axis:
best cost minus current cost values. X-axis: number of function and gradient evaluations. Right: 3 number of components. Left: 7 number of components.}
\end{figure}

In the next experiments, we compare the performance of manifold optimization methods on the reformulated problem and EM on some real datasets. One of the datasets is a dataset of natural images~\cite{hosseini2015matrix}. The other three datasets called `corel', `yearpredict' and `wine' data are taken from UCI machine learning dataset repository\footnote{Available via https://archive.ics.uci.edu/ml/datasets}. The results are shown in Figure~\ref{fig:iter1}-\ref{fig:iter4}. The dimensionality $d$ of data and number of data-points $n$ are given in the figure legends.

It can be seen than deterministic manifold optimization methods achieve and outperforms the EM algorithm. The manifold SGD shows remarkable performance. This method leads to fast increase of the objective function in early iterations. 

\begin{figure}[htbp]
\centering
\subfigure{%
  \label{fig:magic3}%
  \resizebox{.45\textwidth}{!}{
%
%
\definecolor{mycolor1}{rgb}{0.00000,0.00000,0.17241}%
\definecolor{mycolor2}{rgb}{1.00000,0.10345,0.72414}%
\definecolor{mycolor3}{rgb}{1.00000,0.82759,0.00000}%
\begin{tikzpicture}

\begin{axis}[%
scale = 0.55,
width=6.028in,
height=4.754in,
at={(1.011in,0.642in)},
scale only axis,
xmin=0,
xmax=120,
tick align=outside,
xlabel={Iterations},
ymode=log,
ymin=1e-05,
ymax=100,
ylabel={Averaged Cost Difference},
axis background/.style={fill=white},
legend style={legend cell align=left,align=left,draw=white!15!black}
]
\addplot [color=blue,solid,line width=3.0pt]
  table[row sep=crcr]{%
0	17.3888028265619\\
1	1.44085611502505\\
2	0.377997523072636\\
3	0.0625432709427693\\
4	0.00677466491306511\\
};
\addlegendentry{SGD (it=5)};

\addplot [color=red,solid,line width=3.0pt]
  table[row sep=crcr]{%
0	17.3888028265619\\
1	6.09967953824393\\
2	3.13537959954937\\
3	2.53026068313339\\
4	1.45060840016362\\
5	0.814299396460924\\
6	0.794877249642169\\
7	0.494519618344618\\
8	0.363431852393036\\
9	0.194043322031733\\
10	0.150236846937343\\
11	0.0682752276044294\\
12	0.043429685257018\\
13	0.0184042768824737\\
14	0.00788729782373299\\
15	0.00358697289303134\\
16	0.00198531376384548\\
17	0.000873792921059824\\
18	0.000477471021011588\\
19	0.000239658237984663\\
};
\addlegendentry{SGD (it=20)};

\addplot [color=green,solid,line width=3.0pt]
  table[row sep=crcr]{%
0	17.3888028265619\\
1	5.270525263687\\
2	6.75251507315913\\
3	3.88309356881229\\
4	3.15237462685813\\
5	3.34867009716051\\
6	2.6189533317209\\
7	2.25509477941612\\
8	2.04426526935734\\
9	1.7048608695777\\
10	1.8296455757707\\
11	1.21326134054808\\
12	1.01954023068535\\
13	0.958674692603101\\
14	0.808903288754087\\
15	0.762224712301301\\
16	0.615207911087317\\
17	0.452040028802383\\
18	0.424609426187994\\
19	0.354423899472213\\
20	0.335166987152664\\
21	0.306236488906026\\
22	0.214049109180226\\
23	0.194975836889424\\
24	0.16469291316688\\
25	0.11956874546049\\
26	0.108602081257995\\
27	0.0846684832777482\\
28	0.0603535157455086\\
29	0.0514402970172165\\
30	0.0401199890404129\\
31	0.0294499669744717\\
32	0.023995648641459\\
33	0.0157821436858674\\
34	0.0108072718125127\\
35	0.00793209488492153\\
36	0.00609511586725198\\
37	0.00404067174768219\\
38	0.00252798164548729\\
39	0.00185297334713308\\
40	0.00138959440731679\\
41	0.000928376987582524\\
42	0.000633074236875864\\
43	0.000412279396876158\\
44	0.000256978552637399\\
45	0.000154816621667386\\
46	9.83617992327268e-05\\
47	6.60102987843914e-05\\
48	2.49887666665671e-05\\
49	0\\
};
\addlegendentry{SGD (it=50)};

\addplot [color=mycolor1,solid,line width=3.0pt]
  table[row sep=crcr]{%
0	17.3888026061033\\
1	8.3559149076287\\
2	5.48375663358349\\
3	3.62919266694234\\
4	2.52058931791813\\
5	1.86430513697013\\
6	1.44362982486132\\
7	1.17927626174011\\
8	1.00199507216356\\
9	0.89642786798647\\
10	0.81287614316279\\
11	0.749044745742731\\
12	0.685458761455526\\
13	0.621196027334427\\
14	0.56646263002318\\
15	0.522881967265217\\
16	0.461724213655734\\
17	0.382579567855359\\
18	0.343475863542118\\
19	0.321528615653158\\
20	0.30489314745876\\
21	0.290683352820144\\
22	0.277089828728124\\
23	0.264849359383232\\
24	0.253008231686186\\
25	0.242169916786884\\
26	0.23185139891927\\
27	0.22064013411869\\
28	0.2085959027688\\
29	0.198184039732007\\
30	0.187699226167486\\
31	0.175895329186856\\
32	0.165215953080161\\
33	0.157517801736036\\
34	0.151324502947009\\
35	0.145544456526415\\
36	0.140619791746493\\
37	0.136237229969004\\
38	0.132693438183295\\
39	0.129680727761798\\
40	0.126828931114229\\
41	0.123888360990867\\
42	0.121162615259456\\
43	0.119014502209298\\
44	0.117214151914673\\
45	0.115980672308683\\
46	0.115043089179551\\
47	0.114297255243144\\
48	0.113547295581074\\
49	0.112726518125061\\
50	0.112010593759186\\
51	0.111485324972344\\
52	0.111135705752107\\
53	0.110947822841792\\
54	0.110833439103867\\
55	0.110753468293685\\
56	0.110691972079662\\
57	0.110640556682704\\
58	0.110597556162515\\
59	0.11056097285581\\
60	0.110527722020566\\
61	0.110495857719727\\
62	0.110464128171936\\
63	0.110431793776351\\
64	0.11039892531655\\
65	0.110366207216192\\
66	0.110334562965203\\
67	0.110304877362169\\
68	0.110277283301784\\
69	0.11025158292873\\
70	0.110227561597508\\
71	0.110204933010219\\
72	0.110183412162632\\
73	0.110162930822984\\
74	0.110143663123693\\
75	0.110125777524038\\
76	0.110109288557458\\
77	0.110094091954986\\
78	0.110079971363207\\
79	0.110066535666872\\
80	0.110053214360243\\
81	0.110039494655222\\
82	0.11002521387538\\
83	0.110009999140289\\
84	0.109992608070318\\
85	0.109973718858512\\
86	0.109957841016701\\
87	0.109945870203127\\
88	0.109936415263081\\
89	0.109928680460359\\
90	0.10992221592878\\
91	0.109916660426251\\
92	0.109911737782017\\
93	0.109907286212597\\
94	0.109903214432252\\
95	0.10989947900469\\
96	0.109896112613728\\
97	0.109893161681717\\
98	0.109890524239535\\
99	0.109887884469144\\
100	0.109884751802462\\
101	0.109880579007424\\
102	0.10987527167336\\
103	0.109869878148176\\
104	0.109865738455322\\
105	0.10986305410813\\
106	0.109861365805958\\
107	0.109860279536912\\
};
\addlegendentry{EM};

\addplot [color=mycolor2,solid,line width=3.0pt]
  table[row sep=crcr]{%
0	17.3888028265619\\
1	16.1494556975174\\
2	13.6020084203079\\
3	8.83288875914324\\
5	4.87808911686159\\
7	3.12170780930039\\
9	2.03515617463904\\
11	1.49677664420756\\
13	1.266356041244\\
15	1.10459953650474\\
16	0.919717003241444\\
17	0.78252774767487\\
18	0.619298880999029\\
19	0.48043356548587\\
20	0.348983796445506\\
21	0.297001062185684\\
22	0.226363666667667\\
23	0.174663426132655\\
24	0.144262623295624\\
26	0.12848414953407\\
28	0.120767581047207\\
29	0.119516636990078\\
30	0.117164208424875\\
31	0.113897234030918\\
33	0.112121612591115\\
34	0.111289323049467\\
35	0.110811818512803\\
36	0.110655128807762\\
37	0.110408401397754\\
38	0.110137094605378\\
40	0.109988361381888\\
41	0.10995617120523\\
42	0.109919459812882\\
43	0.10991312870915\\
44	0.109901004077427\\
45	0.109881924856704\\
47	0.109871904564404\\
48	0.109865313198\\
49	0.109862544199825\\
51	0.109861010034766\\
};
\addlegendentry{LBFGS};

\addplot [color=mycolor3,solid,line width=3.0pt]
  table[row sep=crcr]{%
0	17.3888028265619\\
1	16.1494556975174\\
2	14.212859590065\\
3	11.391645498816\\
4	7.65736335734263\\
5	4.33555073704268\\
7	2.92184928250871\\
8.5	2.06197598096909\\
9.5	1.65832030946167\\
10.5	1.30552156215966\\
12	1.14253335428832\\
13	0.954362342302955\\
15	0.802718149392022\\
16.5	0.612153709238587\\
17.5	0.503196689805097\\
18.5	0.405129325414677\\
21	0.346778672625537\\
22	0.272586022257596\\
23.5	0.239731633150587\\
24.5	0.220282988825462\\
25.5	0.193500890880282\\
27	0.157770703620189\\
28	0.136566563032718\\
30	0.127705031639945\\
31.5	0.120683286982429\\
32.5	0.117797367550384\\
33.5	0.114459832981787\\
35	0.112479094095282\\
37	0.111684583019207\\
38	0.111175438392211\\
39.5	0.110805486553225\\
40.5	0.110588105926656\\
42	0.11046374357963\\
43	0.110371898279197\\
44	0.110261724941933\\
46	0.110110941963901\\
47	0.109982943858868\\
49	0.109887700747394\\
50.5	0.109814875182012\\
51.5	0.109795355551511\\
52.5	0.109774093271535\\
54	0.109764184438663\\
55.5	0.109752456848729\\
56.5	0.109750176250884\\
57.5	0.109745511066999\\
59.5	0.109725150596034\\
62	0.109718047772339\\
63	0.109699888213033\\
64	0.109686495828727\\
65	0.109666420402851\\
66	0.109647492009624\\
68.5	0.109639601429959\\
69.5	0.109634440964233\\
71.5	0.109632887847083\\
};
\addlegendentry{CG};

\end{axis}
\end{tikzpicture}
  }%
\hfill%
\subfigure{%
  \label{fig:year3}%
  \resizebox{.45\textwidth}{!}{\input{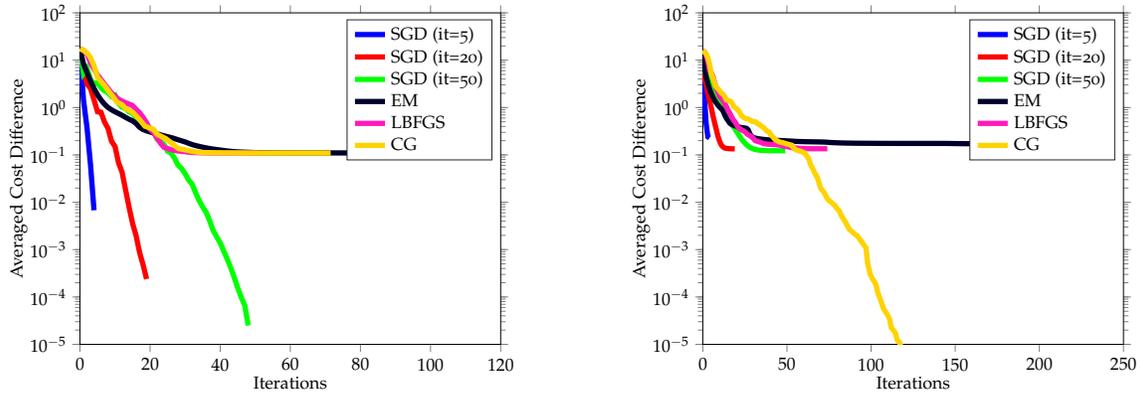}}
  }
\caption{\small
\label{fig:iter3}\small Comparison of optimization methods on corel data ($d=57$, $n=68040$). Y-axis:
current objective values minus best objective. X-axis: number of function and gradient evaluations. Right: 3 number of components. Left: 7 number of components.}
\end{figure}

\begin{figure}[htbp]
\centering
\subfigure{%
  \label{fig:magic4}%
  \resizebox{.45\textwidth}{!}{
%
%
\definecolor{mycolor1}{rgb}{0.00000,0.00000,0.17241}%
\definecolor{mycolor2}{rgb}{1.00000,0.10345,0.72414}%
\definecolor{mycolor3}{rgb}{1.00000,0.82759,0.00000}%
\begin{tikzpicture}

\begin{axis}[%
scale =0.55,
width=6.028in,
height=4.754in,
at={(1.011in,0.642in)},
scale only axis,
xmin=0,
xmax=100,
tick align=outside,
xlabel={Iterations},
ymode=log,
ymin=1e-05,
ymax=10,
ylabel={Averaged Cost Difference},
axis background/.style={fill=white},
legend style={at={(0.972,0.628)},legend cell align=left,align=left,draw=white!15!black}
]
\addplot [color=blue,solid,line width=3.0pt]
  table[row sep=crcr]{%
0	2.00669649394824\\
1	0.820696836227334\\
2	0.437253407359825\\
3	0.379731626890883\\
4	0.377846353575461\\
};
\addlegendentry{SGD (it=5)};

\addplot [color=red,solid,line width=3.0pt]
  table[row sep=crcr]{%
0	2.00669649394824\\
1	0.295708466204936\\
2	0.302265547936472\\
3	0.210562249910631\\
4	0.112067054277569\\
5	0.0842704721121614\\
6	0.0491348865880461\\
7	0.0226307480131815\\
8	0.015455436452406\\
9	0.00922690023059891\\
10	0.00401520911781761\\
11	0.00209997927367533\\
12	0.000566181836049839\\
13	0.000272698647131975\\
14	0.000113868486722346\\
15	6.1809562008186e-05\\
16	4.38729892011658e-05\\
17	3.7808082279156e-05\\
18	3.34259978074236e-05\\
19	3.11380900677172e-05\\
};
\addlegendentry{SGD (it=20)};

\addplot [color=green,solid,line width=3.0pt]
  table[row sep=crcr]{%
0	2.00669649394824\\
1	0.246412494386738\\
2	0.221127942140537\\
3	0.15750652309676\\
4	0.135441819128121\\
5	0.171607223968437\\
6	0.109129132282117\\
7	0.0833887968611493\\
8	0.0764051752087016\\
9	0.0568038039311927\\
10	0.0582495552200273\\
11	0.0376466082496156\\
12	0.0305373872622976\\
13	0.0306560113517409\\
14	0.0251461529917325\\
15	0.019616024014014\\
16	0.0199921742308757\\
17	0.0129966460166164\\
18	0.0105182192904176\\
19	0.00869856249044743\\
20	0.00622523439397371\\
21	0.00504846794652281\\
22	0.00367202533756039\\
23	0.00306211893034281\\
24	0.00244003077230293\\
25	0.00201145386806667\\
26	0.0016576206404304\\
27	0.00165429399627248\\
28	0.00152112160879669\\
29	0.00146558163509392\\
30	0.00135262642469902\\
31	0.00134108134779609\\
32	0.00130994102445081\\
33	0.00128630172796873\\
34	0.0012736812761327\\
35	0.00126771944456916\\
36	0.00126206311808152\\
37	0.00125794493158793\\
38	0.00125581025064925\\
39	0.00125396777934883\\
40	0.00125298362591364\\
};
\addlegendentry{SGD (it=50)};

\addplot [color=mycolor1,solid,line width=3.0pt]
  table[row sep=crcr]{%
0	2.00669418519404\\
1	0.959382390721523\\
2	0.752466897047209\\
3	0.654542889152454\\
4	0.589563883591373\\
5	0.537815741343934\\
6	0.509564246635912\\
7	0.492995975870769\\
8	0.48008275028456\\
9	0.461780578899031\\
10	0.450477802735478\\
11	0.444846747699845\\
12	0.440798597572025\\
13	0.437728228438425\\
14	0.43553298638902\\
15	0.433613954275626\\
16	0.431999141296714\\
17	0.430590230708003\\
18	0.429357634185949\\
19	0.428267599703471\\
20	0.427283866484979\\
21	0.42637895406832\\
22	0.425531114075595\\
23	0.424725322772072\\
24	0.423958654857868\\
25	0.423239380710672\\
26	0.422573500921629\\
27	0.42195084188345\\
28	0.421352654393896\\
29	0.420790333810002\\
30	0.420274985510172\\
31	0.419788001479791\\
32	0.419312160878418\\
33	0.418832522400085\\
34	0.418331573793018\\
35	0.417799452569692\\
36	0.417282273163328\\
37	0.416829571871758\\
38	0.416437505140193\\
39	0.41609798773327\\
40	0.4158054211544\\
41	0.415554239319607\\
42	0.415338696640638\\
43	0.415153259116485\\
44	0.414992966009537\\
45	0.414853557075185\\
46	0.414731390334682\\
47	0.414623295846499\\
48	0.414526473942358\\
49	0.414438446539297\\
50	0.414357024945521\\
51	0.414280269844051\\
52	0.414206441492388\\
53	0.414133949140242\\
54	0.414061311294948\\
55	0.413987139368864\\
56	0.41391015857409\\
57	0.413829279069112\\
58	0.413743721284139\\
59	0.413653177162885\\
60	0.413557960533997\\
61	0.413459090152206\\
62	0.413358278957192\\
63	0.413257841268841\\
64	0.413160507368357\\
65	0.413069076627995\\
66	0.412985901799638\\
67	0.41291239846091\\
68	0.412848841815947\\
69	0.412794504071271\\
70	0.4127479599402\\
71	0.412707369282491\\
72	0.412670654280228\\
73	0.412635600870353\\
74	0.41260003350859\\
75	0.412562336939187\\
76	0.412522385792699\\
77	0.412481856126939\\
78	0.412442514399659\\
79	0.412403756663201\\
80	0.412360222872663\\
81	0.412289623609545\\
82	0.412124202465786\\
83	0.411957628977619\\
84	0.411904114573223\\
85	0.411878170002604\\
86	0.411862080835409\\
87	0.41185137145125\\
88	0.411844067029093\\
89	0.411839033082842\\
90	0.411835546874951\\
91	0.41183312766975\\
92	0.411831448216761\\
93	0.411830282805382\\
};
\addlegendentry{EM};

\addplot [color=mycolor2,solid,line width=3.0pt]
  table[row sep=crcr]{%
0	2.00669649394824\\
1	1.61543703072116\\
2	0.933983109316276\\
4	0.635604630935704\\
6	0.556358262493474\\
7	0.508507974003031\\
8	0.44627272556699\\
9	0.427199092359692\\
10	0.399150573784033\\
11	0.345229043757005\\
12	0.274008434206272\\
14	0.249349189723165\\
15	0.230593063386524\\
16	0.221998071671166\\
17	0.215737668220308\\
19	0.21260156232884\\
20	0.21063313414446\\
21	0.20891359543875\\
23	0.208028429970509\\
25	0.207580660760898\\
27	0.207348240873208\\
28	0.207268308158759\\
29	0.207174808261996\\
30	0.207035862438086\\
31	0.206739586466243\\
34	0.164917429854481\\
38	0.164619477140919\\
41	0.123002308796939\\
42	0.0553162069831363\\
44	0.0359318405514477\\
46	0.0250787969758259\\
47	0.0146039029125173\\
49	0.0104985086088596\\
50	0.00860990840497111\\
51	0.00688240545931773\\
52	0.00525452667053727\\
53	0.00433835200186694\\
54	0.0034507446651264\\
55	0.00235852753670374\\
56	0.00175679148502716\\
57	0.00154671903284598\\
58	0.00117118666255278\\
59	0.00104798481785506\\
60	0.000880734426214502\\
61	0.000641497393140789\\
62	0.00055627020372695\\
63	0.000390142634092427\\
64	0.000243762824037486\\
65	0.000173067241513003\\
66	8.86949460516817e-05\\
68	4.91023498541132e-05\\
69	3.58324953526612e-05\\
70	1.93343235697085e-05\\
71	1.22085183567044e-05\\
72	7.07796985954801e-06\\
73	3.67228952935506e-06\\
75	1.75154904358266e-06\\
76	3.7078192161033e-07\\
};
\addlegendentry{LBFGS};

\addplot [color=mycolor3,solid,line width=3.0pt]
  table[row sep=crcr]{%
0	2.00669649394824\\
1	1.61543703072116\\
2	1.12635079888182\\
3	0.684282052668788\\
5	0.579575742004967\\
6.5	0.534151540783516\\
7.5	0.50991399573691\\
8.5	0.473165200584535\\
9.5	0.410047413944654\\
11.5	0.338062840919282\\
13	0.294148163984668\\
14	0.260270257313393\\
16	0.2440903433258\\
17	0.230031237154915\\
18.5	0.219442758773595\\
19.5	0.211014664297353\\
21.5	0.209212506476474\\
22.5	0.208587647670667\\
24	0.208231942227053\\
25	0.207756689816412\\
26	0.207613902845711\\
28	0.207322434456963\\
30	0.207274504874336\\
31	0.207232139841818\\
32.5	0.207193731495018\\
33.5	0.207149265636251\\
35	0.207131213592307\\
36	0.207114597824719\\
37.5	0.207102407021187\\
38.5	0.207093149164791\\
39.5	0.207082760604216\\
40.5	0.206886747091499\\
41.5	0.195389046655449\\
44.5	0.130822475638414\\
46.5	0.0573419995578131\\
48	0.0373303773925628\\
49	0.0242451223632809\\
50.5	0.0197437066410475\\
51.5	0.0154414676082961\\
52.5	0.0110088896312321\\
54	0.00319363663953709\\
55.5	0.00203846498015148\\
56.5	0.00148886826179595\\
57.5	0.000949834421966944\\
59	0.000536864833201367\\
60.5	0.000238670853088152\\
61.5	0.000171831618832829\\
62.5	8.91725412066613e-05\\
64	4.24310339126599e-05\\
65.5	1.48048873631623e-05\\
66.5	1.03614144730813e-05\\
67.5	5.84608778098072e-06\\
69	1.84427489191563e-06\\
70.5	0\\
};
\addlegendentry{CG};

\end{axis}
\end{tikzpicture}
  }%
\hfill%
\subfigure{%
  \label{fig:year4}%
  \resizebox{.45\textwidth}{!}{
%
%
\definecolor{mycolor1}{rgb}{0.00000,0.00000,0.17241}%
\definecolor{mycolor2}{rgb}{1.00000,0.10345,0.72414}%
\definecolor{mycolor3}{rgb}{1.00000,0.82759,0.00000}%
\begin{tikzpicture}

\begin{axis}[%
scale =0.55,
width=6.028in,
height=4.754in,
at={(1.011in,0.642in)},
scale only axis,
xmin=0,
xmax=120,
tick align=outside,
xlabel={Iterations},
ymode=log,
ymin=1e-05,
ymax=10,
ylabel={Averaged Cost Difference},
axis background/.style={fill=white},
legend style={legend cell align=left,align=left,draw=white!15!black},
legend pos=south east
]
\addplot [color=blue,solid,line width=3.0pt]
  table[row sep=crcr]{%
0	2.64817690864674\\
1	0.310326266404217\\
2	0.104066445257155\\
3	0.0673479079224595\\
4	0.064692860296109\\
};
\addlegendentry{SGD (it=5)};

\addplot [color=red,solid,line width=3.0pt]
  table[row sep=crcr]{%
0	2.64817690864674\\
1	0.487536180209704\\
2	0.332634249856547\\
3	0.172316213376184\\
4	0.0909098106142192\\
5	0.0480784297221162\\
6	0.03354061866673\\
7	0.0179043623634632\\
8	0.0116289688597857\\
9	0.00860395941081982\\
10	0.00686905674017257\\
11	0.00624834777159933\\
12	0.00594249575431882\\
13	0.00579087115727028\\
14	0.00570368394867216\\
15	0.00564273061388221\\
16	0.0056000240573959\\
17	0.00557364258415438\\
18	0.00555283204758394\\
19	0.00553921944525415\\
};
\addlegendentry{SGD (it=20)};

\addplot [color=green,solid,line width=3.0pt]
  table[row sep=crcr]{%
0	2.64817690864674\\
1	0.441641871912891\\
2	0.364483884538868\\
3	0.247688495510263\\
4	0.235852167958413\\
5	0.190174289647566\\
6	0.176505185153844\\
7	0.151274813571641\\
8	0.123460058657933\\
9	0.0743848772880942\\
10	0.0626250184511761\\
11	0.0411286179569053\\
12	0.0428106860631978\\
13	0.0344416999677808\\
14	0.0265693053537213\\
15	0.0187012491595382\\
16	0.013475594028606\\
17	0.01202017671314\\
18	0.00955143851553641\\
19	0.00670415670603286\\
20	0.00527904685812142\\
21	0.00331535489016277\\
22	0.00256985303048607\\
23	0.00205495726311189\\
24	0.00136368641991691\\
25	0.000757244481244479\\
26	0.000647516584016072\\
27	0.000378975481512001\\
28	0.000291007031226531\\
29	0.000230225985636379\\
30	0.000151317701902354\\
31	0.000101703035129574\\
32	7.51456275576157e-05\\
33	5.74442655409957e-05\\
34	4.60651473936924e-05\\
35	3.62097060417899e-05\\
36	3.07937279087067e-05\\
37	2.33584296847056e-05\\
38	1.79322022564321e-05\\
39	1.4333421551882e-05\\
40	1.10410357891944e-05\\
41	8.7474254126807e-06\\
42	6.44762868851068e-06\\
43	4.7369737692371e-06\\
44	3.04018227481606e-06\\
45	2.02022430806359e-06\\
46	8.68652156871264e-07\\
47	0\\
};
\addlegendentry{SGD (it=50)};

\addplot [color=mycolor1,solid,line width=3.0pt]
  table[row sep=crcr]{%
0	2.64817152155779\\
1	1.28030886467139\\
2	0.90784507342694\\
3	0.666015889807503\\
4	0.535113811754011\\
5	0.449913144542674\\
6	0.39118717370149\\
7	0.347572662157617\\
8	0.314999732799629\\
9	0.288460169721983\\
10	0.265972042806076\\
11	0.249662606361756\\
12	0.235745769612442\\
13	0.227453498484093\\
14	0.221686320704929\\
15	0.216628008779775\\
16	0.21196078718381\\
17	0.208097145525092\\
18	0.205089745070594\\
19	0.202519965409536\\
20	0.200272272593271\\
21	0.198077904363339\\
22	0.195936408681319\\
23	0.194022511741536\\
24	0.192430480857629\\
25	0.191130765810231\\
26	0.190034817107906\\
27	0.189058343314368\\
28	0.188149814714589\\
29	0.187286294653373\\
30	0.186472117293908\\
31	0.185742751364905\\
32	0.185141139232623\\
33	0.184680679516898\\
34	0.184340211006653\\
35	0.184081069952243\\
36	0.183864482911587\\
37	0.183660712785095\\
38	0.183448346731876\\
39	0.183209943869467\\
40	0.182928882070757\\
41	0.182588492369965\\
42	0.182172306448394\\
43	0.181662316267692\\
44	0.181032492762694\\
45	0.180239583109544\\
46	0.179222599174328\\
47	0.177935800054381\\
48	0.176407184029104\\
49	0.174701210464524\\
50	0.17281563406786\\
51	0.170786131668403\\
52	0.168727291981542\\
53	0.166605179626648\\
54	0.164317736424356\\
55	0.161568947066112\\
56	0.157844225387571\\
57	0.152670838828482\\
58	0.146390253070912\\
59	0.139702152894248\\
60	0.132670285215998\\
61	0.12557161871228\\
62	0.118313586225701\\
63	0.10962087226309\\
64	0.0982885041595876\\
65	0.0845842412478766\\
66	0.0708602868503283\\
67	0.0601240777670558\\
68	0.052571577276088\\
69	0.047181139573045\\
70	0.0429812640547835\\
71	0.04063119351429\\
72	0.0395338319924778\\
73	0.0388806078690329\\
74	0.0384541083239918\\
75	0.0381666765169926\\
76	0.0379693852164948\\
77	0.0378317934373347\\
78	0.0377340591988646\\
79	0.0376628862069419\\
80	0.0376091926394158\\
81	0.0375666728550026\\
82	0.0375308456730896\\
83	0.0374983400196223\\
84	0.0374662263538794\\
85	0.0374313244054552\\
86	0.0373900566522627\\
87	0.0373411742476244\\
88	0.0372910722774331\\
89	0.0372454582958546\\
90	0.0371920392259291\\
91	0.0371310938631484\\
92	0.0370995577538409\\
93	0.0370890174614695\\
94	0.0370841661501937\\
95	0.0370813362608318\\
96	0.037079516776249\\
97	0.0370782898613395\\
};
\addlegendentry{EM};

\addplot [color=mycolor2,solid,line width=3.0pt]
  table[row sep=crcr]{%
0	2.64817690864674\\
1	2.33386572226284\\
2	1.70008598713725\\
3	0.960610685214921\\
5	0.646233557247752\\
7	0.509568450681735\\
8	0.465364173870377\\
9	0.408054481489201\\
10	0.323418246303089\\
12	0.283880419244706\\
14	0.257190882308667\\
15	0.235463554173266\\
16	0.227529166916913\\
17	0.214952251382508\\
18	0.201360974990301\\
19	0.193987454768066\\
20	0.189337409726376\\
21	0.184733434199259\\
22	0.180625285627013\\
23	0.176948736769222\\
24	0.174582602119947\\
25	0.1721027740565\\
26	0.170150725691446\\
27	0.167431824926854\\
28	0.163173709521942\\
30	0.161279922932258\\
31	0.159744176805512\\
33	0.158858389072634\\
34	0.158555632573046\\
35	0.15820210725795\\
36	0.158156062497204\\
37	0.158066450858899\\
38	0.157908547120342\\
39	0.15772718209762\\
40	0.157620300127616\\
41	0.157586052166945\\
42	0.15754396431892\\
43	0.15751499846172\\
44	0.157495918388463\\
45	0.157482497647635\\
46	0.157476553341227\\
47	0.157472870480812\\
48	0.15746889872626\\
49	0.157465474388572\\
51	0.157463250178269\\
};
\addlegendentry{LBFGS};

\addplot [color=mycolor3,solid,line width=3.0pt]
  table[row sep=crcr]{%
0	2.64817690864674\\
1	2.33386572226284\\
2	1.83586033311682\\
3	1.14276999024222\\
4.5	0.822493587791725\\
5.5	0.619376292586459\\
6.5	0.489941750538123\\
9	0.439095684703062\\
10	0.403741979523174\\
11	0.356703482927521\\
13	0.319991130200628\\
14.5	0.293002760238904\\
15.5	0.278572483295926\\
16.5	0.252930674222831\\
18.5	0.238305324481125\\
19.5	0.221140708270633\\
21.5	0.212452825377849\\
22.5	0.203639932094823\\
24.5	0.20010424734591\\
25.5	0.196686680571532\\
27.5	0.194920523464931\\
28.5	0.192870151330619\\
30	0.191139634785098\\
31.5	0.190278733567885\\
32.5	0.189783368687533\\
33.5	0.18933053054303\\
35	0.188911306368895\\
36	0.188373362131295\\
38	0.188175772668875\\
39	0.187888013837713\\
40	0.187306153901022\\
42	0.186497035069889\\
44.5	0.18548990330022\\
46	0.183365531311683\\
47	0.179312068654624\\
48	0.170516451891294\\
49	0.155430176135879\\
52.5	0.146293311450843\\
53.5	0.140514783599481\\
54.5	0.132988598404824\\
55.5	0.122239331523398\\
56.5	0.0834919290540119\\
59	0.071391653582437\\
60	0.0477087863863035\\
62	0.0445828586240824\\
63	0.0431623492371394\\
64	0.0419716708752362\\
66	0.0410574386064266\\
67.5	0.0403139739478562\\
68.5	0.0398734335955773\\
69.5	0.039472318037213\\
70.5	0.038928707696799\\
73.5	0.0385715386312149\\
75	0.0384055592926065\\
76	0.038336275988762\\
77	0.0382279277188313\\
78	0.0380666329301684\\
79.5	0.0379425071099653\\
80.5	0.0375603687374559\\
82	0.0374356022797504\\
83	0.0372677843932232\\
85.5	0.0372330029387582\\
86.5	0.0371976938661729\\
87.5	0.0371669710267089\\
89.5	0.0371333054483491\\
91.5	0.0371282691594028\\
92.5	0.0371249701688683\\
94	0.0371221777393684\\
95	0.0371192822197368\\
96	0.037116485803665\\
98	0.0371149963600792\\
99	0.0371124270677929\\
101	0.037111366048461\\
};
\addlegendentry{CG};

\end{axis}
\end{tikzpicture}
  }
\caption{\small
\label{fig:iter4}\small Comparison of optimization methods on wine data ($d=11$, $n= 6497$). Y-axis:
current objective values minus best objective. X-axis: number of function and gradient evaluations. Right: 3 number of components. Left: 7 number of components.}
\end{figure}
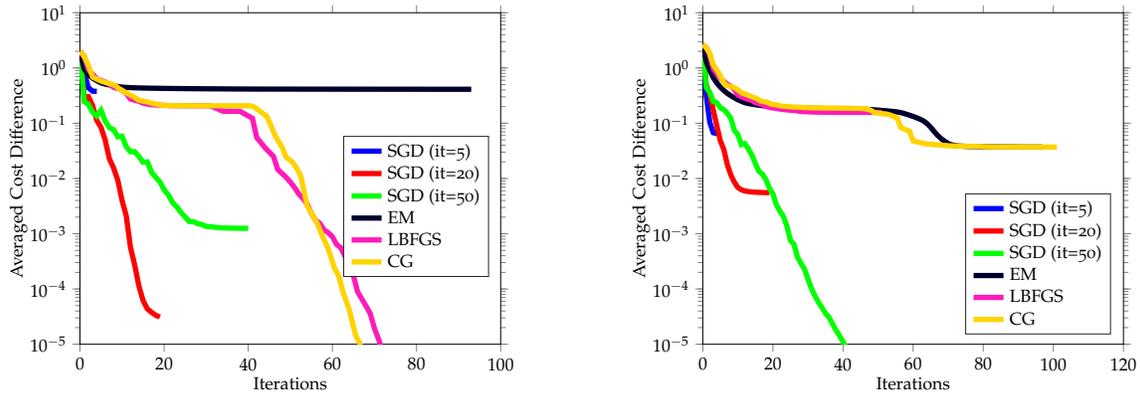

\section{Conclusions and future work}
In this paper, we proposed a reformulation for the \gmm problem that can make Riemannian manifold optimization a powerful alternative to the EM algorithm for fitting Gaussian mixture models.
The deterministic manifold optimization methods can either match or outperform EM algorithm.
Furthermore, we  developed a global convergence theory for SGD on manifolds. We applied this theory to the \gmm{} modeling. Experimentally Riemannian SGD for \gmm shows remarkable convergence behavior, making it a potential candidate for large scale mixture modeling.

There are several venues for future works, including 
extension of Riemannian optimization to estimation in hidden Markov models, an exploration of manifold optimization for non-Gaussian mixture models, and a study of richer priors for \gmm{}s beyond the usual conjugate priors.

\bibliographystyle{plainnat}
\setlength{\bibsep}{3pt}
\bibliography{riemmix}   

\end{document}